\documentclass[10pt]{article} 
\usepackage[T1]{fontenc}
\usepackage{newtxtext}
\usepackage[left=1.25in, top=1in, bottom=1in, right=1.25in]{geometry}


\usepackage{amsmath, amsfonts, bm}

















\def\1{\bm{1}}








\def\vg{{\bm{g}}}

\def\vm{{\bm{m}}}

\def\vu{{\bm{u}}}
\def\vv{{\bm{v}}}

\def\vx{{\bm{x}}}
\def\vy{{\bm{y}}}

\def\vDelta{{\bm{\Delta}}}


\def\mA{{\bm{A}}}

\def\mH{{\bm{H}}}












\newcommand{\R}{\mathbb{R}}



\DeclareMathOperator{\Tr}{Tr}



\newcommand{\diag}{{\rm diag}}



\newcommand{\norm}[1]{\left\|#1\right\|}
\def\abs#1{\left| #1 \right|}



\newcommand{\inner}[2]{\left\langle #1,#2 \right\rangle}

\newcommand*{\E}{\mathbb{E}}

\newcommand{\eps}{\epsilon}







\newcommand{\tL}{\tilde{L}}
\newcommand{\tvx}{\tilde{\vx}}
\newcommand{\tvg}{\tilde{\vg}}
\newcommand{\tvm}{\tilde{\vm}}
\newcommand{\tv}{\tilde{v}}

\newcommand{\adam}{$\mathtt{Adam}$\xspace}
\newcommand{\adamini}{Adam-mini\xspace}
\newcommand{\adalayer}{Adalayer\xspace}
\newcommand{\poly}{\mathtt{poly}\xspace}

\newcommand{\rmsprop}{$\mathtt{RMSProp}$\xspace}

\newcommand{\sgd}{$\mathtt{SGD}$\xspace}
\newcommand{\signgd}{$\mathtt{SignGD}$\xspace}

\newcommand{\adasgd}{$\mathtt{AdaSGD}$\xspace}
\newcommand{\rotatedadam}{$\mathtt{rotated\ Adam}$\xspace}
\newcommand{\randperm}{$\mathtt{RandPerm}$\xspace}
\newcommand{\negligible}{\delta_T}

\usepackage[colorlinks,
            linkcolor=blue,
            anchorcolor=blue,
            citecolor=blue,
            urlcolor=black,
            backref=page
            ]{hyperref}       
\usepackage{url}            
\usepackage{nicefrac}       
\usepackage[dvipsnames]{xcolor}  
\usepackage{amssymb, amsthm,mathtools}
\usepackage{cancel}
\usepackage{cleveref}
\usepackage{enumitem}
\usepackage{natbib}
\usepackage{thmtools} 
\usepackage{thm-restate}
\usepackage{caption}
\usepackage{subcaption}
\usepackage{nicefrac}
\usepackage{xfrac}
\usepackage{algorithms}
\usepackage{xspace}
\usepackage[disable]{todonotes}
\declaretheorem[name=Theorem, numberwithin=section]{theorem}
\declaretheorem[name=Lemma, sibling=theorem]{lemma}
\declaretheorem[name=Assumption, sibling=theorem]{assumption}
\declaretheorem[name=Definition, sibling=theorem]{definition}

\usepackage{sidecap}
\crefdefaultlabelformat{#2#1#3}
\creflabelformat{equation}{#2#1#3}
 
\usepackage{booktabs}

\title{Adam Exploits $\ell_\infty$-geometry of Loss Landscape via Coordinate-wise Adaptivity}

\date{}
\author{
Shuo Xie\qquad Mohamad Amin Mohamadi \qquad  Zhiyuan Li \\
Toyota Technological Institute at Chicago\\
\texttt{\{shuox,mohamadamin,zhiyuanli\}@ttic.edu} 
}

%

\begin{document}

\maketitle

\begin{abstract}
Adam outperforms SGD when training language models. Yet this advantage is not well-understood theoretically -- previous convergence analysis for Adam and SGD mainly focuses on the number of steps $T$ and is already minimax-optimal in non-convex cases, which are both $\widetilde{O}(T^{-1/4})$. In this work, we argue that the exploitation of nice $\ell_\infty$-geometry is the key advantage of Adam over SGD. More specifically, we give a new convergence analysis for Adam under novel assumptions that loss is smooth under $\ell_\infty$-geometry rather than the more common $\ell_2$-geometry, which yields a much better empirical smoothness constant for GPT-2 and ResNet models. Our experiments confirm that Adam performs much worse when the favorable $\ell_\infty$-geometry is changed while SGD provably remains unaffected. We also extend the convergence analysis to blockwise Adam under novel blockwise smoothness assumptions.
\end{abstract}
\section{Introduction}\label{sec:intro}
Large language models (LLMs) have gained phenomenal capabilities as their scale grows~\citep{radford2019language,kaplan2020scaling,brown2020language,  zhang2022opt,   
touvron2023llama,openai2023gpt,reid2024gemini}. However, pre-training LLMs is incredibly time-consuming. \adam~\citep{kingma2014adam} is the current to-go optimization algorithm for LLMs due to its fast convergence.
In contrast, \sgd, a popular and arguably the simplest optimizer, optimizes language model loss much more slowly than \adam. 

 However, the optimization benefit of \adam over \sgd cannot be explained by existing theory. Current convergence analyses for \adam and \sgd focus on the dependence on the number of steps under assumptions on the smoothness of the loss and gradient bounds~\citep{defossez2022simple}, and it has been shown that both \adam and \sgd achieve the minimax convergence rate $\widetilde{O}(T^{-1/4})$ in the non-convex settings~\citep{arjevani2023lower}. Thus according to the theory, in the worst case, \sgd would be more desirable compared to \adam because they have the same convergence rate, and yet \adam is less memory-efficient due to its coordinate-wise adaptivity, which needs to store the empirical moving average of second-order moments of past stochastic gradients. Therefore, we hypothesize that the coordinate-wise adaptivity in \adam is exploiting some unknown properties of LLMs which \sgd cannot make use of.

Towards this end, we identified a big difference between \adam and \sgd, which is ignored in previous works. That is, \sgd is rotation-equivariant, while \adam is only permutation equivariant (\Cref{def:rotation_invariance}). Intuitively, if we rotate the loss landscape, the trajectory of \sgd would be the same (up to some rotation), while the trajectory of \adam can be completely different. If \adam optimizes much more slowly after rotation, it suggests \adam is exploiting some non-rotation-invariant properties of the loss, which is not captured by standard smoothness assumptions in the convergence analysis.

\Cref{fig:adam_results} summarizes our findings by comparing \adam on the original and rotated loss. The performance of \adam on the rotated loss does become much worse than \adam on the original loss. We also test a memory-efficient and rotation-equivariant variant of \sgd, \adasgd~\citep{wang2020adasgd}, defined in \Cref{alg:adasgd}. 
Surprisingly, the rotated \adam performs even much worse than the \sgd variant. The results suggest it is impossible to explain the superior optimization performance of Adam over SGD just using rotation-invariant assumptions on the loss function, which raises the natural question, 
\begin{quote}
    \!\!\!\!\!\!\! \emph{What non-rotation-invariant properties of loss functions enable  \adam to converge faster than \sgd?}\!\!\!\!\!\!
\end{quote}

We hypothesize that the $\ell_2$-lipschitzness of loss gradient does not provide a tight-enough characterization of loss landscape of deep learning models in practice, such that we can separate \adam and other rotation-equivariant algorithms. 
Inspired by the similarity between \adam and \signgd and the fact that \signgd is the normalized steepest descent with respect to $\ell_\infty$-norm, we propose to use $\ell_\infty$-norm related smoothness as a better tool to analyze \adam. In particular, our main results use the $(1,1)$-norm of the Hessian of the loss normalized by variable dimension $d$, as the smoothness measure, instead of its spectral norm. And we prove a convergence rate of $O(\frac{1}{\sqrt{T}})$ for \adam without noise, or $O((\frac{\log T}{T})^{1/4})$ with noise. Our results have the same dependence on $T$ as previous results, but a much smaller smoothness constant when measured empirically. We empirically verify $(1,1)$-norm of Hessian positively correlates with final training loss of \adam on both synthetic tasks like quadratic loss and real tasks like training GPT2 on OpenWebText and ResNet on CIFAR10.

We summarize our contributions below:
\begin{enumerate}
    \item We show by experiments that the empirical optimization advantage of \adam over \sgd can not be explained solely under rotation-invariant assumptions. (\Cref{fig:adam_results})
    \item We propose a new complexity metric for the optimization problem, which is the $(1,1)$-norm of the Hessian matrix of loss, $\norm{\nabla^2 L(x)}_{1,1}$. We present a novel convergence result for \adam depending on this metric in the case of $\beta_1=0$. (\Cref{thm:main} ) 
    \item We further generalize the theoretical analysis ~(\Cref{thm:main_general_norm}) for \adam to blockwise \adam (\Cref{alg:blockwise_adam}) whose convergence rate can be characterized by a novel smoothness measure~(\Cref{defi:smoothness_to_partition}). \adam and \adasgd are two notable examples of blockwise \adam. In \adam, all blocks are of size $1$. In \adasgd, there is only one block. 

    \item We empirically verify that 
    when \adam converges more slowly on the rotated loss, the $(1,1)$-norm of Hessian also increases, which suggests that our new complexity metric for \adam's convergence is practically relevant.~(\Cref{sec:exp}).\footnote{The code is available at \url{https://github.com/mohamad-amin/adam-coordinate-adaptivity}. }
\end{enumerate}
\begin{figure}[t]
    \centering
\includegraphics[width=0.48\textwidth]{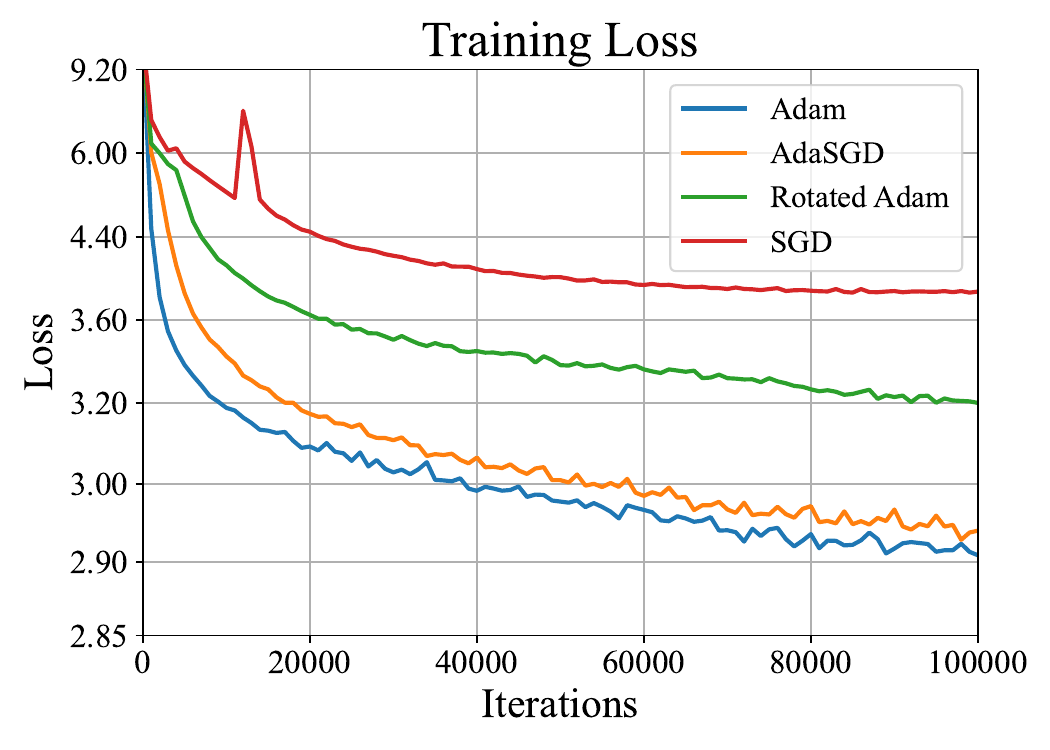}
    \hfill
\includegraphics[width=0.48\textwidth]{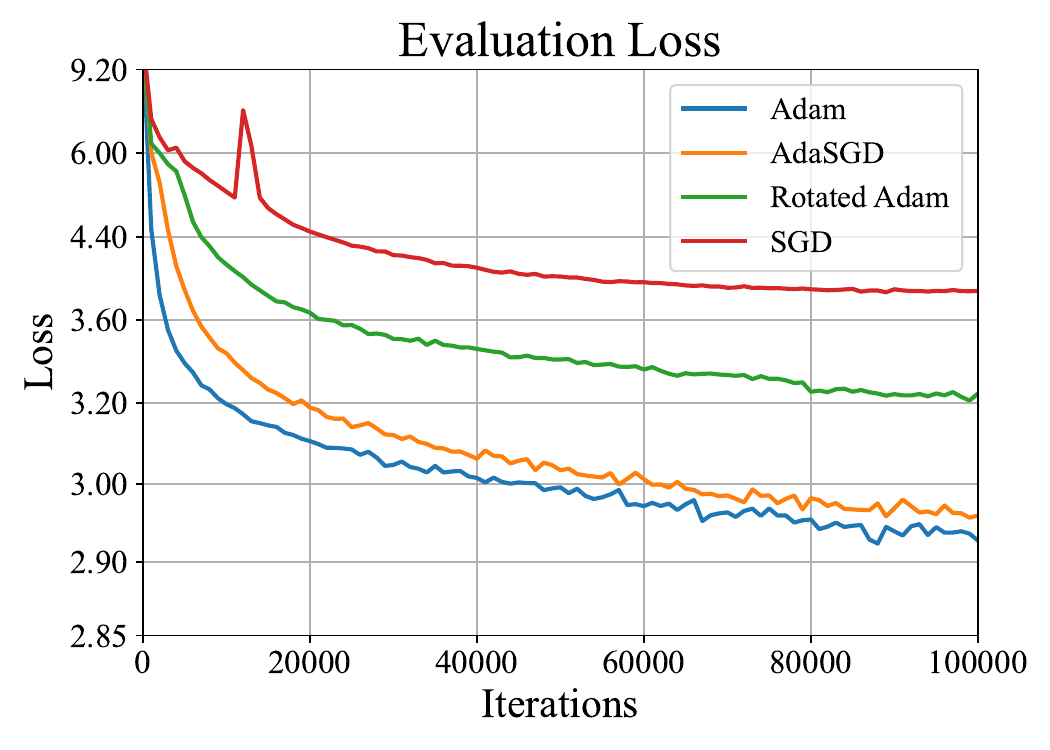}
    \caption{Training and evaluation losses of \adam, \adasgd and \sgd on GPT-2. \rotatedadam means running \adam on a rotated loss. 
    \adam on the original loss converges the fastest as expected. But convergence of \adam on a rotated loss is much slower, notably even worse than \adasgd. }
    \label{fig:adam_results}
\end{figure}

\section{Preliminaries}\label{sec:notation}\label{sec:prelim}
\paragraph{Notations.} For $\vx\in\mathbb{R}^d$, we define the vector $p$-norm $\|\vx\|_p$ as $(\sum_{i=1}^d x_i^p)^{1/p}$ for $p\in[1,\infty]$. For a matrix $\bm{A} \in \mathbb{R}^{d_1 \times d_2}$, its $(1,1)$-norm $\|\bm{A}\|_{1,1}$ is defined as $\sum_{i=1}^{d_1} \sum_{j=1}^{d_2} \abs{A_{i,j}}$  and its operator norm induced by vector $p$-norm $\|\cdot\|_p$ as $\sup_{\vx\in\mathbb{R}^d}\frac{\|\bm{A}\vx\|_q}{\|\vx\|_p}$, denoted by $\|\bm{A}\|_p$, where $\frac{1}{q}+\frac{1}{p}=1$ and $\|\cdot\|_q$ is the dual norm of $\|\cdot\|_p$. For a square matrix $\bm{A} \in \mathbb{R}^{d \times d}$, $\abs{\bm{A}}$ is defined as the unique square root of $\bm{A}^\top \bm{A}$. 
For a deterministic loss function $L(\vx)$, we consider optimization over $L$ with access only to independent stochastic functions $\{L_t(\vx) \}_{t=1}^T$ such that $\E L_t(\vx) = L(\vx)$ for any input $\vx \in \mathbb{R}^d$. 

\paragraph{Rotation.}
For an invertible function $\mathcal{T}: \mathbb{R}^d \rightarrow \mathbb{R}^d$, $\mathcal{T}$ is a rotating transformation if there exists an orthogonal matrix $\bm{T} \in \mathbb{R}^{d \times d}$ such that $\mathcal{T}(\vx) = \bm{T} \vx$. $\mathcal{T}$ is a permutating transformation if there exists a permutation $\pi : [d] \rightarrow [d]$ such that $\mathcal{T}(\vx) = [x_{\pi(1)}, \dots, x_{\pi(d)}]^\top$. A permutating transformation is always a rotating transformation. We will use $\mathcal{R}$ to denote a rotating transformation. 
\begin{figure}[t]
    \centering
    \begin{minipage}{0.48\textwidth}
        \begin{algorithm}[H]
            \caption{Adam}\label{alg:adam}
            \begin{algorithmic}
                \HYPER{$\beta_1, \beta_2, \epsilon \geq 0$, total steps $T$, learning rate $\{\eta_t\}_{t=1}^T$, $\epsilon$, initial $\vm_0, v_0$}
                \INPUT{initialization $\vx_0$, stochastic losses $\{L_t\}_{t=1}^T$} 
                \State $v_{0,i} \gets v_0$
                \For{$t=1, 2, \cdots, T$}
                    \State $g_{t,i} \gets \nabla_i L_t(\vx_{t-1})$
                    \State $m_{t,i} \gets \beta_1 m_{t-1,i} + (1-\beta_1) g_{t,i}$
                    \State $v_{t,i} \gets \beta_2 v_{t-1,i} + (1-\beta_2) g_{t,i}^2$
                    \State $x_{t,i} \gets x_{t-1,i} -\eta_t \frac{m_{t,i}}{\sqrt{v_{t,i}+\epsilon}}$
                \EndFor
                \State \Return $\vx_T$
            \end{algorithmic}
        \end{algorithm}
    \end{minipage}
    \hfill
    \begin{minipage}{0.48\textwidth}
    \vspace{-15pt}
        \begin{algorithm}[H]
            \caption{AdaSGD}\label{alg:adasgd}
            \begin{algorithmic}
                \HYPER{$\beta_1, \beta_2, \epsilon\geq 0$, total steps $T$, learning rate $\{\eta_t\}_{t=1}^T$, initial $\vm_0, v_0$}
                \INPUT{initialization $\vx_0$, stochastic losses $\{L_t\}_{t=1}^T$}
                \For{$t=1, 2, \cdots, T$}
                    \State $g_{t,i} \gets \nabla_i L_t(\vx_{t-1})$
                    \State $m_{t,i} \gets \beta_1 m_{t-1,i} + (1-\beta_1) g_{t,i}$\footnotemark
                    \State $v_t \gets \beta_2 v_{t-1} + (1-\beta_2) (\norm{\vg_t}_2^2/d)$
                    \State $x_{t,i} \gets x_{t-1,i} -\eta_t\frac{m_{t,i}}{\sqrt{v_t+\epsilon}}$
                \EndFor
                \State \Return $\vx_T$
            \end{algorithmic}
        \end{algorithm}
    \end{minipage}
\end{figure}
\footnotetext{Here it is slightly different from \cite{wang2020adasgd}. We use an exponential average of the gradient for $\vm_t$ instead of momentum. Our definition makes \adasgd the same as \adam in a one-dimensional problem.}
\begin{definition}\label{def:rotation_invariance}
For initialization $\vx_0$ and stochastic losses $\{L_t\}_{t=1}^T$, we can get $\vx_t$ when running algorithm $A$ on $(\vx_0, \{L_t\}_{t=1}^T)$. For a transformation $\mathcal{T}$, we can also get $\tilde{\vx}_t$ when running $A$ with the same hyperparameters on $(\tilde{\vx}_0 , \{\tL_t\}_{t=1}^T)$ with $\tilde{\vx}_0 = \mathcal{T}^{-1} (\vx_0)$ and $\tL_t =L_t \circ \mathcal{T}$. 

An algorithm $A$ is \underline{equivariant w.r.t. $\mathcal{T}$} if it always holds that $\tilde{\vx}_t = \mathcal{T}^{-1}(\vx_t)$ for any hyperparameters, initialization and stochastic losses. An algorithm $A$ is \underline{rotation-equivariant} if it is equivariant w.r.t. any rotating transformation $\mathcal{R}$. And $A$ is \underline{permutation-equivariant} if it is equivariant w.r.t. any permutating transformation. 
\end{definition}
The following \Cref{thm:invariance} shows the difference between \adam and \adasgd, whose proof is in \Cref{sec:invariance_property}. We provide a visual example in \Cref{fig:adam_adasgd}. It also shows the similarity between \signgd and \adam. 
\begin{SCfigure}[1.2][htbp]
\includegraphics[width=0.6\textwidth]{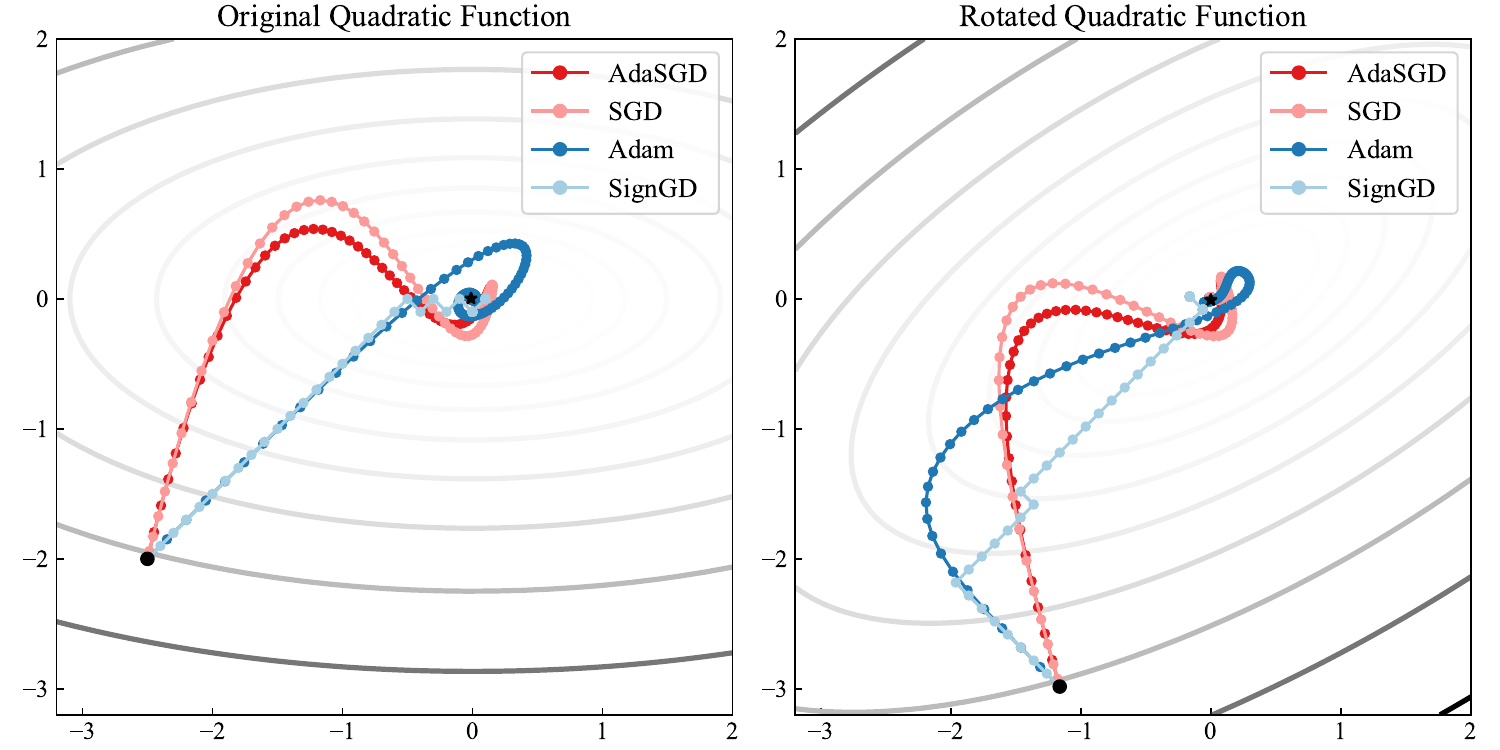}
    \caption{Optimization trajectories of different algorithms on a quadratic function before (left) and after (right) rotating the coordinate system. The starting point is marked in black. \adasgd and \sgd exhibit rotation-equivariant behavior, as their trajectories rotate consistently with the objective function. In contrast, \adam and \signgd fail to preserve trajectory shape under rotation, demonstrating that they are not rotation-equivariant.
 }
    \label{fig:adam_adasgd}
\end{SCfigure}
\begin{restatable}{theorem}{rotation}\label{thm:invariance}\label{thm:rotation_effect}
\sgd and \adasgd are rotation-equivariant. \adam and \signgd are only permutation-equivariant. 
\end{restatable}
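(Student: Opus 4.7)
The plan is to handle the two positive invariance claims via a direct induction on the step index, and to refute the rotation-invariance of \adam and \signgd by exhibiting a single two-dimensional counterexample. Throughout, write $\mathcal{T}(\vx) = \bm{T}\vx$ for a rotating transformation with $\bm{T}^\top \bm{T} = \bm{I}$, so that the chain rule gives $\nabla \tL_t(\tvx) = \bm{T}^\top \nabla L_t(\bm{T}\tvx)$; that is, gradients transform contravariantly as $\tvg_t = \bm{T}^\top \vg_t$ whenever $\tvx_{t-1} = \bm{T}^\top \vx_{t-1}$.

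For \sgd the induction is immediate: $\tvx_t = \tvx_{t-1} - \eta_t \tvg_t = \bm{T}^\top(\vx_{t-1} - \eta_t \vg_t) = \bm{T}^\top \vx_t$. For \adasgd I would strengthen the inductive hypothesis to $\tvm_{t-1} = \bm{T}^\top \vm_{t-1}$ and $\tilde v_{t-1} = v_{t-1}$ (using the standard convention $\vm_0 = \vzero$ and $v_0$ a scalar). Because the second-moment recursion depends on the gradient only through $\|\vg_t\|_2^2$ and $\|\bm{T}^\top \vg_t\|_2 = \|\vg_t\|_2$, the scalar $\tilde v_t$ remains equal to $v_t$; the first-moment recursion is linear in $\vg_t$, so $\tvm_t = \bm{T}^\top \vm_t$; and then $\tvx_t = \tvx_{t-1} - \eta_t \tvm_t/\sqrt{\tilde v_t + \epsilon}$ inherits the contravariant transformation, closing the induction.

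For the permutation-invariance of \adam and \signgd, take $\mathcal{T}$ to be the coordinate permutation induced by some $\pi$, so $\tilde g_{t,i} = g_{t,\pi(i)}$ as soon as $\tilde x_{t-1,i} = x_{t-1,\pi(i)}$. Every remaining operation in \Cref{alg:adam} and in $\mathrm{sign}$-descent — squaring, $\mathrm{sign}$, absolute value, coordinate-wise division by $\sqrt{v_{t,i}+\epsilon}$, and coordinate-wise linear combination — commutes with reindexing by $\pi$. A routine induction then yields $\tilde v_{t,i} = v_{t,\pi(i)}$, $\tilde m_{t,i} = m_{t,\pi(i)}$, and finally $\tilde x_{t,i} = x_{t,\pi(i)}$, which is $\tvx_t = \mathcal{T}^{-1}(\vx_t)$.

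The part requiring a concrete construction, and the main obstacle, is refuting rotation-invariance for \adam and \signgd: one must produce a specific rotation and loss whose very first step already breaks equivariance. I would take $d=2$, $L(\vx) = (x_1-1)^2+(x_2-2)^2$, initialization $\vx_0 = \vzero$, and the $45^\circ$ rotation $\bm{T} = \tfrac{1}{\sqrt{2}}\begin{pmatrix}1 & -1 \\ 1 & 1\end{pmatrix}$. For \signgd, $\vg_1 = (-2,-4)$ gives $\vx_1 = (\eta,\eta)$, while $\tvg_1 = \bm{T}^\top(-2,-4) = (-3\sqrt{2},-\sqrt{2})$ gives $\tvx_1 = (\eta,\eta)$; but $\bm{T}^\top \vx_1 = (\sqrt{2}\eta,0) \ne \tvx_1$, contradicting invariance. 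For \adam with $\vm_0 = \vzero$, $v_0 = 0$, and $\epsilon = 0$ the first step equals a positive scalar multiple of $\mathrm{sign}(\vg_1)$, so the same example works verbatim; when $\epsilon > 0$, the coordinate-wise nonlinearity $g_i \mapsto g_i/\sqrt{g_i^2+\epsilon}$ still fails to commute with $\bm{T}^\top$ on this example, as one verifies by direct substitution. The only delicacy throughout is the bookkeeping around $\vm_0$ in "same hyperparameters"; fixing the standard $\vm_0 = \vzero$ suffices for every induction above.
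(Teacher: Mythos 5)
Your proposal is correct and follows the same structure as the paper's proof: a step-by-step induction (using the covariance of gradients under orthogonal transformations) to establish rotation-invariance of \sgd and \adasgd and permutation-invariance of \adam and \signgd, followed by a two-dimensional quadratic counterexample with a $45^\circ$ rotation to refute rotation-invariance of \adam and \signgd. The differences are cosmetic — you use $L(\vx)=(x_1-1)^2+(x_2-2)^2$ and compute the first step explicitly where the paper uses $L(\vx)=2x_1^2+x_2^2$ and argues about feasible update directions — and your explicit remark that $\vm_0=\vzero$ is needed for the induction's base case is a welcome clarification of a point the paper leaves implicit.
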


\section{Main Results: Convergence Rates of \texorpdfstring{\adam}{Adam}}\label{sec:main_results}

In this section, we present our main theoretical results, starting with a convergence analysis of \adam for stochastic smooth losses with coordinate-wise gradient noise~(\Cref{thm:main}). 
We allow non-convex losses and thus the convergence is measured by the $\ell_1$ norm of the gradient. 
For the deterministic loss, our best convergence rate (\Cref{thm:convergence_rate_signgd}) is achieved by \signgd (\adam with $\beta_1=\beta_2=0$).
For the stochastic loss with bounded gradient noise variance, our best rate (\Cref{cor:stochastic}) is achieved by \rmsprop (\adam with $\beta_1=0$ and $\beta_2\in[0,1]$). 

Then we extend our analysis of \adam to more general blockwise \adam~(\Cref{thm:main_general_norm}), which contains both \adam and \adasgd as special cases. We also come up with novel smoothness measures~(\Cref{ass:general_norm_smoothness}) corresponding to the set of blocks used in blockwise \adam.

Similar to previous works~\citep{defossez2022simple}, our analysis could be extended to the most general case of \adam, where both $\beta_1$, $\beta_2$ are non-zero. But the rate becomes strictly worse than \rmsprop (the case of $\beta_1=0$), as there will be some extra polynomials of $\frac{1}{1-\beta_1}$. 
We decide not to include the result for the most general case, on the one hand for ease of presentation, and on the other hand, because such result can not explain the optimization benefit of momentum ($\beta_1>0$) in practice and does not add any insight on the benefit of \adam. We hypothesize that we are missing some important features of loss landscape of transformers in the theoretical assumptions and we leave this for future work. 

\subsection{Warmup: \texorpdfstring{\signgd (\adam with $\beta_1 = \beta_2=0$)}{SignGD}}

In this section, we present the convergence analysis for \signgd as a warm-up and illustrate how \adam could benefit from a non-rotation-invariant property of the loss landscape, which in particular is the $\ell_\infty$ smoothness. The key observation here is that \signgd is the normalized steepest descent with respect to $\ell_\infty$ norm \citep{xie2024implicit}, so it is more natural to analyze its convergence using $\ell_\infty$ norm geometry of the loss. 

\begin{definition}\label{defi:smoothness_to_specific_norm}
    Given a norm $\norm{\cdot}$ on $\mathbb{R}^d$ and $\norm{\cdot}_*$ as its dual norm, we say a function $L$ is $H$-smooth w.r.t. $\norm{\cdot}$ if for any $\vx,\vy\in\mathbb{R}^d$, we have that $\norm{\nabla L(\vx)-\nabla L(\vy)}_* \leq H \norm{\vx-\vy}$.
\end{definition}
\Cref{thm:convergence_rate_signgd} gives the convergence rate for \signgd and the proof is in \Cref{sec:proof_signgd}. 
\begin{restatable}{theorem}{signgdrate}\label{thm:convergence_rate_signgd}
    Let $L$ be $H$-smooth w.r.t. $\ell_\infty$ norm and $\{\vx_t\}_{t=1}^T$ be the iterates of \signgd (\adam with $\beta_1=\beta_2=0$) on $L$ with initialization $\vx_0$ and learning rate $\eta$, it holds that
    \begin{equation*}
    \min_{1 \leq t \leq T} \norm{\nabla L(\vx_t)}_1 \leq \frac{L(\vx_0) - \min L(\vx)}{T\eta} + \frac{H\eta}{2}.
    \end{equation*}
    If we choose $\eta = \sqrt{\frac{2(L(\vx_0)-\min L(\vx))}{TH}}$, then $\min_{1 \leq t \leq T} \norm{\nabla L(\vx_t)}_1\le \sqrt{\frac{2H(L(\vx_0)-\min L (\vx))}{T}}$.
\end{restatable}

\subsection{Main result: \texorpdfstring{\rmsprop (\adam with $\beta_1=0,\beta_2\in[0,1]$)}{RMSProp}}\label{subsec:main_adam}
It is well-known that \signgd might not converge in the stochastic case as the expectation of descent direction for mini-batch loss may not be a descent direction for $L$. \rmsprop is proposed to address this issue by using a moving average of the squared gradient per coordinate to reduce the correlation between the denominator and the numerator, thus making the expected update direction less biased~\citep{hinton2012neural}. In this subsection we formalize the above intuition and show indeed a positive $\beta_2$ in \adam helps convergence in the stochastic case. The main challenges here are from both lower bounding the first-order term and upper bounding the second-order term in the modified descent lemma (the \rmsprop counterpart of \Cref{eq:descent_lemma_nsd}). 
\begin{equation*}
    L(\vx_t)-L(\vx_{t-1}) \leq -\eta_t \nabla L(\vx_t)^\top \frac{\vg_t}{\sqrt{\vv_t+\epsilon}} + \frac{H}{2} \eta_t^2 \norm{\frac{\vg_t}{\sqrt{\vv_t+\epsilon}}}_\infty^2
\end{equation*}
We can only upper bound $\norm{\frac{\vg_t}{\sqrt{\vv_t+\epsilon}}}_\infty^2$ by $\frac{1}{1-\beta_2}$ without more fine-grained analysis on the relationship between gradients in each step, which will greatly hurt the dependence of convergence rate on $1-\beta_2$. However, even though the update $\frac{g_{t,i}}{\sqrt{v_{t,i}+\epsilon}}$ at step $t$ can be as large as $\frac{1}{\sqrt{1-\beta_2}}$ with some very large $g_{t,i}$, the average moving speed for each coordinate should be close to $1$. Therefore, we introduce \Cref{ass:lipshitz}, which is slightly stronger than \Cref{defi:smoothness_to_specific_norm} but allows us to decompose the second order term into each coordinate according to \Cref{lem:second_order_general_norm}. It also facilitates the coordinate-wise analysis for the first order term. We note this definition also appears in Assumption 2.3 of the concurrent work \citet{maladkar2024convergence}.
\begin{restatable}{definition}{lipschitz}
    \label{ass:lipshitz}
     For any  $\mathbf{H} = (H_1,\ldots,H_d)\in\mathbb{R}^d$, we say a function $L$ is \emph{$\mathbf{H}$-smooth coordinate-wisely w.r.t. $\ell_\infty$ norm }, if and only if $\abs{\nabla_i L(\vx) - \nabla_i L(\vy)} \leq H_i \norm{\vx-\vy}_\infty$ for any $i\in [d]$, $\vx, \vy \in \mathbb{R}^d$. 
 \end{restatable}

 \paragraph{(1,1)-norm as a surrogate complexity measure for coordinate-wise smoothness.} $H_i$ in \Cref{ass:lipshitz} is determined by $\sup_{\vx} \sum_{j=1}^d \abs{\nabla^2_{i,j} L(\vx)}$, which is difficult to compute because it requires taking supreme over the entire domain. A computationally-tractable alternative is to approximate $\sum_{i=1}^d H_i$ locally by the $(1,1)$-norm of Hessian of loss along the training trajectory. We are the first to provide an efficient approximation algorithm with concentration guarantees in \Cref{subsec:exp_matrix_norm_estimation}, which uses hessian-vector product against random Cauchy vectors. 
 
By definition, $\mathbf{H}$-smoothness coordinate-wisely w.r.t. $\ell_\infty$ norm implies $\sum_{i=1}^d H_i$-smoothness w.r.t. $\ell_\infty$ norm. 
We also need \Cref{ass:bounded_noise} to measure the influence of noise in the stochastic setting.
 \begin{restatable}[Coordinate-wise noise]{assumption}{noise}\label{ass:bounded_noise}
    There exist constants $\sigma_i$ such that 
    $\E \left[ \nabla_i L_t(\vx)-\nabla_i L(\vx)\right]^2 \leq \sigma_i^2$ for any $i \in [d]$, $t \in \mathbb{N}$ and $\vx \in \mathbb{R}^d$. 
\end{restatable}
 We present the main result in \Cref{thm:main}. The sketch of the proof is presented in \Cref{sec:proof_sketch} and the complete proof for the generalized blockwise Adam algorithm is presented in \Cref{sec:proof_detail}. The proof incorporates some key steps from \cite{li2024frac}, extending them to accommodate the generalized algorithm and different smoothness assumptions. 
\begin{theorem}[Main, \adam]\label{thm:main}
 Let $\{L_t\}_{t=1}^T$ be independent stochastic losses satisfying \Cref{ass:bounded_noise} and that their expectation $L$ is $\mH$-coordinate-wisely smooth w.r.t. $\ell_\infty$ norm. For \adam with $\beta_1=0$, we have that
 \begin{align*}
\min_{\frac{T}{2} < t \leq T} \E \norm{\nabla L(\vx_t)}_1 &\leq O \left(E + \sqrt{E} \sqrt{\frac{\beta_2^{\frac{T}{4}}}{T(1-\beta_2)} d v_0 + \sum_{i=1}^d \sigma_i + d \sqrt{\epsilon}} \right)
\end{align*}
with 
\begin{align*}
    E&=\frac{2}{\eta T} \E \left[L(\vx_0)-L(\vx_T) \right] 
        +\left( 1+ \frac{\beta_2 F }{T(1-\beta_2)} \right)\left( \eta \sum_{i=1}^d H_i + \sqrt{1-\beta_2} \sum_{i=1}^d \sigma_i\right)
\end{align*} and 
$$F = 2\ln\left(1+ \frac{ \sum_{i=1}^d \sigma_i^2 + \norm{\nabla L(\vx_0)}_\infty^2 + \sum_{i\in[d]} H_i^2 \eta^2 T(T + \frac{1}{1-\beta_2})} {v_0+\epsilon}\right)+\ln{32}. $$
\end{theorem}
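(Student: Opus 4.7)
The plan is to adapt the descent-lemma-plus-telescoping framework that \citet{li2024frac} developed for \adam under standard $\ell_2$-smoothness to our new coordinate-wise $\ell_\infty$-smoothness setting (\Cref{ass:lipshitz}). Since $\beta_1=0$, the update is simply $x_{t,i} = x_{t-1,i} - \eta\, g_{t,i}/\sqrt{v_{t,i}+\epsilon}$, and the coordinate-wise smoothness combined with \Cref{lem:second_order_general_norm} should yield a per-step descent inequality of the form
\begin{equation*}
L(\vx_t) - L(\vx_{t-1}) \leq -\eta \sum_{i=1}^d \nabla_i L(\vx_{t-1}) \frac{g_{t,i}}{\sqrt{v_{t,i}+\epsilon}} + \frac{\eta^2}{2} \sum_{i=1}^d H_i \frac{g_{t,i}^2}{v_{t,i}+\epsilon},
\end{equation*}
in which each $H_i$ multiplies only its own coordinate's squared-over-denominator, so the final smoothness constant aggregates as $\sum_i H_i$ instead of a global $\ell_2$-Lipschitz constant.

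The most delicate step is handling the first-order term after taking conditional expectations, since $v_{t,i}$ depends on $g_{t,i}$ so the expectation is biased. I would introduce an $\mathcal{F}_{t-1}$-measurable surrogate $\hat v_{t,i} := \beta_2 v_{t-1,i} + (1-\beta_2)\,\mathbb{E}_{t-1}[g_{t,i}^2]$, split
\begin{equation*}
\frac{g_{t,i}}{\sqrt{v_{t,i}+\epsilon}} = \frac{g_{t,i}}{\sqrt{\hat v_{t,i}+\epsilon}} + g_{t,i} \cdot \frac{\sqrt{\hat v_{t,i}+\epsilon} - \sqrt{v_{t,i}+\epsilon}}{\sqrt{(\hat v_{t,i}+\epsilon)(v_{t,i}+\epsilon)}},
\end{equation*}
and bound the second (bias) piece using $|\sqrt{\hat v_{t,i}+\epsilon}-\sqrt{v_{t,i}+\epsilon}| \lesssim \sqrt{1-\beta_2}\,|g_{t,i}|$, followed by a per-coordinate Cauchy-Schwarz that splits $g_{t,i}$ into its mean $\nabla_i L(\vx_{t-1})$ and noise of variance at most $\sigma_i^2$ (\Cref{ass:bounded_noise}). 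This is what produces the $\sqrt{1-\beta_2}\sum_i \sigma_i$ contribution in $E$, rather than the naive $1/\sqrt{1-\beta_2}$ obtained from just bounding $|g_{t,i}/\sqrt{v_{t,i}+\epsilon}| \leq 1/\sqrt{1-\beta_2}$.

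Third, summing the descent inequality over $t=1,\dots,T$ and taking expectations, the accumulated second-order term $\sum_{t,i} H_i\, g_{t,i}^2/(v_{t,i}+\epsilon)$ is controlled by the deterministic log-sum inequality $\sum_t a_t/(b_0 + (1-\beta_2)\sum_{s\leq t} a_s) \leq \frac{1}{1-\beta_2}\ln(1 + (1-\beta_2)\sum_t a_t / b_0)$ together with a crude a-priori bound $g_{t,i}^2 \leq O(\sigma_i^2 + \|\nabla L(\vx_0)\|_\infty^2 + H_i^2 \eta^2 T \cdot T/(1-\beta_2))$; these give precisely the logarithmic factor $F$ and the $\eta \sum_i H_i$ piece of $E$. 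To convert the surviving first-order quantity $\sum_t \mathbb{E}\sum_i (\nabla_i L(\vx_{t-1}))^2/\sqrt{\hat v_{t,i}+\epsilon}$ into a bound on $\mathbb{E}\|\nabla L(\vx_t)\|_1$, I would apply Cauchy-Schwarz
\begin{equation*}
\|\nabla L(\vx_t)\|_1^2 \leq \Bigl(\sum_i \tfrac{(\nabla_i L(\vx_t))^2}{\sqrt{\hat v_{t,i}+\epsilon}}\Bigr) \Bigl(\sum_i \sqrt{\hat v_{t,i}+\epsilon}\Bigr),
\end{equation*}
and bound the second factor in expectation by $d\sqrt{\epsilon} + \sum_i \sigma_i$ plus a contribution from the initialization that decays like $\beta_2^{t/2}\sqrt{d v_0}$; restricting the minimum to $t > T/2$ turns this initial-condition piece into the $\beta_2^{T/4}$ factor, and averaging over $t \in (T/2, T]$ with Jensen gives the displayed bound.

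The main obstacle, as in every tight \adam analysis, is the bias step: extracting $\sqrt{1-\beta_2}$ instead of $1/\sqrt{1-\beta_2}$ from the $g_{t,i}/\sqrt{v_{t,i}+\epsilon}$ decoupling. The per-coordinate generalization further forces one to verify that the log-sum trick still aggregates across coordinates with only a $\sum_i H_i$ prefactor (not $d \cdot \max_i H_i$), which is precisely what the coordinate-wise form of \Cref{lem:second_order_general_norm} is designed to enable, and it is also what ultimately allows the theorem to be stated in terms of the weaker $\ell_\infty$-geometry highlighted in the introduction.
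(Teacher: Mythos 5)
Your plan tracks the paper's own argument step for step: the same descent lemma aggregating coordinate-wise smoothness into $\sum_i H_i$ (the paper's Lemma~\ref{lem:second_order_general_norm}), the same $\mathcal{F}_{t-1}$-measurable surrogate denominator to decouple numerator and denominator (the paper uses $\tilde v_{t,i}=\beta_2 v_{t-1,i}+(1-\beta_2)(\bar g_{t,i}^2+\sigma_i^2)$, which is an upper bound on your $\hat v_{t,i}$ and plays an interchangeable role), the same $\sqrt{1-\beta_2}$-type control of the bias piece (Lemma~\ref{lem:first_order_approx_rmsprop}), the same Cauchy--Schwarz to pass to $\|\nabla L\|_1$ together with a bound on $\sum_i\sqrt{\tilde v_{t,i}+\epsilon}$ that produces the $\beta_2^{T/4}$ decay after restricting to $t>T/2$ (Lemma~\ref{lem:denominator}), and the same a~priori growth bound on $v_{T,i}$ to produce the logarithmic $F$ (Lemma~\ref{lem:vt_growth}). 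So this is the paper's proof, not a genuinely different route.

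The one spot where what you wrote would not quite go through as stated is the log-sum step. You quote the AdaGrad-style inequality $\sum_t a_t/(b_0+(1-\beta_2)\sum_{s\le t}a_s)\le \frac{1}{1-\beta_2}\ln(1+(1-\beta_2)\sum_t a_t/b_0)$, which applies to a running sum, not to the exponential moving average $v_t=\beta_2 v_{t-1}+(1-\beta_2)g_t^2$. For the EMA the correct statement is the paper's Lemma~\ref{lem:momentum_ratio}, $\sum_t g_t^2/v_t\le T+\frac{\beta_2}{1-\beta_2}\ln(v_T/v_0)$, and the additive $T$ there is exactly what yields the clean $\eta\sum_i H_i$ contribution in $E$ \emph{without} a $\frac{\beta_2 F}{T(1-\beta_2)}$ prefactor; the AdaGrad form alone would not produce the $(1+\cdot)$ structure you need. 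Since your final targets for $E$ and $F$ are correct, this is evidently a mis-cited lemma rather than a wrong idea, but you should swap in the EMA version when writing the proof out.
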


We can determine the convergence rate of RMSprop by choosing appropriate hyperparameters $\eta$ and $\beta_2$ in \Cref{thm:main} to minimize $E$. We would assume that $v_0+\epsilon > (\sum_{i=1}^d \sigma_i^2 + \| \nabla L(\vx_0) \|_\infty^2 + \sum_i H_i^2 \eta^2)/\poly(T)$ and $\frac{1}{1-\beta_2}=\poly(T)$. Then we can simplify the term by considering $F = O(\log T)$.

The two terms involving $\sum_{i=1}^d \sigma_i $ have a lower bound $\Theta \left(\sum_{i=1}^d \sigma_i \left(\frac{\log{T}}{T} \right)^\frac{1}{2} \right)$, which is achieved with $1-\beta_2=\Theta \left(\frac{\log{T}}{T} \right)$.
Then the three terms involving $\eta$ has a lower bound $\Theta \left( \sqrt{\frac{\left(L(\vx_0)-\min_\vx L(\vx) \right) \sum_{i=1}^d H_i}{T}}\right)$ reached by $\eta=\Theta\left(\sqrt{\frac{L(\vx_0)-\min_\vx L(\vx) }{T\sum_{i=1}^d H_i}}\right)$. These hyperparameter choices yield the optimal convergence rate for stochastic case in \Cref{cor:stochastic}.  For convenience, we define $
  R\triangleq \left( L(\vx_0)-\min_\vx L(\vx) \right) \sum_{i=1}^d H_i$, which will be the core term in \Cref{cor:stochastic,cor:deterministic}.
\begin{restatable}[Stochastic Case, general $\sigma_i$]{corollary}{stochastic}\label{cor:stochastic}
Let $\{L_t\}_{t=1}^T$ be independent stochastic losses satisfying \Cref{ass:bounded_noise} and that their expectation $L$ is $\mH$-coordinate-wisely smooth w.r.t. $\ell_\infty$ norm. For $\beta_1=0$, $1-\beta_2=\Theta(\frac{\log{T}}{T})$, $\epsilon=0$, $\eta=\Theta\left(\sqrt{\frac{L(\vx_0)-\min_\vx L(\vx) }{T\sum_{i=1}^d H_i}}\right)$ and $v_0> (\sum_{i=1}^d \sigma_i^2 + \| \nabla L(\vx_0) \|_\infty^2 + \sum_i H_i^2 \eta^2)/\poly(T)$, we have that
\begin{align*}
    \min_{\frac{T}{2} < t \leq T} \E \norm{\vg_t}_1 =O\left( \sqrt{\frac{R}{T}} +  \sqrt{\sum_{i=1}^d \sigma_i } \left(\frac{R}{T}\right)^\frac{1}{4} + \sum_{i=1}^d \sigma_i \left(\frac{\log{T}}{T} \right)^\frac{1}{4} + \negligible\right)
\end{align*}
with $\negligible=\sqrt{\frac{d v_0}{T(1-\beta_2)}} \exp{\left(-\frac{T(1-\beta_2)}{8} \right)}\left[\left(\frac{R}{T}\right)^\frac{1}{4} + \sqrt{\sum_{i=1}^d \sigma_i} \left(\frac{\log{T}}{T} \right)^\frac{1}{4} \right]$.
\end{restatable}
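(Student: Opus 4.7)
The corollary is a direct specialization of \Cref{thm:main}: substitute the prescribed $\eta$, $\beta_2$, $\epsilon$ and simplify. So the plan is to (i) verify the logarithmic bound on $F$ under the stated assumption on $v_0$, (ii) bound the ``overhead'' factor $1+\beta_2 F/(T(1-\beta_2))$ by a constant, (iii) collect the $\eta$-dependent terms into the $\sqrt{R/T}$ scaling, (iv) collect the $\sqrt{1-\beta_2}$-dependent noise terms into the $\sqrt{\log T/T}\sum_i\sigma_i$ scaling, and (v) propagate these into the outer $E+\sqrt{E}\cdot(\cdot)$ envelope to obtain the four advertised contributions.

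\textbf{Step-by-step.} First, with $1-\beta_2=\Theta(\log T/T)$ and $\eta=\Theta(\sqrt{(L(\vx_0)-\min L)/(T\sum_i H_i)})$, the quantity $\max_i H_i^2\eta^2 T(T+1/(1-\beta_2))=O(T\cdot(L(\vx_0)-\min L)\cdot\max_iH_i/\sum_i H_i)=\poly(T)$. Combined with the hypothesis $v_0\ge(\sum_i\sigma_i^2+\|\nabla L(\vx_0)\|_\infty^2+\max_i H_i^2\eta^2)/\poly(T)$, the ratio inside the logarithm in the definition of $F$ is $\poly(T)$, so $F=O(\log T)$. Next, $T(1-\beta_2)=\Theta(\log T)$, so $\beta_2 F/(T(1-\beta_2))=O(1)$, making the bracketed prefactor in $E$ an $O(1)$ constant.

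Now compute $E$ directly. The first term $\frac{2}{\eta T}\E[L(\vx_0)-L(\vx_T)]$ is at most $\frac{2(L(\vx_0)-\min L)}{\eta T}=\Theta(\sqrt{R/T})$ by choice of $\eta$; the term $\eta\sum_i H_i=\Theta(\sqrt{R/T})$ matches it; and $\sqrt{1-\beta_2}\sum_i\sigma_i=\Theta(\sqrt{\log T/T}\sum_i\sigma_i)$. Hence $E=O(\sqrt{R/T}+\sqrt{\log T/T}\sum_i\sigma_i)$ and in particular $\sqrt{E}=O((R/T)^{1/4}+(\log T/T)^{1/4}\sqrt{\sum_i\sigma_i})$. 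For the additive noise radical in \Cref{thm:main}, with $\epsilon=0$, bound $\beta_2^{T/4}\le\exp(-(1-\beta_2)T/4)$, and split the square root to get $\sqrt{E\sum_i\sigma_i}$ plus $\sqrt{E}\cdot\sqrt{dv_0/(T(1-\beta_2))}\exp(-(1-\beta_2)T/8)$; the first piece yields $\sqrt{\sum_i\sigma_i}(R/T)^{1/4}+(\log T/T)^{1/8}(\sum_i\sigma_i)^{3/4}$, the second is exactly $\negligible$. After dominating the fractional-power noise term by $\sum_i\sigma_i(\log T/T)^{1/4}$ via AM--GM (or by absorbing it into the sum of the two flanking terms, since for $a=\sqrt{\log T/T}\sum_i\sigma_i$ one has $a^{3/4}\cdot T^{-1/8}\le a + T^{-1/2}$ up to constants), we obtain the stated four summands.

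\textbf{Main obstacle.} The only nonroutine issue is making sure no ``hidden'' term is dropped: specifically, (a) the cross term from expanding $\sqrt{E\sum_i\sigma_i}$ must be shown to be dominated by the displayed $\sqrt{\sum_i\sigma_i}(R/T)^{1/4}+\sum_i\sigma_i(\log T/T)^{1/4}$ pair (a short AM--GM suffices), and (b) one must verify that the $\poly(T)$ appearing in the $v_0$ hypothesis is large enough to absorb the $\max_i H_i^2\eta^2 T^2$ contribution inside $F$ for the chosen $\eta$, so that $F=O(\log T)$ really holds. Both are elementary once the hyperparameters are plugged in; no new inequality beyond \Cref{thm:main} is needed.
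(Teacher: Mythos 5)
Your overall approach is correct and is exactly the one the paper uses: the corollary follows by substituting the prescribed $\eta$, $\beta_2$, $\epsilon$, $v_0$ into \Cref{thm:main}, checking $F=O(\log T)$, observing $\beta_2 F/(T(1-\beta_2))=O(1)$ so $E=O(\sqrt{R/T}+\sqrt{(\log T)/T}\sum_i\sigma_i)$, and then splitting the $\sqrt{E(\cdots)}$ term.

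One intermediate computation is off, though it does not harm the conclusion. When you expand $\sqrt{E}\cdot\sqrt{\sum_i\sigma_i}$ with $\sqrt{E}=O\bigl((R/T)^{1/4}+(\log T/T)^{1/4}(\sum_i\sigma_i)^{1/2}\bigr)$, the second product is
\[
(\log T/T)^{1/4}(\textstyle\sum_i\sigma_i)^{1/2}\cdot(\textstyle\sum_i\sigma_i)^{1/2}=(\log T/T)^{1/4}\textstyle\sum_i\sigma_i,
\]
not the $(\log T/T)^{1/8}(\sum_i\sigma_i)^{3/4}$ you wrote. So the third summand in the stated bound appears directly, with no AM--GM step needed at all; your discussion of ``dominating the fractional-power noise term'' is addressing a non-problem introduced by the arithmetic slip. (The $E$ contribution $\sqrt{(\log T)/T}\sum_i\sigma_i$ is also absorbed by $(\log T/T)^{1/4}\sum_i\sigma_i$ since $(\log T)/T<1$.) With that correction your derivation matches the paper's.

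Your verification of $F=O(\log T)$ is right: $\max_i H_i^2\eta^2\,T(T+1/(1-\beta_2))=O(T^2)\cdot\max_i H_i^2\eta^2$, so the ratio inside the logarithm is $\poly(T)$ under the stated hypothesis on $v_0$. The handling of $\negligible$ via $\beta_2^{T/8}\le e^{-(1-\beta_2)T/8}$ is also correct.
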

Here $\negligible$ can be smaller than any polynomial of $T$ by manipulating the value of $\frac{T(1-\beta_2)}{\log{T}} = \Theta(1)$. Then $\sum_{i=1}^d \sigma_i \left( \frac{\log{T}}{T} \right)^\frac{1}{4}$ is the leading term w.r.t. $T$ in the rate whose coefficient only involves $\sum_{i=1}^d \sigma_i$. It suggests that the rate can be much improved when noise is small. Below we get the convergence rate with the same hyperparameters in deterministic case in \Cref{cor:deterministic}.

\begin{restatable}[Deterministic Case, $\sigma_i=0$]{corollary}{deterministic}\label{cor:deterministic}\label{cor:deterministic_optimal}
Let $\{L_t\}_{t=1}^T$ be deterministic losses satisfying \Cref{ass:bounded_noise} and that their expectation $L$ is $\mH$-coordinate-wisely smooth w.r.t. $\ell_\infty$ norm. For $\beta_1=0$, $1-\beta_2=\Omega(\frac{\log{T}}{T})$, $\epsilon=0$, $\eta=\Theta\left(\sqrt{\frac{L(\vx_0)-\min_\vx L(\vx)}{T \sum_{i=1}^d H_i}} \right)$ and $v_0> (\sum_{i=1}^d \sigma_i^2 + \| \nabla L(\vx_0) \|_\infty^2 + \sum_i H_i^2 \eta^2)/\poly(T)$ for any polynomial $\poly(T)$, we have that
\begin{align*}
    \min_{\frac{T}{2} < t \leq T} \norm{\vg_t}_1 =O\left( \sqrt{\frac{R}{T}} + \negligible\right)
\end{align*}
with $\negligible=\sqrt{\frac{d v_0}{T(1-\beta_2)}} \exp{\left(-\frac{T(1-\beta_2)}{8} \right)} \left(\frac{R}{T} \right)^\frac{1}{4}$. 
\end{restatable}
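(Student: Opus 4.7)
\textbf{Proof plan for Corollary \ref{cor:deterministic}.} The plan is simply to specialize Theorem \ref{thm:main} to the deterministic setting ($\sigma_i=0$, $\epsilon=0$) and substitute the stated hyperparameter choices, tracking the two independent quantities $E$ and the ``decay'' term $\sqrt{E}\sqrt{\beta_2^{T/4}dv_0/(T(1-\beta_2))}$ appearing in the main bound. No additional technical machinery is required.

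First, I would bound the logarithmic factor $F$. Under the hypothesis $v_0 > \bigl(\sum_i\sigma_i^2 + \|\nabla L(\vx_0)\|_\infty^2 + \max_i H_i^2\eta^2\bigr)/\poly(T)$ for a suitable polynomial, and using $T+\frac{1}{1-\beta_2}=O(T)$ (since $1-\beta_2=\Omega(\log T/T)$ implies $1/(1-\beta_2)=O(T/\log T)$), the argument of the logarithm in $F$ is bounded by a polynomial in $T$, so $F=O(\log T)$. Combined with $\beta_2/(T(1-\beta_2))=O(1/\log T)$, this makes the prefactor $1+\beta_2 F/(T(1-\beta_2))$ equal to $O(1)$. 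Hence the expression for $E$ in Theorem \ref{thm:main}, with $\sigma_i=0$, reduces to
\begin{equation*}
E \;\leq\; O\!\left(\frac{L(\vx_0)-\min L}{\eta T} + \eta \sum_{i=1}^d H_i\right).
\end{equation*}

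Next I would plug in $\eta = \Theta\!\bigl(\sqrt{(L(\vx_0)-\min L)/(T\sum_i H_i)}\bigr)$. This choice balances the two terms, each of which becomes $\Theta(\sqrt{R/T})$ with $R=(L(\vx_0)-\min L)\sum_i H_i$, giving $E = O(\sqrt{R/T})$ and consequently $\sqrt{E} = O((R/T)^{1/4})$.

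Finally, for the decay contribution $\sqrt{E}\sqrt{\beta_2^{T/4}dv_0/(T(1-\beta_2))}$, I would use the elementary estimate $\beta_2^{T/4} = (1-(1-\beta_2))^{T/4} \leq \exp(-(1-\beta_2)T/4)$, so that $\sqrt{\beta_2^{T/4}} \leq \exp(-(1-\beta_2)T/8)$. Multiplying by the already-established $\sqrt{E}=O((R/T)^{1/4})$ yields exactly the stated $\negligible$. Adding the two pieces gives the claimed $O(\sqrt{R/T}+\negligible)$. The ``obstacle,'' such as it is, is purely bookkeeping: one has to verify that the $F=O(\log T)$ step survives the definition of $F$ when $1-\beta_2$ is allowed to be merely $\Omega(\log T/T)$ rather than $\Theta(\log T/T)$, which is handled by noting that $\eta^2 T^2 = O((L(\vx_0)-\min L)T/\sum_i H_i)$ is polynomial in $T$ under the $v_0$ assumption, and that $1/(1-\beta_2)=O(T/\log T)$ only contributes a lower-order polynomial factor inside the logarithm.
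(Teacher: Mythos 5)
Your proof is correct and follows exactly the route the paper itself intends (and sketches in the discussion just after Theorem~\ref{thm:main}): specialize Theorem~\ref{thm:main} to $\sigma_i=0,\epsilon=0$, use the $v_0$ hypothesis and $\tfrac{1}{1-\beta_2}=O(T/\log T)$ to get $F=O(\log T)$ and hence the $1+\tfrac{\beta_2 F}{T(1-\beta_2)}$ prefactor $O(1)$, balance the two remaining terms in $E$ with the stated $\eta$ to get $E=O(\sqrt{R/T})$, and absorb $\beta_2^{T/4}\le e^{-(1-\beta_2)T/4}$ into the decay term. There is no gap.
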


\Cref{cor:deterministic_optimal} almost recovers \Cref{thm:convergence_rate_signgd}, except for the smoothness constant. Specifically, it uses $\sup_{\vx}\|\nabla^2 L(\vx)\|_{1,1}$ which is larger than $\sup_{\vx}\|\nabla^2 L(\vx)\|_\infty$ in \Cref{thm:convergence_rate_signgd} as $\|\cdot\|_{1,1}\ge \|\cdot\|_{\infty}$ always holds. This gap is due to technical difficulty of analyzing \adam as mentioned in the beginning of \Cref{subsec:main_adam}.
\paragraph{Dependence on \texorpdfstring{$\epsilon$, $v_0$ and $\beta_2$}{hyperparameters}. }
While many previous works rely on the relatively large magnitude of $\epsilon$ compared to $\vv_t$ and give a bound in the regime of \sgd when the adaptive effect is dominated by the constant $\epsilon$~\citep{zaheer2018adaptive,de2018convergence}, our result actually prefers $\epsilon$ to be $0$ while maintaining the value of 
$v_0+\epsilon$. 
We also note the dependence of our bound in \Cref{thm:main} on $v_0$ is very mild and logarithmic. \Cref{thm:main} has similar convergence rates for all $v_0$ of magnitude at most $\poly(T)$, while most previous result only addresses the case where $v_{0,i}$ is at the scale of noise~\citep{li2024frac} or $0$. The main reason for this adaptivity to a wide range of $v_0$ is our specific choice of $\beta_2=  1-\Theta(\frac{\log T}{T})$, which allows the initial large $v_0$ to decay fast and resume normal training. Other existing results using $\beta_2 = 1-\Theta(1/T)$~\citep{defossez2022simple,li2024frac} cannot allow large initial value $v_0$ because $v_0$ only decays a constant fraction throughout the training and the effective learning rate will be too small.

\subsection{A unified analysis for blockwise \texorpdfstring{\adam}{Adam}}\label{sec:general_theory}
In this subsection, we present convergence analysis for a broader class of adaptive algorithms defined in \Cref{alg:blockwise_adam}. It can be viewed as a coarser version of \adam because it does pre-conditioning blockwisely instead of coordinate-wisely.  Since \adam and \adasgd can be viewed as special cases of blockwise \adam (\Cref{alg:blockwise_adam}) with $\Phi_{\mathtt{Adam}}:i\mapsto i$ and $\Phi_{\mathtt{AdaSGD}}:i\mapsto 1$ respectively, any convergence results for \Cref{alg:blockwise_adam} would imply convergence of \adam and \adasgd. Finally we also note that such blockwise \adam has been recently studied empirically by some concurrent work, where the algorithm is named by \adamini~\citep{zhang2024adam} and \adalayer~\citep{zhao2024deconstructing}. 
\begin{algorithm}[ht]
    \caption{Blockwise \adam}\label{alg:blockwise_adam}
    \begin{algorithmic}
        \HYPER{$\beta_1, \beta_2, \epsilon \geq 0$, block partition $\Phi:[d]\to[B]$, total steps $T$, learning rate schedule $\{\eta_t\}_{t=1}^T$, $\epsilon$, initial $\vm_0$, $v_0$. }
        \INPUT{initialization $\vx_0$, stochastic losses $\{L_t\}_{t=1}^T$}
        \State $v_{0,b} \gets v_0$ 
        \For{$t=1, 2, \cdots, T$}
        \State $g_{t,i} \gets \nabla_i L_t(\vx_{t-1})$
        \State $m_{t,i} \gets \beta_1 m_{t-1,i} + (1-\beta_1) g_{t,i}$
        \State $v_{t, b} \gets \beta_2 v_{t-1,b} + (1-\beta_2) \left(\sum_{\Phi(i)=b} g_{t,i}^2 \right)/d_b$
        \State $x_{t,i} \gets x_{t-1,i} -\eta_t \frac{m_{t,i}}{\sqrt{v_{t, \Phi(i)}+\epsilon}}$
        \EndFor
        \State \Return $\vx_T$
    \end{algorithmic}
\end{algorithm}

 We first introduce more notations. For a partition function $\Phi:[d] \rightarrow [B]$ where $B$ is the number of blocks, $(b)$ is defined as $\Phi^{-1}(b) = \{i | \Phi(i)=b \}$ and $d_b = \# (b)$, the number of parameters in block $b$. We define the vector $\vx_{(b)}$ as $[x_i]_{\Phi(i)=b}$ and the submatrix $\mathbf{A}_{(b), (b')}$ as $[A_{i,j}]_{\Phi(i)=b, \Phi(j) =b'}$.
\begin{definition}[$\Phi$-norm]
    We define the $(\infty,2)$-norm w.r.t. partition $\Phi$ of the vector $\vx$ as the $\ell_\infty$ norm of the vector $\left(\frac{\norm{\vx_{(b)}}_2}{\sqrt{d_b}}\right)_{b=1}^B$, which is $\max_{b\in [B] }\frac{\norm{\vx_{(b)}}_2}{\sqrt{d_b}}$. For convenience, we will denote it by $\norm{\vx}_\Phi$ or just call it $\Phi$-norm. We denote its dual norm by $\|\vx\|_{\Phi,*}$, which is equal to $\sum_{b=1}^B \sqrt{d_b} \norm{\vx_{(b)}}_2$. 
\end{definition}


We will need \Cref{defi:smoothness_to_partition} to characterize the smoothness of the loss function when analyzing general blockwise \adam. We note $\Phi_\mathtt{Adam}$-smoothness also appears in Assumption 2 in \cite{bernstein2018signsgd} to analyze \signgd. Previous and concurrent work~\citep{ene2021adaptive,liu2024adagrad,jiang2024convergence} employ the same assumption to analyze AdaGrad. We are the first to analyze \adam under such smoothness assumption. Moreover, we are the first to empirically measure it for loss functions in real task with theoretical guarantee. 
\begin{definition}[$\Phi$-smoothness]\label{defi:smoothness_to_partition}
We say a diagonal matrix $\mA$ follows partition $\Phi$ if and only if the diagonal elements of $\mA$ are constant within each block defined by $\Phi$, i.e., there are $a_1, \cdots, a_B\in\mathbb{R}$, such that $\mA_{i,i} = a_{\Phi(i)}$ for any $i \in [d]$. 

We say a twice-differentiable $L$ is $H$-smooth under partition $\Phi$ if there exists a diagonal matrix $\mA$ following partition $\Phi$ such that $H=\Tr(\mA)$ and $\mA$ dominates $\abs{\nabla^2 L(\vx)}$ for all $x$. 
We further define the $\Phi$-smoothness of loss $L$, denoted by $H(L,\Phi)$, as the smallest constant $H$ such that $L$ is $H$-smooth under parition $\Phi$, that is, 
\begin{equation}\label{eq:phi_smoothness}
\color{violet} H(L, \Phi) = \min_{\substack{\mA \text{ follows } \Phi \\ \mA \succeq \abs{\nabla^2 L(\vx)}, \forall \vx}} \Tr(\mA)
\end{equation}
\end{definition}

We note that $\Phi$-smoothness is different from the smoothness under $\Phi$-norm, where the latter is equal to $\sup_{\vx \in \mathbb{R}^d} \sup_{\norm{\vu}_\Phi \leq 1} \norm{\nabla^2 L(\vx) \vu}_{\Phi, *}$. For each $\vx$, it holds that 
\begin{align*}
\sup_{\norm{\vu}_\Phi \leq 1} \norm{\nabla^2 L(\vx) \vu}_{\Phi, *} =  \sup_{\norm{\vu}_\Phi \leq 1} \sup_{\norm{\vv}_\Phi \leq 1} \vv^\top\nabla^2 L(\vx) \vu = \sup_{\norm{\vu}_\Phi \leq 1} \abs{\vu^\top\nabla^2 L(\vx) \vu}.
\end{align*}
When $\mA \succeq \abs{\nabla^2 L(
\vx)}$, we have that $\sup_{\norm{\vu}_\Phi \leq 1} \abs{\vu^\top\nabla^2 L(\vx) \vu} \leq \sup_{\norm{\vu}_\Phi \leq 1} \vu^\top\mA \vu$. When diagonal $\mA$ follows partition $\Phi$, we have that $\sup_{\norm{\vu}_\Phi \leq 1} \vu^\top\mA \vu = \Tr(\mA)$. 
So $\Phi$-smoothness defined in \Cref{defi:smoothness_to_partition} is always no smaller than the smoothness under $\Phi$-norm. 

Another advantage with the definition is that the $H(L,\Phi)$ will always non-increase with a more fine-grained partition. 
For two partition functions $\Phi_1$ and $\Phi_2$, we say $\Phi_1$ includes $\Phi_2$ if and only if $\Phi_1(i)=\Phi_1(j)$ for any $i,j \in [d]$ such that $\Phi_2(i)=\Phi_2(j)$. If a diagonal matrix $\mA$ follows partition $\Phi_1$ and partition $\Phi_1$ includes $\Phi_2$, then $\mA$ also follows $\Phi_2$. 
So we have that $\{\mA \mid \mA \text{ follows } \Phi_1, \mA \succeq \abs{\nabla^2 L(\vx)} \text{ for all }\vx\} \subseteq \{\mA \mid \mA \text{ follows } \Phi_2, \mA \succeq \abs{\nabla^2 L(\vx)} \text{ for all }\vx\}$ and $H(L,\Phi_1)$ is no smaller than $H(L, \Phi_2)$. Since $\Phi_{\mathtt{AdaSGD}}$ includes any partition and any partition includes $\Phi_{\mathtt{Adam}}$, $H(L,\Phi_{\mathtt{AdaSGD}})$ is the largest and $H(L,\Phi_{\mathtt{Adam}})$ is the smallest among all the partitions $\Phi$. 

With generalized \Cref{ass:general_norm_noise} on noise, we can prove the result for blockwise \adam in \Cref{thm:main_general_norm}. 
\begin{assumption}[Generalized version of \Cref{ass:bounded_noise}]\label{ass:general_norm_noise}
There exists constant $\sigma_b$ such that
\begin{align*}
 \E \norm{ \nabla_{(b)} L_t(\vx) - \nabla_{(b)} L(\vx)}_2^2 \leq d_b \sigma_b^2   
\end{align*}
for any block $b\in [B], t\in \mathbb{N}$ and $\vx \in \mathbb{R}^d$.
\end{assumption}

\begin{restatable}[Main, Blockwise \adam]{theorem}{maingeneral}\label{thm:main_general_norm}
For a specific partition $\Phi$, we consider the updates defined in \Cref{alg:blockwise_adam}. 
Under \Cref{ass:general_norm_noise}, we have that 
\begin{align*}
\min_{\frac{T}{2} < t \leq T} \E \sum_{b=1}^B \sqrt{d_b} \norm{\nabla_{(b)} L(\vx_t)}_2&\leq 2\sqrt{2} E + \sqrt{2E} \sqrt{\frac{4\beta_2^{\frac{T}{4}}}{T(1-\beta_2)} d\sqrt{v_{0}} + \sum_{b=1}^B d_b \sigma_b + d \sqrt{\epsilon}}
\end{align*}
with 
\begin{align*}
    E&=\frac{2}{\eta T} \E \left[L(\vx_0)-L(\vx_T) \right]  
        +\left( 1  +\frac{\beta_2 F}{T(1-\beta_2)}\right) \left( \eta {\color{violet}H(L,\Phi)} + 2\sqrt{1-\beta_2} \sum_{b=1}^B d_b \sigma_b\right),
\end{align*}
and 
\begin{align*}
    F=2\ln\left(1+ \frac{ \sum_{b=1}^B \sigma_b^2 + \norm{\nabla L(\vx_0)}_{\Phi}^2 + \sum_{b\in[B]} H_b^2 d_b \eta^2 T(T + \frac{1}{1-\beta_2})} {v_0+\epsilon}\right) + \ln{32}. 
\end{align*}
\end{restatable}
In order to obtain \Cref{thm:main} from \Cref{thm:main_general_norm}, we rely on the following definition that generalizes \Cref{ass:lipshitz}. 
\begin{definition}[Generalized version of \Cref{ass:lipshitz}]\label{ass:general_norm_smoothness}
 For any partition function $\Phi:[d] \rightarrow [B]$ and $\mH = (H_1,\ldots,H_B)\in\mathbb{R}^B$, we say a function $L$ is \emph{$\mathbf{H}$-blockwisely-smooth w.r.t. $\Phi$-norm }, iff for any $b\in [B]$, $\vx, \vy \in \mathbb{R}^d$, 
    \begin{align*}
         \norm{\nabla_{(b)} L(\vx) - \nabla_{(b)} L(\vy) }_2 \leq \sqrt{d_b}H_b \norm{\vx-\vy}_{\Phi}
    \end{align*}
\end{definition}
The following \Cref{lem:second_order_general_norm} will show that $H(L,\Phi)$ is upper bounded by $\sum_{b=1}^B d_b H_b$ when $L$ is $\mH$-blockwisely-smooth w.r.t. $\Phi$-norm. The proof of \Cref{lem:second_order_general_norm} is deferred in \Cref{sec:proof_detail}. 
\begin{restatable}{lemma}{secondorder}\label{lem:second_order_general_norm}
For any twice differentiable loss which is $\mathbf{H}$-blockwisely-smooth w.r.t. $\Phi$-norm (\Cref{ass:general_norm_smoothness}), we have for any $\vx$ and $\vDelta \in \mathbb{R}^d$,
        \begin{align}
            \abs{\vDelta^\top \nabla^2 L(\vx) \vDelta} \leq \sum_{b=1}^B H_b \norm{\vDelta_{(b)}}_2^2.
        \end{align}
Then the diagonal matrix $\mA$ defined by $\mA_{i,i} = H_{\Phi(i)}$ follows partition $\Phi$ and always dominates $\abs{\nabla^2 L(\vx)}$. 

\end{restatable}

When $L$ is $(H_1, \dots, H_d)$-smooth coordinate-wisely w.r.t. $\ell_\infty$ norm, it satisfies $\diag(H_1, \dots, H_d) \succeq \abs{\nabla^2 L(\vx)}$ for all $\vx \in \mathbb{R}^d$ and $\diag(H_1, \dots, H_d)$ follows $\Phi_{\mathtt{Adam}}$ partition. 
So $H(L,\Phi_{\mathtt{Adam}})$ is at most $\sum_{i=1}^d H_i$ and we can directly derive \Cref{thm:main} from \Cref{thm:main_general_norm}. 
For $\Phi_{\mathtt{AdaSGD}}$ being the mapping $i \mapsto 1$, $H(L,\Phi_{\mathtt{AdaSGD}})$ is the same as the smoothness under $\Phi_{\mathtt{AdaSGD}}$-norm, whose value is $d \sup_{\vx \in \mathbb{R}^d}\norm{\nabla^2 L(\vx)}_2$. 

\paragraph{Different norms for smoothness. } As an implication of \Cref{thm:main_general_norm}, we can get analogs of \Cref{cor:deterministic,cor:deterministic_optimal,cor:stochastic} for \adasgd, with the corresponding noise and smoothness assumptions.
When the optimization is not noise-dominated and $\sqrt{\frac{R}{T}} = \sqrt{\frac{(L(\vx_0)-\min_{\vx} L(\vx))H(L,\Phi)}{T}}$ becomes the leading term, the choice of $\Phi$ now matters a lot. 
The key difference between \adasgd and \adam lies in the gap between $H(L,\Phi_{\mathtt{AdaSGD}})$ and $H(L,\Phi_{\mathtt{Adam}})$ and comparing these coefficients can provide insight into which algorithm may be more effective under different conditions.

Previous analyses of \adam's convergence \citep{shi2021rmsprop,defossez2022simple,li2024frac} usually assume smoothness under the $\ell_2$ norm and the rate of \adam ends up being identical to the rate of \adasgd, which fails to explain why \adam often performs better than \adasgd in practice. By adopting an $\ell_\infty$ norm smoothness assumption, the coefficient for \adam's convergence rate changes from $d \sup_{\vx}\|\nabla^2 L(\vx)\|_2$ to $H(L,\Phi_{\mathtt{Adam}})$, where the latter is typically much smaller when \adam optimizes faster because $\sup_\vx \norm{\nabla^2 L(\vx)}_{1,1}$ is much smaller than $d \sup_\vx \norm{\nabla^2 L(\vx)}_2$. 

Finally, we note that the $\Phi_\mathtt{Adam}$-smoothness $H(L,\Phi_{\mathtt{Adam}})$ is not rotation-invariant in the sense that $H(L,\Phi_{\mathtt{Adam}})\neq H(L\circ \mathcal{R},\Phi_{\mathtt{Adam}})$ for a typical rotation $\mathcal{R}$. In practice, the $(1,1)$-norm of Hessian matrix can vary a lot when a rotation is performed on the loss as shown in \Cref{sec:exp_quadratic}. In contrast, $\Phi_\mathtt{AdaSGD}$-smoothness $H(L,\Phi_{\mathtt{AdaSGD}})$ is invariant under loss rotations.

\subsection{Proof sketch of \texorpdfstring{\Cref{thm:main_general_norm}}{Theorem 3.11}}\label{sec:proof_sketch}
We will use $\bar{\vg}_t= \E [\vg_t|\vx_{<t}]=\nabla L(\vx_{t-1})$ to denote the full batch gradient. We start by considering the decrease of $L(\vx_t)$ in a single step $t$. We can upper bound the second order term in the Taylor expansion (\Cref{eq:taylor}) with $\mH \succeq \nabla^2 L(\vx)$ that achieves $\Tr(\mH) = H(L, \Phi)$. Then we can get 
\begin{align}
    L(\vx_{t})-L(\vx_{t-1})
        &\le -\eta \sum_{i=1}^d \frac{g_{t,i} \bar{g}_{t,i}}{\sqrt{v_{t,\Phi(i)}+\epsilon}} +\frac{1}{2}\eta^2 \sum_{b=1}^B H_b \frac{\norm{\vg_{t,(b)}}_2^2 }{v_{t,b}+\epsilon} \label{eq:taylor}\\
        &
        \le -\eta \sum_{b=1}^B \frac{\vg_{t,(b)}^\top \bar{\vg}_{t,(b)}}{\sqrt{v_{t,b}+\epsilon}} +\frac{1}{2}\eta^2 \sum_{b=1}^B H_b d_b \frac{\norm{\vg_{t,(b)}}_2^2/d_b }{v_{t,b}+\epsilon} \label{eq:taylor2}
\end{align}
The proof contains two main parts: lower bounding the first order term using $\|\bar{\vg}_t\|_{\Phi,*}$ and upper bounding the second order term. Below we address the second-order term first. As mentioned in \Cref{subsec:main_adam}, the second order term $\frac{\norm{\vg_{t,(b)}}_2^2/d_b}{v_{t,b}+\epsilon}$ can be as large as $\frac{1}{1-\beta_2}$ for one step. But we can employ \Cref{lem:momentum_ratio} to bound the sum by $T + \frac{\beta_2}{1-\beta_2} \ln{\frac{v_{T,b}+\epsilon}{v_{0,b}+\epsilon}}$ rather than $\frac{T}{1-\beta_2}$, where we set $v_t\triangleq v_{t,b}$ and $g_t \triangleq \norm{\vg_{t,(b)}}_2/\sqrt{d_b}$. 
\begin{restatable}[]{lemma}{momentumratio}\label{lem:momentum_ratio}
    Given any $0 <\beta_2 <1$, for any scalar sequences $\{v_t\}_{t=0}^T$ and $\{g_t\}_{t=1}^T$ satisfy that $v_0 \geq 0, v_1 >0$ and $v_t-\beta_2 v_{t-1} \geq (1-\beta_2)g_t^2$ for $t \geq 1$, it holds that 
    \begin{equation}
        \sum_{t=1}^{T}\frac{g_t^2}{v_t} \leq T + \frac{\beta_2}{1-\beta_2}\ln{\frac{v_T}{v_0}}.
    \end{equation}
\end{restatable}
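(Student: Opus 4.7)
The plan is to prove a per-step inequality that telescopes to the stated bound. The hypothesis $v_t - \beta_2 v_{t-1} \geq (1-\beta_2) g_t^2$ can be rearranged (dividing by $(1-\beta_2) v_t$, which is positive for $t \geq 1$) to
\[
\frac{g_t^2}{v_t} \;\leq\; \frac{1}{1-\beta_2} - \frac{\beta_2}{1-\beta_2} \cdot \frac{v_{t-1}}{v_t}.
\]
My goal is to upgrade the right-hand side to $1 + \frac{\beta_2}{1-\beta_2} \ln(v_t/v_{t-1})$. Once this is done, summing over $t=1,\dots,T$ gives $\sum_t g_t^2/v_t \leq T + \frac{\beta_2}{1-\beta_2} \sum_t \ln(v_t/v_{t-1})$, and the logarithmic sum telescopes to $\ln(v_T/v_0)$, yielding exactly the claimed bound.

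The key step is therefore the scalar inequality needed to pass from a linear bound in $v_{t-1}/v_t$ to a logarithmic bound. Subtracting $1$ from both sides and multiplying through by $(1-\beta_2)/\beta_2$, the required inequality reduces to $1 \leq u + \ln(1/u)$ where $u = v_{t-1}/v_t > 0$, equivalently $\ln u \geq 1 - 1/u$ for $u > 0$. This is a standard convexity inequality (it follows by checking that the function $f(u) = 1/u + \ln u$ has unique minimum at $u=1$ with $f(1)=1$, via $f'(u) = (u-1)/u^2$), and it is the only analytic input of the proof. Its role mirrors the classical $\ln(1+z) \leq z$ step that shows up in AdaGrad-style analyses.

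I expect no substantial obstacles, since the proof is a one-paragraph calculation anchored on that scalar inequality plus telescoping. The only point that needs a brief comment is the boundary case $v_0 = 0$: the statement only assumes $v_0 \geq 0$ and $v_1 > 0$, so if $v_0 = 0$ the right-hand side $T + \frac{\beta_2}{1-\beta_2}\ln(v_T/v_0)$ is $+\infty$ and the conclusion is vacuous, while for $t \geq 2$ positivity propagates from $v_1 > 0$ via $v_t \geq \beta_2 v_{t-1}$ so the per-step argument still applies term by term. No other edge cases arise, and the monotonicity $v_t \geq \beta_2 v_{t-1}$ (implicit in the hypothesis since $g_t^2 \geq 0$) is used only to ensure the logarithms and ratios are well-defined.
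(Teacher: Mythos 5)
Your proposal is correct and follows essentially the same route as the paper: rearrange the hypothesis to $\tfrac{g_t^2}{v_t}\le 1+\tfrac{\beta_2}{1-\beta_2}\bigl(1-\tfrac{v_{t-1}}{v_t}\bigr)$, apply the concavity bound $1-x\le\ln(1/x)$ for $x>0$ (which is exactly the inequality you derive from $f(u)=1/u+\ln u\ge 1$ after the substitution $u\leftrightarrow 1/u$), and telescope the logarithms. The one small divergence is the edge case $v_0=0$: you observe the stated bound is vacuously true because the right-hand side is $+\infty$, which is in fact a cleaner justification of the lemma \emph{as stated} than the paper's, which instead peels off the $t=1$ term, bounds it by $\tfrac{1}{1-\beta_2}$, and derives a finite but differently-normalized bound with $v_1/e$ in place of $v_0$.
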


Now we turn to the first term. Ideally, for each block $b\in[B]$, we would like to connect the first order term to $\|\bar \vg_{t,(b)}\|_2^2$ by taking expectation, i.e., $\mathbb{E}_t  \frac{\vg_{t,(b)}^\top \bar{\vg}_{t,(b)}}{\sqrt{v_{t,b}+\epsilon}} \approx  \frac{\mathbb{E}_t \vg_{t,(b)}^\top \bar{\vg}_{t,(b)}}{\sqrt{v_{t,b}+\epsilon}} = \frac{\|\bar \vg_{t,(b)}\|_2^2}{\sqrt{v_{t,b}+\epsilon}}$, where we use $\mathbb{E}_t [\cdot]$ as abbreviation for $\mathbb{E}[\cdot|\vx_{<t}]$. However, this is not correct because both the numerator and denominator in $\frac{\vg_{t,(b)}^\top \bar{\vg}_{t,(b)}}{\sqrt{v_{t,b}+\epsilon}}$ depend on the stochastic gradient $\vg_t$. To circumvent this difficulty, we lower bound each conditional expectation $\mathbb{E}_t \frac{\vg_{t,(b)}^\top \bar{\vg}_{t,(b)}}{\sqrt{v_{t,b}+\epsilon}}$ by $\frac{\mathbb{E}_t \vg_{t,(b)}^\top \bar{\vg}_{t,(b)}}{2\sqrt{\mathbb{E}_t v_{t,b}+\epsilon}}$, minus error terms related to noise magnitude $\sigma_{b}$.
We can further have $\frac{\mathbb{E}_t \vg_{t,(b)}^\top \bar{\vg}_{t,(b)}}{\sqrt{\mathbb{E}_t v_{t,b}+\epsilon}}  \ge  \frac{\norm{\bar{\vg}_{t,(b)}}_2^2}{\sqrt{\Tilde{v}_{t,b}+\epsilon}}$, where $\Tilde{v}_{t,b} = \beta_2 v_{t-1,b} +(1-\beta_2) \left( \norm{\bar{\vg}_{t,(b)}}_2^2/d_b + \sigma_b^2\right)$. This leads to \Cref{lem:first_order_approx_rmsprop} whose proof is deferred to \Cref{sec:proof_detail}. 

\begin{restatable}[first-order approximation]{lemma}{firstorder}\label{lem:first_order_approx_rmsprop} 
With \Cref{ass:general_norm_noise}, it holds that for any block $b \in [B]$
\begin{align}
\E \sum_{t=1}^T \frac{\vg_{t,(b)}^\top \bar{\vg}_{t,(b)}}{\sqrt{v_{t,b}+\epsilon}} &\geq \frac{1}{2} \E \sum_{t=1}^T \frac{\norm{\bar{\vg}_{t,(b)}}_2^2}{\sqrt{\tilde{v}_{t,b}+\epsilon}}
-\sqrt{1-\beta_2}T d_b \sigma_b  - \frac{d_b \sigma_b \beta_2}{\sqrt{1-\beta_2}} \E \left[\ln{\frac{v_{T,b}+\epsilon}{v_{0,b}+\epsilon}} \right]. 
\end{align}
\end{restatable}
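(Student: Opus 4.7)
The plan is to decouple the stochastic $\vg_{t,(b)}$ in the numerator from the stochastic denominator $v_{t,b}$ by replacing $v_{t,b}$ with the conditionally-deterministic surrogate $\tilde v_{t,b}$, and then to absorb the resulting substitution error into the target bound via Young's inequality. I will decompose each summand as
\[
\frac{\vg_{t,(b)}^\top \bar\vg_{t,(b)}}{\sqrt{v_{t,b}+\epsilon}} = \frac{\norm{\bar\vg_{t,(b)}}_2^2}{\sqrt{v_{t,b}+\epsilon}} + \frac{\bm{\xi}_{t,(b)}^\top \bar\vg_{t,(b)}}{\sqrt{v_{t,b}+\epsilon}},
\]
where $\bm{\xi}_{t,(b)} := \vg_{t,(b)} - \bar\vg_{t,(b)}$ satisfies $\E_t \bm{\xi}_{t,(b)} = 0$ and $\E_t \norm{\bm{\xi}_{t,(b)}}_2^2 \le d_b\sigma_b^2$ by \Cref{ass:general_norm_noise}.

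For the quadratic term I will apply Jensen's inequality to the convex map $x\mapsto 1/\sqrt{x+\epsilon}$, using $\E_t v_{t,b}\le \tilde v_{t,b}$ from the noise bound, to obtain $\E_t[\norm{\bar\vg_{t,(b)}}_2^2/\sqrt{v_{t,b}+\epsilon}] \ge \norm{\bar\vg_{t,(b)}}_2^2/\sqrt{\tilde v_{t,b}+\epsilon}$. For the cross term I add and subtract the surrogate-denominator version,
\[
\frac{\bm{\xi}_{t,(b)}^\top \bar\vg_{t,(b)}}{\sqrt{v_{t,b}+\epsilon}} = \frac{\bm{\xi}_{t,(b)}^\top \bar\vg_{t,(b)}}{\sqrt{\tilde v_{t,b}+\epsilon}} + \bm{\xi}_{t,(b)}^\top \bar\vg_{t,(b)}\left(\frac{1}{\sqrt{v_{t,b}+\epsilon}} - \frac{1}{\sqrt{\tilde v_{t,b}+\epsilon}}\right).
\]
The first summand vanishes under $\E_t[\cdot]$ because $\tilde v_{t,b}$ and $\bar\vg_{t,(b)}$ are $\vx_{<t}$-measurable, leaving only the substitution error.

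To bound the substitution error I will use the identity $\tfrac{1}{\sqrt a}-\tfrac{1}{\sqrt b}=\tfrac{b-a}{\sqrt{ab}(\sqrt a+\sqrt b)}$ together with the explicit form $\tilde v_{t,b}-v_{t,b}=(1-\beta_2)(\norm{\bar\vg_{t,(b)}}_2^2/d_b+\sigma_b^2-\norm{\vg_{t,(b)}}_2^2/d_b)$, followed by Cauchy--Schwarz $|\bm{\xi}^\top\bar\vg|\le \norm{\bm{\xi}}_2\norm{\bar\vg}_2$. Applying Young's inequality with a weight calibrated to $\norm{\bar\vg_{t,(b)}}_2^2/\sqrt{\tilde v_{t,b}+\epsilon}$ lets me absorb exactly one half of the main target into the cross-term error (this is the source of the $1/2$ prefactor in the lemma). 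The remaining residual splits into (i) a per-step piece of order $\sqrt{1-\beta_2}\,d_b\sigma_b$ that sums to the first error term $\sqrt{1-\beta_2}\,T d_b\sigma_b$, and (ii) a piece of order $\tfrac{d_b\sigma_b\beta_2}{\sqrt{1-\beta_2}}\cdot\tfrac{v_{t,b}-\beta_2 v_{t-1,b}}{v_{t,b}+\epsilon}$ that telescopes via the log estimate underlying \Cref{lem:momentum_ratio} to the final term $\tfrac{d_b\sigma_b\beta_2}{\sqrt{1-\beta_2}}\ln\tfrac{v_{T,b}+\epsilon}{v_{0,b}+\epsilon}$.

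The hard part will be the calibrated Young decomposition: the weight must be tuned so that exactly one half of $\norm{\bar\vg_{t,(b)}}_2^2/\sqrt{\tilde v_{t,b}+\epsilon}$ is absorbed (not more, not less), and so that the residual cleanly separates into a per-step piece scaling as $\sqrt{1-\beta_2}$ and a log-telescoping piece scaling as $1/\sqrt{1-\beta_2}$. In particular, the $(1-\beta_2)$ powers appearing in $|\tilde v_{t,b}-v_{t,b}|$ must combine with the $\sqrt{v_{t,b}+\epsilon}$ factor so that, after Cauchy--Schwarz, the resulting bound takes the form $(\text{constant})\cdot(v_{t,b}-\beta_2 v_{t-1,b})/(v_{t,b}+\epsilon)$ for the telescoping step to go through; this bookkeeping is the main technical obstacle.
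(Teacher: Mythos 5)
Your plan diverges from the paper's at the very first step, and that divergence creates a gap at the telescoping stage. The paper splits
\[
\frac{\vg_{t,(b)}^\top\bar\vg_{t,(b)}}{\sqrt{v_{t,b}+\epsilon}}
=\frac{\vg_{t,(b)}^\top\bar\vg_{t,(b)}}{\sqrt{\tilde v_{t,b}+\epsilon}}
+\vg_{t,(b)}^\top\bar\vg_{t,(b)}\Bigl(\tfrac{1}{\sqrt{v_{t,b}+\epsilon}}-\tfrac{1}{\sqrt{\tilde v_{t,b}+\epsilon}}\Bigr),
\]
keeping the product $g_{t,i}\bar g_{t,i}$ intact in the error term. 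This matters because the Young/AM--GM split then naturally factors coordinate-wise into a $\bar g_{t,i}^2/\sqrt{\tilde v_{t,b}+\epsilon}$ piece (absorbed as the target $1/2$) and a $g_{t,i}^2/(v_{t,b}+\epsilon)$ piece; the latter sums over $i\in(b)$ to $\norm{\vg_{t,(b)}}_2^2/(v_{t,b}+\epsilon)$, which is exactly $d_b(v_{t,b}-\beta_2 v_{t-1,b})/\bigl((1-\beta_2)(v_{t,b}+\epsilon)\bigr)$ and hence telescopes via \Cref{lem:momentum_ratio}.

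You instead split $\vg_{t,(b)}^\top\bar\vg_{t,(b)}=\norm{\bar\vg_{t,(b)}}_2^2+\bm\xi_{t,(b)}^\top\bar\vg_{t,(b)}$ and handle the cross term by $|\bm\xi_{t,(b)}^\top\bar\vg_{t,(b)}|\le\norm{\bm\xi_{t,(b)}}_2\norm{\bar\vg_{t,(b)}}_2$. The Young split calibrated to absorb $\tfrac12\norm{\bar\vg_{t,(b)}}_2^2/\sqrt{\tilde v_{t,b}+\epsilon}$ then leaves a residual whose numerator is $\norm{\bm\xi_{t,(b)}}_2^2$, not $\norm{\vg_{t,(b)}}_2^2$. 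This is not a multiple of $v_{t,b}-\beta_2 v_{t-1,b}$, so step (ii) of your plan --- claiming the residual telescopes via the log estimate underlying \Cref{lem:momentum_ratio} --- does not go through: that lemma requires $v_t-\beta_2 v_{t-1}\ge(1-\beta_2)g_t^2$ for the very quantity appearing in the numerator. You also cannot take a conditional expectation to replace $\norm{\bm\xi_{t,(b)}}_2^2$ by $d_b\sigma_b^2$, because the denominator $v_{t,b}+\epsilon$ depends on the step-$t$ noise as well. Bounding $\norm{\bm\xi}_2^2\le 2\norm{\vg}_2^2+2\norm{\bar\vg}_2^2$ salvages the $\norm{\vg}_2^2$ part but leaves an unabsorbable $\norm{\bar\vg_{t,(b)}}_2^2/(v_{t,b}+\epsilon)$ term. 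The cleanest fix is to abandon the $\norm{\bar\vg}_2^2+\bm\xi^\top\bar\vg$ split and instead use the paper's decomposition where $\sqrt{v}\leftrightarrow\sqrt{\tilde v}$ is swapped under the full product $g\bar g$, so that the $g$-factor and $\bar g$-factor align with the stochastic and deterministic denominators respectively.

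The rest of your outline is sound: the Jensen step for the quadratic term is valid (convexity of $x\mapsto 1/\sqrt{x+\epsilon}$ together with $\E_t v_{t,b}\le\tilde v_{t,b}$), the reciprocal-square-root identity and the factorization $\tilde v-v=(1-\beta_2)(a-b)=(1-\beta_2)(\sqrt a-\sqrt b)(\sqrt a+\sqrt b)$ followed by $\sqrt{v+\epsilon}+\sqrt{\tilde v+\epsilon}\ge\sqrt{1-\beta_2}(\sqrt b+\sqrt a)$ are exactly what the paper uses, and the final bound $\E_t[(\sqrt a-\sqrt b)^2]\le 2\sigma_b\sqrt a$ with $\sqrt{\tilde v+\epsilon}\ge\sqrt{(1-\beta_2)a}$ is the right bookkeeping. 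The only structural error is the choice of decoupling and the resulting $\norm{\bm\xi}_2^2$ in the residual.
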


Combining \Cref{lem:momentum_ratio,lem:first_order_approx_rmsprop} and \Cref{eq:taylor2} yields an upper bound for $\E \sum_{t=1}^T \sum_{b=1}^B  \frac{\norm{\bar{\vg}_{t,(b)}}_2^2}{\sqrt{\tilde{v}_{t,b}+\epsilon}}$ involving initial suboptimality gap, noise and smoothness.
Finally, we employ Cauchy inequality~(\Cref{eq:cauchy_sketch}) and \Cref{lem:denominator} to upper bound $\norm{\bar{\vg}_{t} }_{\Phi, *}$ using $\E \sum_{t=\frac{T}{2}+1}^T \sum_{b=1}^B  \frac{\norm{\bar{\vg}_{t,(b)}}_2^2}{\sqrt{\tilde{v}_{t,b}+\epsilon}}$. This completes the proof.

\begin{align}\label{eq:cauchy_sketch}
    \sum_{t=\frac{T}{2}+1}^T\norm{\bar{\vg}_t}_{\Phi, *}=\sum_{t=\frac{T}{2}+1}^T \sum_{b=1}^B \sqrt{d_b} \norm{\bar{\vg}_{t,(b)}}_2 \leq \sqrt{\sum_{t=\frac{T}{2}+1}^T\sum_{b=1}^B \frac{\norm{\bar{\vg}_{t,(b)}}_2^2}{\sqrt{\Tilde{v}_{t,b}+\epsilon}}} \sqrt{\sum_{t=\frac{T}{2}+1}^T \sum_{b=1}^B d_b \sqrt{\Tilde{v}_{t,b}+\epsilon}}. 
\end{align}

\begin{restatable}{lemma}{denominator}\label{lem:denominator}
With \Cref{ass:general_norm_noise}, it holds that for any block $b \in [B]$
    \begin{align}
        \sum_{t=\frac{T}{2}+1}^T \E \left[\sqrt{\tilde{v}_{t,b}+\epsilon} \right] &\leq \frac{2\beta_2^\frac{T}{4}}{1-\beta_2} \sqrt{v_{0,b}} + \frac{T}{2} \sigma_b + \frac{T}{2} \sqrt{\epsilon} + 2 \sum_{t=1}^T \E \left[\frac{\norm{\bar{\vg}_{t,(b)}}_2^2/d_b}{\sqrt{\tilde{v}_{t,b}+\epsilon}} \right]. 
    \end{align}
\end{restatable}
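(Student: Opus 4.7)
The plan is to decompose $\sqrt{\tilde{v}_{t,b}+\epsilon}$ via subadditivity of the square root and then handle each piece separately. From the definition of $\tilde{v}_{t,b}$ one has
\[
\tilde{v}_{t,b}+\epsilon = \beta_2 v_{t-1,b} + (1-\beta_2)\norm{\bar{\vg}_{t,(b)}}_2^2/d_b + (1-\beta_2)\sigma_b^2 + \epsilon,
\]
so subadditivity of $\sqrt{\cdot}$ gives
\[
\sqrt{\tilde{v}_{t,b}+\epsilon} \leq \sqrt{\beta_2}\,\sqrt{v_{t-1,b}} + \sqrt{1-\beta_2}\,\norm{\bar{\vg}_{t,(b)}}_2/\sqrt{d_b} + \sqrt{1-\beta_2}\,\sigma_b + \sqrt{\epsilon}.
\]
The last two terms, summed over $t \in \{T/2+1,\dots,T\}$, immediately contribute the $\tfrac{T}{2}\sigma_b + \tfrac{T}{2}\sqrt{\epsilon}$ pieces after using $\sqrt{1-\beta_2}\leq 1$.

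For the current-gradient piece, I would apply a Young-type inequality $\sqrt{AB}\leq \tfrac{A+B}{2}$ with $A = \norm{\bar{\vg}_{t,(b)}}_2^2/(d_b\sqrt{\tilde{v}_{t,b}+\epsilon})$ and $B = (1-\beta_2)\sqrt{\tilde{v}_{t,b}+\epsilon}$, which rewrites
\[
\sqrt{1-\beta_2}\,\norm{\bar{\vg}_{t,(b)}}_2/\sqrt{d_b} \leq \tfrac{1}{2}\tfrac{\norm{\bar{\vg}_{t,(b)}}_2^2/d_b}{\sqrt{\tilde{v}_{t,b}+\epsilon}} + \tfrac{1-\beta_2}{2}\sqrt{\tilde{v}_{t,b}+\epsilon},
\]
and the residual $\tfrac{1-\beta_2}{2}\sqrt{\tilde{v}_{t,b}+\epsilon}$ will be absorbed back into the LHS. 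The factor $2$ in front of $\sum_{t=1}^T\E\tfrac{\norm{\bar\vg_{t,(b)}}_2^2/d_b}{\sqrt{\tilde v_{t,b}+\epsilon}}$ in the target is what is left after this absorption together with trivially extending $\sum_{t=T/2+1}^T$ to $\sum_{t=1}^T$.

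For the history piece $\sqrt{\beta_2}\,\sqrt{v_{t-1,b}}$, the critical step is conditional Jensen. Since $v_{t-1,b} = \beta_2 v_{t-2,b} + (1-\beta_2)\norm{\vg_{t-1,(b)}}_2^2/d_b$ and \Cref{ass:general_norm_noise} gives $\E[\norm{\vg_{t-1,(b)}}_2^2\mid \mathcal{F}_{t-2}]\leq \norm{\bar{\vg}_{t-1,(b)}}_2^2 + d_b\sigma_b^2$, we obtain $\E[v_{t-1,b}\mid\mathcal{F}_{t-2}]\leq \tilde{v}_{t-1,b}$, and then by Jensen applied to the concave square root, $\E\sqrt{v_{t-1,b}} \leq \E\sqrt{\tilde{v}_{t-1,b}+\epsilon}$. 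Substituting this yields a one-step recursion of the form $S_t \leq \sqrt{\beta_2}\,S_{t-1} + G_t + \sqrt{1-\beta_2}\,\sigma_b + \sqrt{\epsilon}$ on $S_t := \E\sqrt{\tilde{v}_{t,b}+\epsilon}$ and $G_t := \E\tfrac{\norm{\bar{\vg}_{t,(b)}}_2^2/d_b}{\sqrt{\tilde{v}_{t,b}+\epsilon}}$. Unrolling this recursion, the $\beta_2^{t/2}\sqrt{v_{0,b}+\epsilon}$ initial term is controlled by the geometric tail $\sum_{t=T/2+1}^T \beta_2^{t/2} \leq \tfrac{\beta_2^{T/4}}{1-\sqrt{\beta_2}}\leq \tfrac{2\beta_2^{T/4}}{1-\beta_2}$, reproducing the $\tfrac{2\beta_2^{T/4}}{1-\beta_2}\sqrt{v_{0,b}}$ term in the claim.

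The hardest part will be bookkeeping the constants so that the coefficient of $\sum_{t=1}^T G_t$ comes out at $2$ and the coefficients of $\sigma_b$ and $\sqrt{\epsilon}$ at $\tfrac{T}{2}$, without any $1/(1-\beta_2)$ or $1/\sqrt{1-\beta_2}$ blow-up from the interchange of summation in the unrolled recursion. In particular, the conditional-Jensen reduction $\E\sqrt{v_{t-1,b}}\leq \E\sqrt{\tilde{v}_{t-1,b}+\epsilon}$ (applied one step at a time) is essential: a naive full unrolling $v_{t-1,b} = \beta_2^{t-1}v_{0,b} + (1-\beta_2)\sum_{s=1}^{t-1}\beta_2^{t-1-s}\norm{\vg_{s,(b)}}_2^2/d_b$ would force a second square-root-subadditivity pass on the whole history and introduce a stray $1/\sqrt{1-\beta_2}$ factor from swapping the double sum against the geometric weights $\beta_2^{(t-s)/2}$, which would inflate the $\tfrac{T}{2}\sigma_b$ coefficient to a much larger $\tfrac{T\sigma_b}{\sqrt{1-\beta_2}}$ and ruin the bound.
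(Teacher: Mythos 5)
Your overall plan (one-step backward recursion on $S_t := \E\sqrt{\tilde v_{t,b}+\epsilon}$, conditional Jensen via Assumption~\ref{ass:general_norm_noise} to pass from $\E\sqrt{v_{t-1,b}}$ to $S_{t-1}$, then a geometric sum) is the same skeleton the paper uses, but there is a genuine gap in the very first step, and your bookkeeping worry at the end is exactly where it bites.

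The problem is the subadditivity bound $\sqrt{\tilde v_{t,b}+\epsilon}\le \sqrt{\beta_2}\sqrt{v_{t-1,b}}+\sqrt{1-\beta_2}\,\|\bar\vg_{t,(b)}\|_2/\sqrt{d_b}+\sqrt{1-\beta_2}\,\sigma_b+\sqrt{\epsilon}$. This puts a coefficient $\sqrt{1-\beta_2}$ (not $1-\beta_2$) in front of the current-gradient piece, and your Young step converts it into $\tfrac12 G_t+\tfrac{1-\beta_2}{2}S_t$, i.e.\ a coefficient $\tfrac12$ on $G_t$, again \emph{without} a factor of $1-\beta_2$. After absorbing $\tfrac{1-\beta_2}{2}S_t$ into the left side, the recursion reads $\tfrac{1+\beta_2}{2}S_t\le\sqrt{\beta_2}\,S_{t-1}+\tfrac12 G_t+\sqrt{1-\beta_2}\,\sigma_b+\sqrt{\epsilon}$, whose contraction ratio is $\alpha=\tfrac{2\sqrt{\beta_2}}{1+\beta_2}$ with $1-\alpha=\tfrac{(1-\sqrt{\beta_2})^2}{1+\beta_2}=\Theta\bigl((1-\beta_2)^2\bigr)$. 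Summing $t$ over $(T/2,T]$ and unrolling, the $\sum_t G_t$ term picks up a factor $\Theta\bigl(1/(1-\beta_2)^2\bigr)$ and the $T\sigma_b$ term a factor $\Theta\bigl(1/(1-\beta_2)^{3/2}\bigr)$ --- both far from the claimed $2$ and $\tfrac12$. The one-step Jensen does not save you: the lossy $\sqrt{1-\beta_2}$ is incurred every step of the recursion and compounds through the near-unit geometric ratio $\alpha$, so the blow-up you were hoping to avoid appears anyway.

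The paper avoids this by never invoking square-root subadditivity. Instead it writes $\sqrt{A+B}=\tfrac{A}{\sqrt{A+B}}+\tfrac{B}{\sqrt{A+B}}$ with $B=(1-\beta_2)\|\bar\vg_{t,(b)}\|_2^2/d_b$ and $A=\beta_2 v_{t-1,b}+(1-\beta_2)\sigma_b^2+\epsilon$, bounds $\tfrac{A}{\sqrt{A+B}}\le\sqrt{A}$, and keeps the second piece \emph{exactly} equal to $(1-\beta_2)G_t$. In the backward pass ($s\to s+1$) the same trick, applied under the conditional expectation that replaces $\|\vg_{t-s,(b)}\|_2^2$ by $\|\bar\vg_{t-s,(b)}\|_2^2+d_b\sigma_b^2$, produces a contribution $\sqrt{\beta_2^s}\,(1-\beta_2)\,G_{t-s}$. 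After summing over $t$ the coefficient of $\sum_t G_t$ is then $(1-\beta_2)\sum_{s\ge0}\sqrt{\beta_2^s}=\tfrac{1-\beta_2}{1-\sqrt{\beta_2}}=1+\sqrt{\beta_2}\le 2$, and the $\sigma_b$, $\sqrt{\epsilon}$ and $v_{0,b}$ terms come out as telescoped constants rather than geometric sums, which is exactly how the coefficients $\tfrac{T}{2}$, $\tfrac{T}{2}$ and $\tfrac{2\beta_2^{T/4}}{1-\beta_2}$ arise. You should replace the subadditivity step with the $\tfrac{A}{\sqrt{A+B}}+\tfrac{B}{\sqrt{A+B}}$ split to preserve the crucial $(1-\beta_2)$ factor in front of $G_t$.
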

\section{Experiments}\label{sec:exp}

In order to empirically investigate and confirm the implications of our proposed theory, we compare the training performance of \adam with \adasgd, \sgd and \rotatedadam on multiple different tasks. In \adam and its variants (including AdaSGD) we set $(\beta_1, \beta_2) = (0.9, 0.99)$ for experiments on the quadratic loss~(\Cref{sec:exp_quadratic}) and ResNet18~(\Cref{sec:exp_resnet}) and $(\beta_1, \beta_2) = (0.9, 0.95)$ for experiments on GPT-2~(\Cref{sec:exp_gpt2}). Momentum in SGD is also set to $0.9$. Weight decay is always deactivated.  

\subsection{Quadratic loss}\label{sec:exp_quadratic}
We perform controlled experiments on quadratic loss to study the relationship between optimization speed of \adam and the shape of Hessian in terms of $\Phi_{\mathtt{Adam}}$-smoothness.
More specifically, we consider $\Sigma=\text{diag}(\underbrace{1, \cdots, 1}_{10}, 1, \frac{1}{2^2}, \frac{1}{3^2}, \cdots, \frac{1}{990^2}) \in \mathbb{R}^{1000 \times 1000}$ and manually generate orthogonal matrices $\mathcal{R}_i$ in the following way. We first sample $\mathcal{M} \in \mathbb{R}^{d \times d}$ where $\mathcal{M}_{i,j}$ is i.i.d. sampled from $N(0,1)$. Then $\mathcal{A} = \mathcal{M} - \mathcal{M}^\top$ is a skew-symmetric matrix and $\exp{(t\mathcal{A})}$ represents a continuous family of matrices. We define $\mathcal{R}_i = \exp{(t_i \mathcal{A})}$ for different $t_i$. When $t_i=0$, we know $\mathcal{R}_i=I$. When $t_i\to \infty$, $\mathcal{R}_i$ converges to a random orthogonal matrix in distribution. We pick $t_1=0.002, t_2=0.008, t_3=0.015, t_4=0.1$ for our experiments. 

Then we optimize $L_0(\vx) = \frac{1}{2}\vx^\top \Sigma \vx$ with \adasgd and \adam and optimize $L_i(\vx) = \frac{1}{2} \vx^\top \mathcal{R}_i^\top \Sigma \mathcal{R}_i \vx$ with \adam for $100$ steps. Because \adasgd is rotation-equivariant, the optimization process of \adasgd is the same on all $L_i$. The initial $\vx_0$ is decided by sampling from $\text{Unif}([0,1]^{1000})$ when there is no rotation. When $\mathcal{R}_i$ is not identity matrix, we will start training from $\mathcal{R}_i^\top \vx_0$ to ensure the initial loss values are the same across different rotations. We tune learning rates for each setting with $10$ random seeds and present their lowest average loss with standard deviation in \Cref{tab:quadratic}. 

We find a clear pattern that \adam optimizes worse when the $(1,1)$-norm of Hessian matrix increases, as suggested by our \Cref{cor:deterministic}. Moreover, when $(1,1)$-norm divided by dimension is smaller than spectral norm, \adam tends to optimize faster than \adasgd, as suggested by our \Cref{thm:main_general_norm}. 
\begin{table}[ht]
\centering
\begin{tabular}{l| c c c}
    \toprule
   \textbf{Optimizer} & \textbf{$(1, 1)$-norm$/d$} & \textbf{Loss} ($\beta_1 = \beta_2 = 0$) & \textbf{Loss} ($\beta_1 = 0.9, \beta_2 = 0.99$) \\
    \midrule
    AdaSGD      & $0.01164$ & $0.00887 \pm 0.00119$ & $0.00405 \pm0.00021$ \\
    Adam        & $0.01164$ & $0.00022 \pm 0.00007$ & $0.00002 \pm 0.00001$ \\
    Adam ($\mathcal{R}_1$) & $0.08324$ & $0.00314 \pm 0.00031$ & $0.00066 \pm 0.00008$ \\
    Adam ($\mathcal{R}_2$) & $0.50729$ & $0.00567 \pm 0.00053$ & $0.00134 \pm 0.00007$ \\
    Adam ($\mathcal{R}_3$) & $1.23731$ & $0.00751 \pm 0.00086$ & $0.00183 \pm 0.00009$ \\
    Adam ($\mathcal{R}_4$) & $2.59919$ & $0.00978 \pm 0.00132$ & $0.00254 \pm 0.00008$ \\
    \bottomrule
\end{tabular}
\caption{The final loss values obtained by different optimizers and the $(1,1)$-norm of Hessian matrix for the corresponding unrotated objective and rotated objectives. The spectral norm of the Hessian matrix is always $1$. \adam optimizes worse when the $(1,1)$-norm of Hessian matrix increases, as suggested by our \Cref{cor:deterministic}. Moreover, when $(1,1)$-norm divided by $d$ is much smaller than spectral norm, \adam tends to optimize faster than \adasgd, which justifies the effectiveness of $\Phi$-smoothness as a tool to predict the optimization speed of blockwise \adam. 
}
\label{tab:quadratic}
\end{table}
\subsection{GPT-2 on language modeling task}\label{sec:exp_gpt2}
We train GPT-2 small (124M parameters)\footnote{Our codebase is built upon \texttt{nanoGPT} codebase \url{https://github.com/karpathy/nanoGPT}. } on the OpenWebText corpus containing more than 9B tokens for 100k iterations with sequence length of $512$ sequence length and $480$ sentences per batch. We use cosine learning rate schedule of the same peak learning rate $6\times10^{-4}$ for all the adaptive optimizers, which is also the default of \texttt{nanoGPT} codebase. We did a grid search to find the maximum possible peak learning rate for \sgd\footnote{We tried $0.00001, 0.00003, 0.0001, 0.0003, 0.001, 0.003, 0.01, 0.03, 0.1, 0.3, 1$. }. 
The training losses and evaluation losses of different optimizers are plotted in \Cref{fig:adam_results}.
As mentioned in \Cref{sec:intro}, \adam converges faster than \adasgd while they both converge faster than \rotatedadam. 
Since we propose the $(1,1)$-norm of Hessian as a non-rotation-invariant metric that can affect the convergence rate of \adam, we also measure it for the original loss function $L$ and rotated loss function $\tilde{L}$ on checkpoints trained with different losses. The results are presented in \Cref{tab:norm}. 
The same correlation between norms and convergence rates holds here. The smaller the norm is, the faster the optimizer works.

\begin{table}[h]
\centering
\begin{tabular}{c c c c }
    \toprule
    \textbf{Optimizer} & \textbf{Smoothness Metric} &\textbf{Upper Bound} & \textbf{Estimated Value}  \\
    \midrule
    AdaSGD   & $H(L,\Phi_{\mathtt{AdaSGD}})$ & $d \norm{\nabla^2 L(\vx)}_2$& $4.2446$  \\
    Adam  & $H(L,\Phi_{\mathtt{Adam}})$ & $\norm{\nabla^2 L(\vx)}_{1,1}$& $2.3538$  \\
    Rotated Adam  & $H(L\circ \mathcal{R},\Phi_{\mathtt{Adam}})$ & $\norm{R^\top\nabla^2 L(\vx) R}_{1,1}$& $14.3745$  \\
    \bottomrule
\end{tabular}
\caption{Hessian norms for the last GPT-2 checkpoints trained with different optimizers.}\label{tab:norm}
\end{table}

We also explore how learning rate can affect the performance of different optimizers by using peak learning rate $3 \times 10^{-4}$ and $1.8 \times 10^{-3}$ for all the adaptive optimizers. The training losses are plotted in \Cref{fig:lr_results}. All the optimizers perform worse with smaller learning rate, which aligns with the common understanding that optimizers tend to work better with larger learning rate as long as the training is still stable. When a larger learning rate is used, the performance of \adam is improved but the performance of  rotated \adam becomes worse than with the default learning rate. The training with \adasgd even completely failed. This suggests that another advantage of \adam over \adasgd: it can maintain stable training at a larger learning rate, which is often beneficial to faster and more efficient convergence.  

\begin{figure}[t]
    \centering
\includegraphics[width=0.32\textwidth]{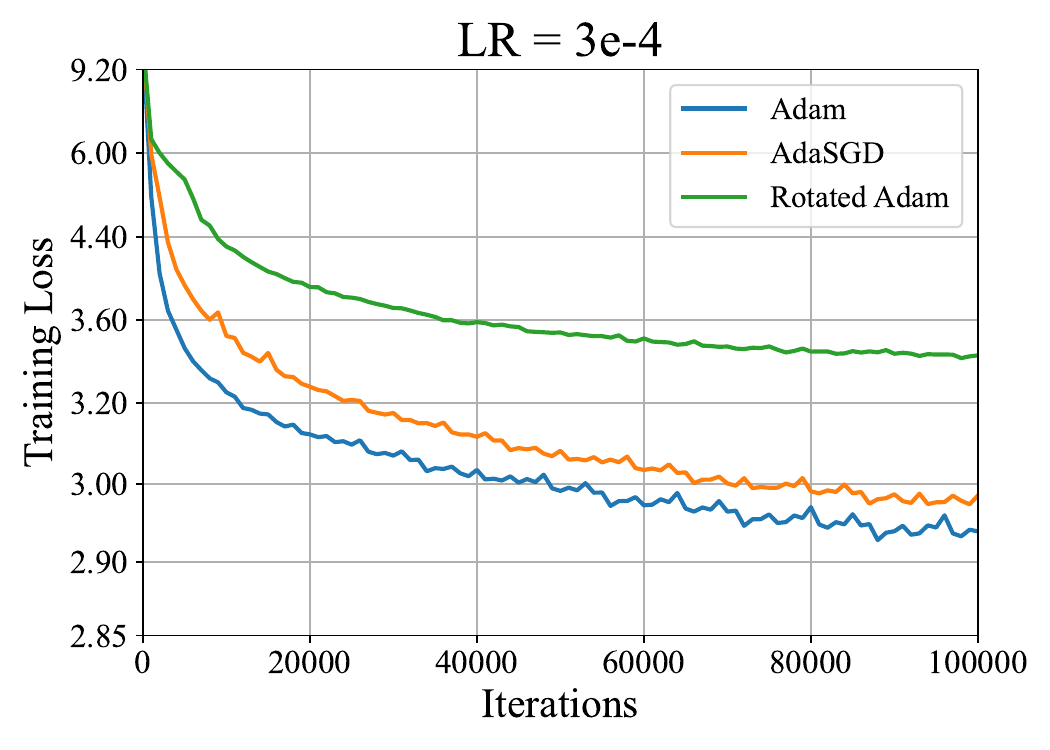}
    \hfill
\includegraphics[width=0.32\textwidth]{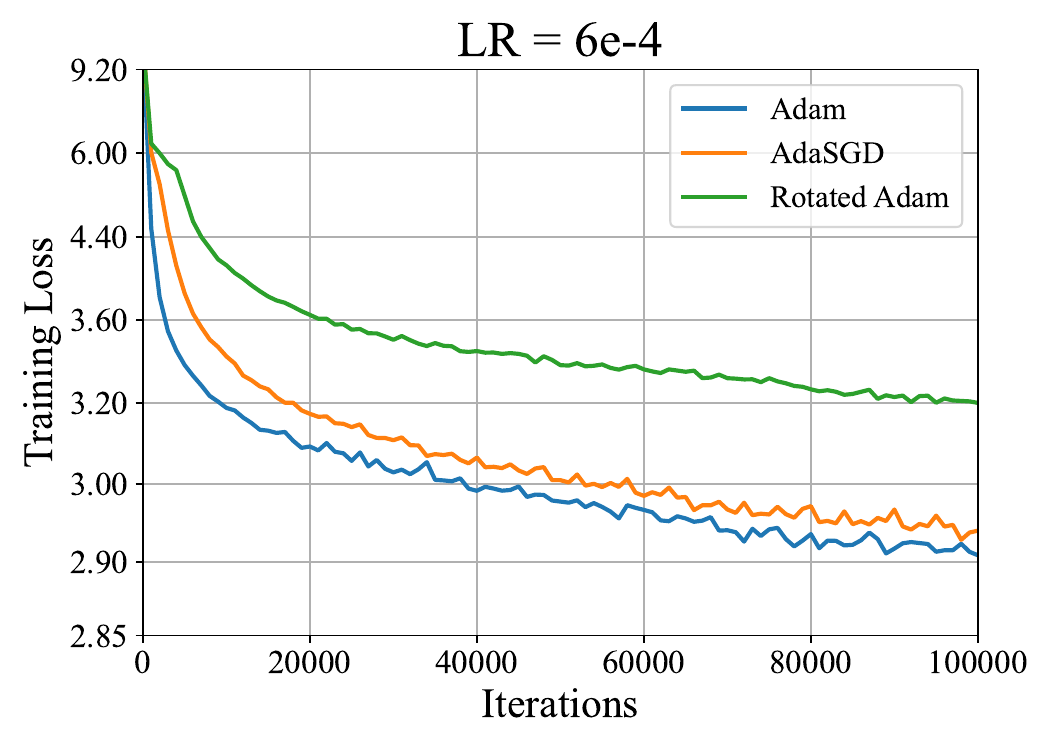}
    \hfill
\includegraphics[width=0.32\textwidth]{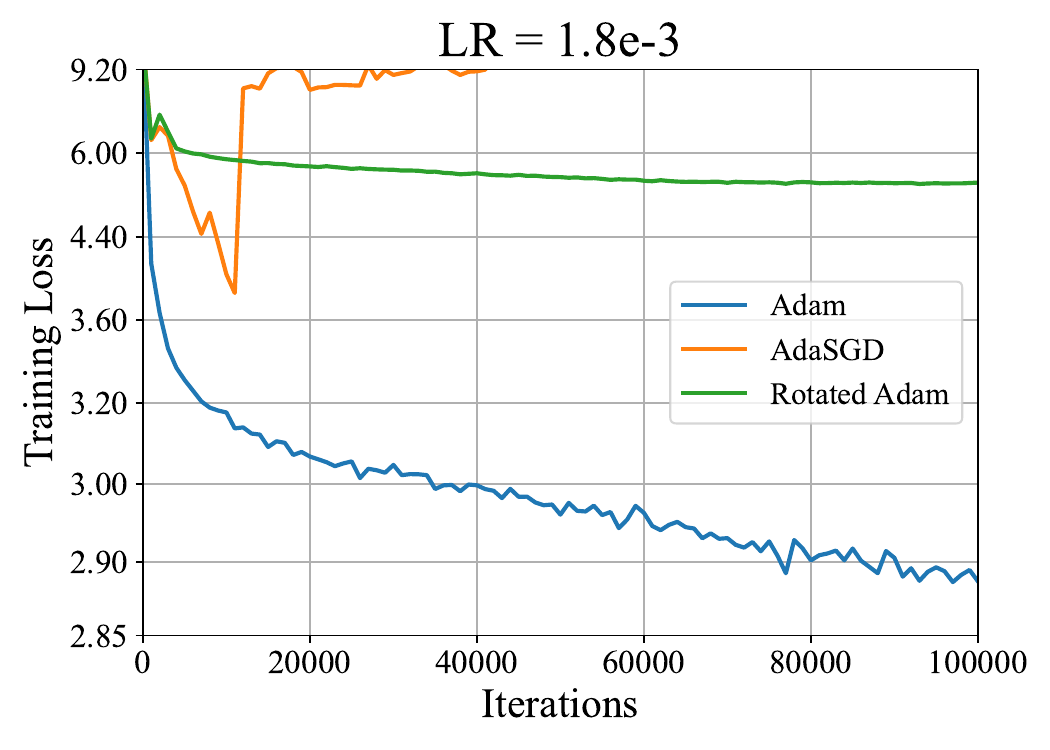}
    \caption{Training losses of \adam, \adasgd and rotated \adam on GPT-2 under different learning rates.  All the optimizers perform worse with smaller learning rate $3 \times 10^{-4}$. Only \adam will perform better with larger learning rate $1.8 \times 10^{-3}$.}
    \label{fig:lr_results}
\end{figure}
GPT-2 small models have more than $100$ million parameters, and thus the size of its hessian of loss as well as the rotation matrix is more than $10^{16}$, which is way more than the storage of the modern computers. Thus our experiments face the following two implementation challenges. Below we briefly discuss them and leave the full details in \Cref{sec:exp_details}.

\paragraph{Efficient generation of random orthogonal matrices.} Due to the huge parameter count, it is computationally infeasible to sample a random orthogonal matrix from Haar measure over the orthogonal group. Thus we decide to sample from an alternative distribution over orthogonal matrices, where the final rotation matrix is the composition of a sequence of simple orthogonal matrices, including random permutation and apply random orthogonal transformation on both sides of matrix-valued parameters.
See details in \Cref{subsec:generating_random_matrix}. Concurrent work~\cite{maes2024understanding} also run \adam on a rotated loss. They use a different way to achieve global rotation efficiently and conduct extensive module-wise rotation experiments for a more fine-grained analysis. 

\paragraph{Efficient estimation of $(1,1)$-norm of Hessian matrix.} The algorithm is defined in \Cref{alg:11norm}. We first subsample a fixed batch of training data for estimating the $(1,1)$-norm of Hessian matrix. The high-level idea is to compute the matrix vector products between Hessian of training loss on this batch and a sequence of random Cauchy vectors. Then we take the $\ell_1$ norm of the coordinate-wise median of the resulting sequence of Hessian vector products. Because the Cauchy distribution is 1-stable, the resulting product is also a vector of Cauchy random variables, and the magnitude of each element equals to $\ell_1$ norm of the corresponding row of the Hessian. Thus with infinitely many samples, the $\ell_1$ norm of the coordinate-wise median converges almost surely to the $(1,1)$-norm of the Hessian. Below we provide a non-asymptotic high-probability multiplicative bound for the estimation error which depends mildly on the dimension $d$. More explanations and the proof of \Cref{thm:measure_11_norm} are in \Cref{subsec:exp_matrix_norm_estimation}. Concurrent work~\cite{maes2024understanding} also proposed another algorithm to measure $(1,1)$-norm. 
\begin{restatable}{theorem}{normestimate}
    \label{thm:measure_11_norm}
    For the estimate in \Cref{alg:11norm} with $n$ Cauchy vectors, it holds that 
    \begin{align*}
      P\left(\abs{\sum_{j=1}^d \textup{median}(\abs{\mathbf{H}_{j,:}}) - \norm{\nabla^2 L(\vx)}_{1,1}}\geq \epsilon \norm{\nabla^2 L(\vx)}_{1,1} \right)\leq 2d \exp(-\frac{n\Delta^2}{2}) +  2\exp{\left(-\frac{2n\epsilon^2 \cos^4((1+\Delta)\frac{\pi}{4})}{\pi^2} \right)}  
    \end{align*}
    for every $\epsilon, \Delta \in (0,1)$ when $n \geq \frac{\pi^3}{2\epsilon^2 \cos^4((1+\Delta)\frac{\pi}{4})}$. 
 
\end{restatable}
In other words, for any $\eps,\delta\in(0,1)$, we can use $n=\Omega(\ln d+ \frac{1}{\epsilon^2}\ln\frac{1}{\delta})$ hessian-vector product of the loss $L$ at parameter $\vx$ and $nd$ extra computation time to get an estimation of (1,1)-norm of $L$ with at most $\epsilon$ multiplicative error and at least probability $1-\delta$.

\subsection{ResNet18 on CIFAR-10}\label{sec:exp_resnet}
\begin{figure}[ht]
    \centering
\includegraphics[width=0.48\textwidth]{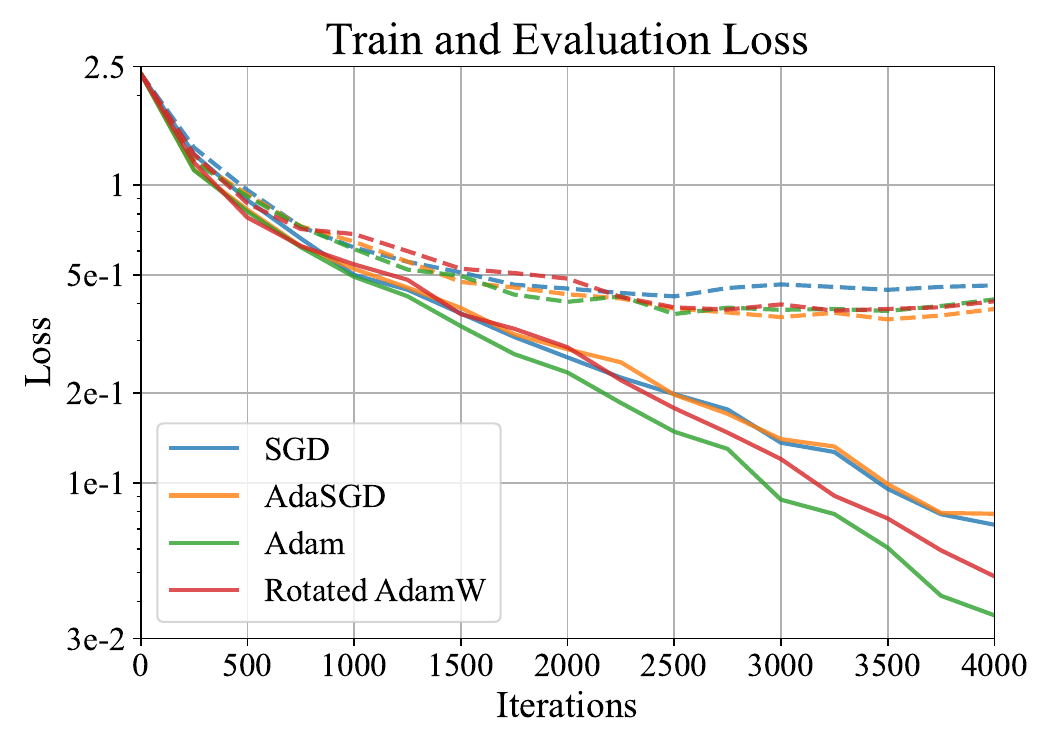}
    \hfill
\includegraphics[width=0.48\textwidth]{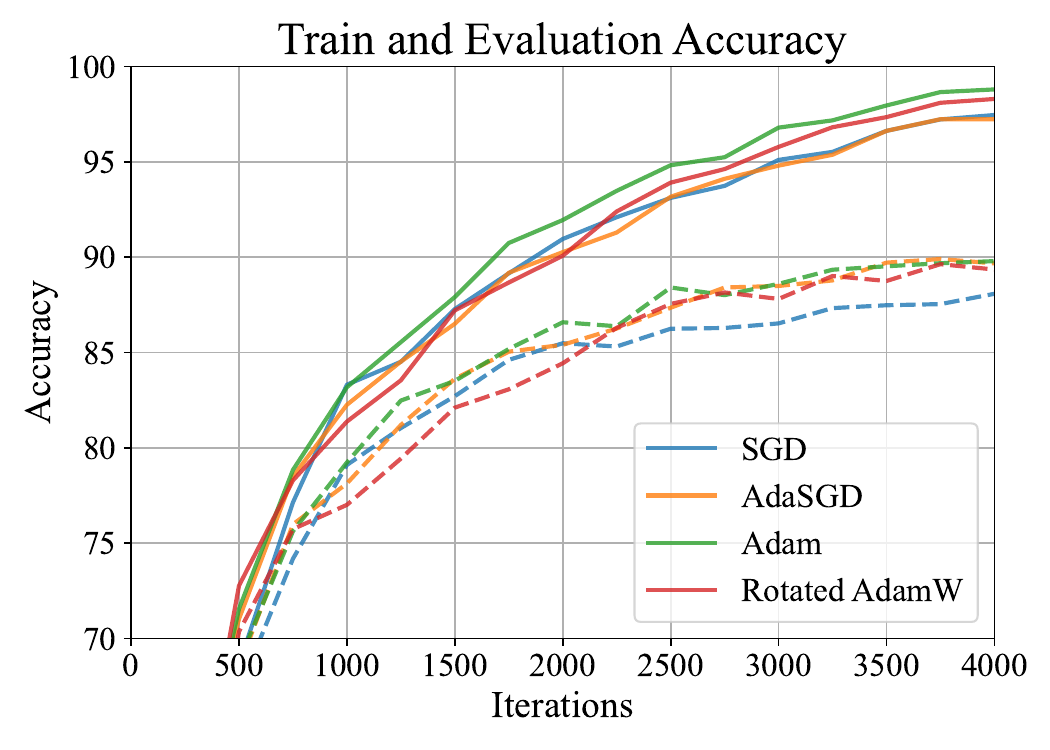}
\vspace{-0.3cm}
    \caption{Training and test losses (left) and accuracy (right) of ResNet18 on CIFAR-10 with \adam, \adasgd, \rotatedadam, and \sgd. We use batch size $256$ and the optimal learning rate in terms of training loss from grid search. Solid and dashed lines correspond to the training and evaluation set metrics respectively. \adam converges faster than other algorithms.}
    \label{fig:resnet_results}
\end{figure}
To further test whether the correlation between $\Phi$-smoothness and the optimization performance holds for architectures other than transformers,
we conduct a similar experiment on ResNet18 trained on CIFAR-10 \citep{Krizhevsky2009learning}. We applied random crop and random horizontal flip augmentations over the training data to promote better generalization. We tuned each optimizer through searching over the same grid of learning rates\footnote{We used the following values: $6.25 \times 10^{-4}$,
$1.25 \times 10^{-3}$,
$2.5 \times 10^{-3}$,
$5.0 \times 10^{-3}$,
$1.0 \times 10^{-2}$,
$2.0 \times 10^{-2}$,
$4.0 \times 10^{-2}$,
$8.0 \times 10^{-2}$,
$1.6 \times 10^{-1}$,
$3.2 \times 10^{-1}$,
$6.4 \times 10^{-1}$,
$1.28 \times 10^{0}$.} The number of iterations is adjusted per batch size to result in ~20 epochs for each training run (for instance, 4000 iterations were used for a batch size of 256, and 1000 iterations were used for a batch size of 1024). For the training loss and training accuracy plotted in \Cref{fig:resnet_results}, it is measured on a subset of augmented training data that is the same size of evaluation set. The evaluation loss and accuracy are measured on the entire evaluation set without the augmentation. Track running stats is set to false at initialization. 

\Cref{fig:resnet_results} depicts the loss and accuracy curves for the best performing hyperparameters chosen over the training set's final loss for batch size $256$.\footnote{We have intentionally limited the number of training iterations to emphasize the difference of optimizers in terms of training speed over generalization.} We also provide the results for other choices of batch size in \Cref{tab:resnet}. 
When it comes to optimization speed, even for ResNet18, \adam is always better than \rotatedadam and they are always better than \adasgd and \sgd across different batch sizes. Note that this does not contradict with common practice of training ResNet with \sgd, where the main goal is to get better generalization and the training budget is large so all optimizers can easily achieve full training accuracy. In our experiment, we study optimization speed and intentionally limit the number of steps.
\begin{table}[ht]
\centering
\begin{tabular}{c| c c c c}
    \toprule
    \textbf{Batch Size} & \textbf{SGD} & \textbf{AdaSGD} & \textbf{Adam} & \textbf{Rotated Adam} \\
    \midrule
    $16$   & $0.0777$ & $0.114$ & $0.064$ & $0.0905$  \\
    $64 $  & $0.0698$ & $0.0854$ & $0.0472$ & $0.0574$ \\
    $256$  & $0.0723$ & $0.0787$ & $0.0359$ & $0.0485$ \\
    $1024$ & $0.1115$ & $0.0915$ & $0.0735$ & $0.0817$ \\
    \bottomrule
\end{tabular}
\caption{Training losses of ResNet for different optimizers and different batch sizes within $20$ epochs. For each setting, we choose the optimal performance over all the learning rates. The 
 performance of \adam is consistently the best among all four optimizers.}\label{tab:resnet}
\end{table}

We also measure the Hessian for checkpoints obtained at batch size $256$ and the results are in \Cref{tab:resnet_norm}. The correlation between norms and convergence rates still holds here. When the $(1,1)$-norm is smaller than $d$ times spectral norm, \adam optimizes faster than \adasgd. 

\begin{table}[h]
\centering
\begin{tabular}{c c c c}
    \toprule
    \textbf{Optimizer} & \textbf{Smoothness Metric} & \textbf{Upper Bound} &\textbf{Estimated Value}  \\
    \midrule
    AdaSGD   & $H(L,\Phi_{\mathtt{AdaSGD}})$ &$d \norm{\nabla^2 L(\vx)}_2$& $1.5355$  \\
    Adam  & $H(L,\Phi_{\mathtt{Adam}})$&$\norm{\nabla^2 L(\vx)}_{1,1}$ & $0.0036$  \\
    Rotated Adam  & $H(L\circ \mathcal{R},\Phi_{\mathtt{Adam}})$& $\norm{R^\top\nabla^2 L(\vx) R}_{1,1}$ & $0.9868$  \\
    \bottomrule
\end{tabular}
\caption{Hessian norms for optimal ResNet checkpoints trained with different optimizers and batch size $256$.}\label{tab:resnet_norm}

\end{table}
\section{Related Works}\label{sec:related_works}
\paragraph{Comparison between Adam and SGD}
Previous work tries to analyze the difference between \adam and \sgd from different perspectives. \citet{zhou2018convergence} proves a faster convergence rate of \adam than \sgd when the stochastic gradients are sparse. \citet{zhang2020adaptive} suggests that \sgd suffers more from heavy-tailed noise than \adam. \citet{pan2023toward} claims that \adam has lower directional sharpness because of the effect of coordinate-wise clipping. Other works also consider the coordinate-wise normalization of \adam~\citep{balles2018dissecting,kunstner2023noise}. \citet{kunstner2024heavy} shows that the heavy-tailed class imbalance in language modeling tasks will cause \sgd to converge slower when it can only optimize majority class well. \citet{zhang2024transformers} finds that \adam is better at handling the block heterogeneity of Hessian matrix, which is a specific phenomenon in transformers. 
When viewing \adam as an adaptive method, there are works showing that adaptive methods have an advantage of achieving optimal convergence rate without relying on problem-dependent constant~\citep{ward2020adagrad,levy2021storm+}. \cite{ling2022vectoradam} focuses on the rotation equivariance property in the geometry optimization setting. 
\paragraph{Convergence rate of Adam}
There are many works showing convergence rate for \adam~\citep{zhou2018convergence,chen2018convergence,zou2019sufficient,shi2021rmsprop,guo2021novel,defossez2022simple,zhang2022adam}. Most of them rely on the smoothness of the loss function, which is measured w.r.t. $\ell_2$ norm. \citet{zhang2019gradient} proposes the $(L_0, L_1)$ smoothness condition should be more reasonable than globally bounded smoothness. \citet{li2024convergence} further generalizes the $(L_0, L_1)$ smoothness condition. However, they still focus on the default $\ell_2$ norm which is rotation-invariant. Concurrent work \cite{maes2024understanding} claims analyzing \adam requires rotation dependent assumptions by conducting various rotating experiments. 
To the best of our knowledge, we are the first to assume gradient Lipschitzness under $\ell_\infty$ norm for the analysis on \adam. 

\paragraph{Comparison with \cite{li2024frac}}
\cite{li2024frac} employs the same $\ell_1$ norm for gradient and improves the dependence on dimension $d$ compared to previous results for $\ell_2$ norm. But they still assume the common $\ell_2$ norm smoothness while we adapt their results under $\ell_\infty$ norm smoothness to potentially further improve dependence on $d$. Another drawback of \cite{li2024frac} is setting $\vv_0$ based on noise magnitude $\sigma$. which is impractical in real experiments because $\sigma$ is unknown. Overestimation for $\sigma$ will result in slow convergence because large $\vv_0$ causes \adam to behave similarly with \sgd without adjusting the coordinate-wise learning rate adaptively. In contrast, we allow for general initialization for $\vv_0$ and our convergence rate can work well in both noisy setting and deterministic setting. We also use $1-\beta_2 = \Theta \left( \frac{\log{T}}{T}\right)$ to obtain our convergence rate while \cite{li2024frac} requires $1-\beta_2 = \Theta \left( \frac{1}{T}\right)$. 

\section{Conclusion}\label{sec:conclusion}
We give a new convergence analysis~(\Cref{thm:main}) for \adam in the stochastic non-convex setting using a novel smoothness assumption. We show the convergence rate for the $\ell_1$ norm of the gradient is $O\left(\frac{1}{\sqrt{T}}\right)$ in the deterministic case~(\Cref{cor:deterministic,cor:deterministic_optimal}) and $O\left(\left(\frac{\log T}{T}\right)^{1/4}\right)$ in the stochastic case~(\Cref{cor:stochastic}). We also extend our analysis to blockwise \adam on loss $L$ with respect to an arbitrary partition of the parameters $\Phi$~(\Cref{thm:main_general_norm}) using the corresponding smoothness $H(L,\Phi)$~(\Cref{ass:general_norm_smoothness}). Our bound for \adam involves $(1,1)$-norm of Hessian, rather than the spectral norm of Hessian, which is relevant to the convergence speed of \adasgd. This leads to significantly better smoothness conditions for deep learning models including ResNet-18 and GPT2 empirically. Our experiments also verify that the smoothness measure $H(L,\Phi)$ positively correlates with the optimization speed of blockwise \adam with respect to the partition $\Phi$. 

\section*{Acknowledgement}
The authors would like to thank Khashayar Gatmiry and Sharut Gupta for the helpful discussion and preliminary experiments in exploring the idea of rotated Adam and Xiaoyu Chen for discussions on extending the analysis to blockwise Adam and steepest descent w.r.t. general $\Phi$-norm. ZL is supported by OpenAI Superalignment Grant.

\bibliographystyle{iclr2023_conference}
\bibliography{all}
\appendix
\newpage {
\hypersetup{linkcolor=black}
\tableofcontents
}
\section{Equivariance Property of \texorpdfstring{\adam}{Adam} and \texorpdfstring{\sgd}{SGD}}\label{sec:invariance_property}
\rotation*
\begin{proof}[Proof of \Cref{thm:invariance}]
For \sgd and \adasgd, we will show they are rotation-equivariant by induction. For any rotating transformation $\mathcal{R}(\vx) = \bm{R}\vx$, suppose $\tilde{\vx}_{s} = \mathcal{R}^{-1} (\vx_{s}) = \bm{R}^\top \vx_{s}$ holds for $s\leq t-1$. Then we have that $\tilde{\vg}_t = \nabla_{\tilde{\vx}} \tilde{L}_t(\tilde{\vx}_t) = \bm{R}^\top \nabla_{\vx} L(\bm{R}^{-1} \tvx_{t-1}) = \bm{R}^\top \nabla_{\vx} L(\vx_{t-1}) = \bm{R}^\top \vg_t$ and $\tvm_t = \bm{R}^\top \vm_t$. From the update rule of \sgd, we have that $\tvx_t = \tvx_{t-1} - \eta_t \tvm_t = \bm{R}^\top \vx_{t-1}-\eta_t \bm{R}^\top \vm_t = \bm{R}^\top(\vx_{t-1}-\eta_t \vm_t) = \bm{R}^\top \vx_t$. For the update rule of \adasgd, we further have that $\norm{\tvg_t}_2^2 = \norm{\vg_t}_2^2$ because $\bm{R}$ is an orthogonal matrix. Then $\tv_t = v_t$ and the derivation is similar. 

For \adam and \signgd, it is easy to show by induction they are equivariant w.r.t. any permutating transformation because the operation on gradient is performed on each coordinate separately. We only need to show they are not equivariant w.r.t. a rotating transformation. We choose $\bm{R}=[\frac{1}{\sqrt{2}}, \frac{1}{\sqrt{2}}; \frac{1}{\sqrt{2}}, -\frac{1}{\sqrt{2}}]$, $L_t(\vx) = L(\vx) = 2x_1^2 +x_2^2$. Due to the update rule of \signgd, it can only update $\vx$ and $\tilde{\vx}$ in the direction of $[1,1]$ and $[1,-1]$. But when rotating the update direction on $\tilde{\vx}$ back to the space of $\vx$. The update direction can only be $[1,0]$ or $[0,1]$ that are different from the update direction in the original space. 
Because the first step in \adam takes the same direction in \signgd, we simultaneously show that both \signgd and \adam are not rotation-equivariant. 
\end{proof}

\section{Convergence rate of \texorpdfstring{\signgd}{SignGD} for deterministic loss}\label{sec:proof_signgd}
\signgdrate*
\begin{proof}[Proof of \Cref{thm:convergence_rate_signgd}]
    We will directly prove a more general verion of \Cref{thm:convergence_rate_signgd}. Because $L$ is $H$-smooth with respect to $\|\cdot\|_\infty$, we have that 
    \begin{align}\label{eq:descent_lemma_nsd}
        L(\vx_{t+1}) - L(\vx_t)  &\leq -\nabla L(\vx_t)^\top (\vx_t-\vx_{t+1}) + \frac{H}{2}\norm{\vx_t-\vx_{t+1}}^2 \nonumber\\
        &\leq -\eta\norm{\nabla L(\vx_t)}_* + \frac{\eta^2 H}{2}\eta^2 
    \end{align}
    This implies that 
    \begin{align*}
        \min_{1 \leq t \leq T}\norm{\nabla L(\vx_t)}_* \leq \frac{1}{T}\sum_{t=1}^T \norm{\nabla L(\vx_t)}_* \le \frac{L(\vx_0) - L(\vx_T)}{T\eta} + \frac{H\eta}{2},
    \end{align*}
    which completes the proof.
\end{proof}

\section{Proof for Convergence Rate of Blockwise \texorpdfstring{\adam}{Adam}}\label{sec:proof_detail}
As mentioned in \Cref{sec:general_theory}, we will use \Cref{lem:second_order_general_norm} to show the relationship between blockwisely-smoothness w.r.t. $\Phi$-norm and $H(L,\Phi)$. 
\secondorder*
\begin{proof}[Proof of \Cref{lem:second_order_general_norm}]
    From \Cref{ass:general_norm_smoothness}, we know that 
    \begin{align*}
       H_b &\ge \sup_{\vx, \vDelta} \frac{ \norm{\nabla_{(b)} L(\vx+\vDelta) - \nabla_{(b)} L(\vx)}_2}{\sqrt{d_b}\max_{b' \in [B]} \frac{\norm{\vDelta_{(b')}}_2}{\sqrt{d_{b'}}}} \\
       &=\sup_{\vx, \vDelta} \frac{ \norm{\nabla_{(b), :}^2 L(\vx) \vDelta}_2}{\sqrt{d_b}\max_{b' \in [B]} \frac{\norm{\vDelta_{(b')}}_2}{\sqrt{d_{b'}}}}\\
       &=\sup_{\vx, \vDelta} \frac{ \norm{\sum_{b' =1}^B\nabla_{(b), (b')}^2 L(\vx) \vDelta_{(b')}}_2}{\sqrt{d_b}\max_{b' \in [B]} \frac{\norm{\vDelta_{(b')}}_2}{\sqrt{d_{b'}}}}\\
       &=\sup_{\vx, \vDelta, \norm{\vDelta'_{(b)}}_2 \leq 1} \frac{ \inner{\vDelta'_{(b)}}{\sum_{b' =1}^B\nabla_{(b), (b')}^2 L(\vx) \vDelta_{(b')}}}{\sqrt{d_b}\max_{b' \in [B]} \frac{\norm{\vDelta_{(b')}}_2}{\sqrt{d_{b'}}}}\\
       &=\sup_{\vx, \norm{\vDelta_{(b')}}_2 \leq \sqrt{d_{b'}}, \norm{\vDelta'_{(b)}}_2 \leq 1}\frac{1}{\sqrt{d_b}} \inner{\vDelta'_{(b)}}{\sum_{b' =1}^B\nabla_{(b), (b')}^2 L(\vx) \vDelta_{(b')}}\\
       &=\sup_{\vx, \vDelta, \vDelta'}\sum_{b' =1}^B \frac{ \sqrt{d_{b'}}}{\sqrt{d_b}\norm{\vDelta'_{(b)}}_2 \norm{\vDelta_{(b')}}_2} \inner{\vDelta'_{(b)}}{\nabla_{(b), (b')}^2 L(\vx) \vDelta_{(b')}}.
    \end{align*}
Then for any $\vx$ and $\vDelta$, we know that 
\begin{align*}
    H_b \norm{\vDelta_{(b)}}_2^2 &\geq \norm{\vDelta_{(b)}}_2^2 \sum_{b'=1}^B \frac{ \sqrt{d_{b'}}}{\sqrt{d_b}\norm{\vDelta_{(b)}}_2 \norm{\vDelta_{(b')}}_2} \abs{\inner{\vDelta_{(b)}}{\nabla_{(b), (b')}^2 L(\vx) \vDelta_{(b')}}}\\
    &= \sum_{b'=1}^B \frac{\sqrt{d_{b'}}\norm{\vDelta_{(b)}}_2}{\sqrt{d_b} \norm{\vDelta_{(b')}}_2} \abs{\inner{\vDelta_{(b)}}{\nabla_{(b), (b')}^2 L(\vx) \vDelta_{(b')}}}
\end{align*}
and 
\begin{align*}
    &2\sum_{b=1}^B H_b \norm{\vDelta_{(b)}}_2^2\\
    =& \sum_{b=1}^B H_b \norm{\vDelta_{(b)}}_2^2 + \sum_{b'=1}^B H_{b'} \norm{\vDelta_{(b')}}_2^2\\
    \geq& \sum_{b=1}^B\sum_{b'=1}^B \frac{\sqrt{d_{b'}}\norm{\vDelta_{(b)}}_2}{\sqrt{d_b} \norm{\vDelta_{(b')}}_2} \abs{\inner{\vDelta_{(b)}}{\nabla_{(b), (b')}^2 L(\vx) \vDelta_{(b')}}}
    + \sum_{b'=1}^B\sum_{b=1}^B \frac{\sqrt{d_{b}}\norm{\vDelta_{(b')}}_2}{\sqrt{d_{b'}} \norm{\vDelta_{(b)}}_2} \abs{\inner{\vDelta_{(b')}}{\nabla_{(b'), (b)}^2 L(\vx) \vDelta_{(b)}}}\\
    \geq&2 \sum_{b=1}^B \sum_{b'=1}^B \abs{\vDelta_{(b)}^\top \nabla_{(b), (b')}^2 L(\vx) \vDelta_{(b')}} \geq 2 \abs{\vDelta^\top \nabla^2 L(\vx) \vDelta}. 
\end{align*}
\end{proof}
We will use \Cref{lem:momentum_ratio} to better control the growth of the sum of second order term. 
\momentumratio*
\begin{proof}[Proof of \Cref{lem:momentum_ratio}]
Notice that $1-x \leq \ln{\frac{1}{x}}$ for any positive $x$. We can have that
\begin{align}\label{eq:update_square_upper_bound}
        \sum_{t=1}^{T} \frac{g_t^2}{v_t}&\leq\sum_{t=1}^{T} \frac{v_t-\beta_2 v_{t-1}}{(1-\beta_2) v_t}\notag\\
        &=\sum_{t=1}^T \left[1 + \frac{\beta_2}{1-\beta_2} \left( 1-\frac{v_{t-1}}{v_t} \right) \right]\notag\\
        &\leq T + \frac{\beta_2}{1-\beta_2} \sum_{t=1}^T \ln{\frac{v_{t}}{v_{t-1}}}
        \notag\\
        &=T+\frac{\beta_2}{1-\beta_2} \ln{\frac{v_T}{v_0}}.
    \end{align}
    when $v_0 \neq 0$. 
When $v_0=0$, we can still have that 
\begin{align*}
    \sum_{t=1}^T \frac{g_t^2}{v_t} &\leq \frac{1}{1-\beta_2} + \sum_{t=2}^T \frac{g_t^2}{v_t} \\
    &\leq \frac{1}{1-\beta_2} + (T-1) + \frac{\beta_2}{1-\beta_2} \ln{\frac{v_T}{v_1}}\\
    &=T + \frac{\beta_2}{1-\beta_2}\ln{\frac{v_T}{v_1/e}}. 
\end{align*}
\end{proof}

Next we deal with the first order term by approximating it with a deterministic term. Recall the notation defined in \Cref{sec:proof_sketch}. $\vg_t$ denotes the gradient of mini-batch $L_t(\vx_{t-1})$ at step $t$. And $\E\left[\vg_t \middle|\vx_{t-1}\right]=\nabla L(\vx_{t-1})$ because $\E{L_t}=L$. The full-batch gradient is $\bar{\vg}_{t} = \nabla L(\vx_{t-1})$. Different kinds of second-order momentum are defined in the following way
\begin{align*}
    v_{t,b} &= \beta_2^t \norm{\vg_{1,(b)}}_2^2 /d_b+\left(1-\beta_2\right)\sum_{j=0}^{t-1} \beta_2^{j} \left(\norm{\vg_{t-j, (b)}}_2^2\right)/d_b,\\
    \Tilde{v}_{t,b} &= (1-\beta_2) \left(\norm{\bar{\vg}_{t,(b)}}_2^2/d_b+\sigma_b^2\right) + \beta_2 v_{t-1,b}.
\end{align*}

\firstorder*
\begin{proof}[Proof of \Cref{lem:first_order_approx_rmsprop}]
The first order change in block $b$ can decomposed into two terms. 
\begin{equation}\label{eq:first_order_decomposition}
   \begin{aligned}
    \E \sum_{t=1}^T \sum_{\Phi(i)=b} \frac{g_{t,i} \bar{g}_{t,i}}{\sqrt{v_{t,b}+\epsilon}}&= \E \sum_{t=1}^T \sum_{\Phi(i)=b} \frac{g_{t,i}\bar{g}_{t,i}}{\sqrt{\tilde{v}_{t,b}+\epsilon}} + \E \left[\sum_{t=1}^T \sum_{\Phi(i)=b} \frac{g_{t,i} \bar{g}_{t,i}}{\sqrt{v_{t,b}+\epsilon}} -\frac{g_{t,i}\bar{g}_{t,i}}{\sqrt{\tilde{v}_{t,b}+\epsilon}}\right]\\
    &=\E \sum_{t=1}^T \sum_{\Phi(i)=b} \E\left[ \frac{g_{t,i} \bar{g}_{t,i}}{\sqrt{\tilde{v}_{t,b}+\epsilon}}\middle|\vx_{t-1} \right]+ \E \left[\sum_{t=1}^T\sum_{\Phi(i)=b}  \frac{g_{t,i} \bar{g}_{t,i}}{\sqrt{v_{t,b}+\epsilon}} -\frac{g_{t,i} \bar{g}_{t,i}}{\sqrt{\tilde{v}_{t,b}+\epsilon}}\right]\\
    &=\E \sum_{t=1}^T \sum_{\Phi(i)=b} \frac{\bar{g}_{t,i}^2}{\sqrt{\tilde{v}_{t,b}+\epsilon}} + \E \left[\sum_{t=1}^T \sum_{\Phi(i)=b} \frac{g_{t,i} \bar{g}_{t,i}}{\sqrt{v_{t,b}+\epsilon}} -\frac{g_{t,i}\bar{g}_{t,i}}{\sqrt{\tilde{v}_{t,b}+\epsilon}}\right]\\
\end{aligned} 
\end{equation}
For the second term, we have that 
\begin{align*}
    &\sum_{\Phi(i)=b} \abs{g_{t,i} \bar{g}_{t,i} \left(\frac{1}{\sqrt{v_{t,b}+\epsilon}}-\frac{1}{\sqrt{\tilde{v}_{t,b}+\epsilon}}\right)} \\=&\sum_{\Phi(i)=b} \frac{\abs{g_{t,i} \bar{g}_{t,i} \left( \tilde{v}_{t,b}-v_{t,b} \right)}}{\sqrt{v_{t,b}+\epsilon}\sqrt{\tilde{v}_{t,b}+\epsilon }\left(\sqrt{v_{t,b}+\epsilon} + \sqrt{\tilde{v}_{t,b}+\epsilon} \right)}\\
    =&\sum_{\Phi(i)=b} \frac{\abs{g_{t,i} \bar{g}_{t,i} (1-\beta_2)\left(\norm{\bar{\vg}_{t,(b)}}_2^2 /d_b +\sigma_b^2-\norm{\vg_{t,(b)}}_2^2 /d_b\right)}}{\sqrt{v_{t,b}+\epsilon}\sqrt{\tilde{v}_{t,b}+\epsilon }\left(\sqrt{v_{t,b}+\epsilon} + \sqrt{\tilde{v}_{t,b}+\epsilon} \right)}\\
    =&\sum_{\Phi(i)=b} \frac{\abs{g_{t,i} \bar{g}_{t,i} (1-\beta_2)\left( \sqrt{\norm{\bar{\vg}_{t,(b)}}_2^2 /d_b +\sigma_b^2}+\sqrt{\norm{\vg_{t,(b)}}_2^2/d_b} \right)\left( \sqrt{\norm{\bar{\vg}_{t,(b)}}_2^2/d_b +\sigma_b^2}-\sqrt{\norm{\vg_{t,(b)}}_2^2/d_b} \right)}}{\sqrt{v_{t,b}+\epsilon}\sqrt{\tilde{v}_{t,b} +\epsilon}\left(\sqrt{v_{t,b}+\epsilon} + \sqrt{\tilde{v}_{t,b}+\epsilon} \right)}\\
    \leq&\sum_{\Phi(i)=b} \frac{\abs{g_{t,i} \bar{g}_{t,i} \sqrt{1-\beta_2}\left( \sqrt{\norm{\bar{\vg}_{t,(b)}}_2^2/d_b +\sigma_b^2}-\sqrt{\norm{\vg_{t,(b)}}_2^2/d_b} \right)}}{\sqrt{v_{t,b}+\epsilon}\sqrt{\tilde{v}_{t,b} +\epsilon}}\\
    \leq& \frac{1}{2}\sum_{\Phi(i)=b} \frac{\bar{g}_{t,i}^2 }{\sqrt{\tilde{v}_{t,b}+\epsilon}} \frac{\left( \sqrt{\norm{\bar{\vg}_{t,(b)}}_2^2/d_b +\sigma_b^2}-\sqrt{\norm{\vg_{t,(b)}}_2^2/d_b} \right)^2}{\E [\left( \sqrt{\norm{\bar{\vg}_{t,(b)}}_2^2/d_b +\sigma_b^2}-\sqrt{\norm{\vg_{t,(b)}}_2^2/d_b} \right)^2 \vert \vx_{t-1}]} \\
    +& \frac{1}{2}\sum_{\Phi(i)=b} \frac{(1-\beta_2) g_{t,i}^2 \E [\left( \sqrt{\norm{\bar{\vg}_{t,(b)}}_2^2/d_b +\sigma_b^2}-\sqrt{\norm{\vg_{t,(b)}}_2^2/d_b} \right)^2 \vert \vx_{t-1}] }{(v_{t,b}+\epsilon) \sqrt{\tilde{v}_{t,b}+\epsilon} } 
\end{align*}
The first inequality is because $v_{t,b}+\epsilon \geq (1-\beta_2) \norm{\vg_{t,(b)}}_2^2/d_b$ and $\tilde{v}_{t,b}+\epsilon \geq (1-\beta_2) \left(\norm{\bar{\vg}_{t,(b)}}_2^2/d_b + \sigma_b^2 \right)$. 
For the first term, it will be exactly $\frac{1}{2} \frac{\norm{\bar{\vg}_{t,(b)}}_2^2}{\sqrt{\tilde{v}_{t,b}+\epsilon}}$ after taking expectation conditional on $\vx_{t-1}$. For the second term, we have the following inequality
\begin{align*}
    &\E \left[\left(\sqrt{\norm{\bar{\vg}_{t,(b)}}_2^2/d_b +\sigma_b^2}-\sqrt{\norm{\vg_{t,(b)}}_2^2/d_b} \right)^2 \middle| \vx_{t-1} \right] \\
    =& \E \left[\norm{\bar{\vg}_{t,(b)}}_2^2/d_b + \sigma_b^2 + \sum_{\Phi(j)=b}g_{t,j}^2/d_b -2 \sqrt{\norm{\vg_{t,(b)}}_2^2/d_b} \sqrt{ \norm{\bar{\vg}_{t,(b)}}_2^2 /d_b+\sigma_i^2} \middle| \vx_{t-1} \right]\\
    \leq& 2\left(\norm{\bar{\vg}_{t,(b)}}_2^2/d_b + \sigma_b^2\right) -2 \sqrt{\norm{\bar{\vg}_{t,(b)}}_2^2/d_b + \sigma_b^2} \E \left[ \sqrt{\norm{\vg_{t,(b)}}_2^2/d_b}\middle| \vx_{t-1} \right] \\
    \leq& 2\left(\norm{\bar{\vg}_{t,(b)}}_2^2/d_b + \sigma_b^2\right) -2 \sqrt{\norm{\bar{\vg}_{t,(b)}}_2^2/d_b + \sigma_b^2} \sqrt{\norm{\bar{\vg}_{t,(b)}}_2^2/d_b}\\
    =&2\sqrt{\norm{\bar{\vg}_{t,(b)}}_2^2/d_b + \sigma_b^2} \left(\sqrt{\norm{\bar{\vg}_{t,(b)}}_2^2/d_b + \sigma_b^2} -\sqrt{\norm{\bar{\vg}_{t,(b)}}_2^2/d_b } \right) \\
    \leq& 2\sqrt{\norm{\bar{\vg}_{t,(b)}}_2^2/d_b + \sigma_b^2} \sigma_b.
\end{align*}
The first inequality comes from \Cref{ass:bounded_noise}. The second inequality is because $\ell_2$ norm is a convex function. 
Then we know that 
\begin{align*}
    &\sum_{\Phi(i)=b} \frac{(1-\beta_2) g_{t,i}^2 \E \left[\left( \sqrt{\norm{\bar{\vg}_{t,(b)}}_2^2/d_b +\sigma_b^2}-\sqrt{\norm{\vg_{t,(b)}}_2^2/d_b} \right)^2 \vert \vx_{t-1} \right] }{(v_{t,b}+\epsilon) \sqrt{\tilde{v}_{t,b}+\epsilon} } \\
    \leq&\sum_{\Phi(i)=b} \frac{(1-\beta_2) g_{t,i}^2 2\sqrt{\norm{\bar{\vg}_{t,(b)}}_2^2/d_b + \sigma_b^2} \sigma_b}{(v_{t,b}+\epsilon) \sqrt{\tilde{v}_{t,b}+\epsilon} } \\
    \leq& 2\sqrt{1-\beta_2} \sigma_b\sum_{\Phi(i)=b}\frac{g_{t,i}^2}{v_{t,b}+\epsilon}.
\end{align*}
Then back to \Cref{eq:first_order_decomposition}, we have that 
\begin{align*}
 \E \sum_{t=1}^T \sum_{\Phi(i)=b} \frac{g_{t,i} \bar{g}_{t,i}}{\sqrt{v_{t,b}+\epsilon}} &=\E \sum_{t=1}^T \sum_{\Phi(i)=b} \frac{\bar{g}_{t,i}^2}{\sqrt{\tilde{v}_{t,b}+\epsilon}} + \E \left[\sum_{t=1}^T \sum_{\Phi(i)=b} \frac{g_{t,i} \bar{g}_{t,i}}{\sqrt{v_{t,b}+\epsilon}} -\frac{g_{t,i}\bar{g}_{t,i}}{\sqrt{\tilde{v}_{t,b}+\epsilon}}\right]\\
 &\geq \E \sum_{t=1}^T \sum_{\Phi(i)=b} \frac{\bar{g}_{t,i}^2}{\sqrt{\tilde{v}_{t,b}+\epsilon}} - \frac{1}{2}\E \sum_{t=1}^T \sum_{\Phi(i)=b} \frac{\bar{g}_{t,i}^2}{\sqrt{\tilde{v}_{t,b}+\epsilon}}-\frac{1}{2} 2\sqrt{1-\beta_2}\sigma_b \E \sum_{t=1}^T \frac{\norm{\vg_{t,(b)}}_2^2}{v_{t,b}+\epsilon}\\
 &=\frac{1}{2}\E \sum_{t=1}^T \sum_{\Phi(i)=b} \frac{\bar{g}_{t,i}^2}{\sqrt{\tilde{v}_{t,b}+\epsilon}}-\sqrt{1-\beta_2}\sigma_b \E \sum_{t=1}^T \frac{\norm{\vg_{t,(b)}}_2^2}{v_{t,b}+\epsilon}\\
\end{align*}
For the second term, we can apply \Cref{lem:momentum_ratio} and get that 
\begin{equation*}
    \sum_{t=1}^T \frac{\sum_{\Phi(i)=b} g_{t,i}^2/d_b}{v_{t,b}+\epsilon} \leq T + \frac{\beta_2}{1-\beta_2} \ln{\frac{v_{T,b}+\epsilon}{v_{0,b}+\epsilon}}.
\end{equation*}
Combining these two terms, we can get that 
\begin{align*}
\E \sum_{t=1}^T \sum_{\Phi(i)=b}\frac{g_{t,i} \bar{g}_{t,i}}{\sqrt{v_{t,b}+\epsilon}} &\geq \frac{1}{2} \E \sum_{t=1}^T \sum_{\Phi(i)=b}\frac{\bar{g}_{t,i}^2}{\sqrt{\tilde{v}_{t,b}+\epsilon}}
-\sqrt{1-\beta_2}T d_b \sigma_b  - \frac{d_b \sigma_b \beta_2}{\sqrt{1-\beta_2}} \E \left[\ln{\frac{v_{T,b}+\epsilon}{v_{0,b}+\epsilon}} \right]. 
\end{align*}
\end{proof}
Next we need \Cref{lem:denominator} to deal with the denominator in the approximated first order term. The lemma is largely inspired by Lemma 6 in \citet{li2024frac}, where we further generalize it to the case of block-wise \adam. 

\denominator*
\begin{proof}[Proof of \Cref{lem:denominator}]
For each $t \leq T$, we have that
    \begin{align*}
       &\E \left[\sqrt{\tilde{v}_{t,b}+\epsilon} \right]\\
       =& \E \left[\sqrt{\beta_2 v_{t-1,b} + (1-\beta_2)(\norm{\bar{\vg}_{t,(b)}}_2^2/d_b + \sigma_b^2)+\epsilon} \right] \\
       =& \E \left[ \frac{\beta_2 v_{t-1,b}+(1-\beta_2) \sigma_b^2 + \epsilon}{\sqrt{\beta_2 v_{t-1,b} + (1-\beta_2)(\sum_{\Phi(i)=b}\bar{g}_{t,i}^2/d_b + \sigma_b^2)+\epsilon}} \right] + (1-\beta_2) \E \left[\frac{\norm{\bar{\vg}_{t,(b)}}_2^2/d_b}{\sqrt{\tilde{v}_{t,b}+\epsilon}}  \right]\\
       \leq & \E \left[ \sqrt{\beta_2 v_{t-1,b}+(1-\beta_2) \sigma_b^2 + \epsilon} \right] + (1-\beta_2) \E \left[\frac{\norm{\bar{\vg}_{t,(b)}}_2^2/d_b}{\sqrt{\tilde{v}_{t,b}+\epsilon}} \right]. 
    \end{align*}
And for each $s \leq t-1$, we have that 
\begin{align*}
    &\E \left[\sqrt{\beta_2^s v_{t-s,b} + (1-\beta_2^s) \sigma_b^2 +\epsilon} \right]\\
    =& \E \left[\sqrt{\beta_2^{s+1} v_{t-s-1,b} + \beta_2^s (1-\beta_2) \sum_{\Phi(i)=b}g_{t-s,i}^2/d_b + (1-\beta_2^s) \sigma_b^2 +\epsilon} \right]\\
    =& \E \left[ \E \left[\sqrt{\beta_2^{s+1} v_{t-s-1,b} + \beta_2^s (1-\beta_2) \sum_{\Phi(i)=b} g_{t-s,i}^2/d_b + (1-\beta_2^s) \sigma_b^2 +\epsilon}\middle| \vx_{t-s-1}\right]\right]\\
    \leq & \E \left[ \sqrt{\beta_2^{s+1} v_{t-s-1,b} + \beta_2^s (1-\beta_2) \E\left[ \sum_{\Phi(i)=b} g_{t-s,i}^2/d_b \middle| \vx_{t-s-1} \right] + (1-\beta_2^s) \sigma_b^2 +\epsilon} \right]\\
    \leq & \E \left[ \sqrt{\beta_2^{s+1} v_{t-s-1,b} + \beta_2^s (1-\beta_2) \sum_{\Phi(i)=b} \bar{g}_{t-s,i}^2/d_b + (1-\beta_2^{s+1}) \sigma_b^2 +\epsilon} \right]\\
    = & \E \left[ \frac{\beta_2^{s+1} v_{t-s-1,b} + (1-\beta_2^{s+1})\sigma_b^2 + \epsilon}{\sqrt{\beta_2^{s+1} v_{t-s-1,b} + \beta_2^s (1-\beta_2) \sum_{\Phi(i)=b} \bar{g}_{t-s,i}^2/d_b + (1-\beta_2^{s+1}) \sigma_b^2 +\epsilon}}\right]\\
    + & \E \left[ \frac{\beta_2^s (1-\beta_2) \sum_{\Phi(i)=b} \bar{g}_{t-s,i}^2/d_b}{\sqrt{\beta_2^{s+1} v_{t-s-1,b} + \beta_2^s (1-\beta_2) \sum_{\Phi(i)=b} \bar{g}_{t-s,i}^2/d_b + (1-\beta_2^{s+1}) \sigma_b^2 +\epsilon}}\right]\\
    \leq & \E \left[ \sqrt{\beta_2^{s+1} v_{t-s-1,b} + (1-\beta_2^{s+1}) \sigma_b^2 +\epsilon} \right] + \sqrt{\beta_2^s}(1-\beta_2) \E \left[\frac{\sum_{\Phi(i)=b} \bar{g}_{t-s,i}^2/d_b}{\sqrt{\tilde{v}_{t-s,b}+\epsilon}} \right].
\end{align*}
By summing the above inequality over $s=1, \cdots, t-1$, we have that
\begin{align*}
    &\E \left[ \sqrt{\beta_2 v_{t-1,b}+(1-\beta_2) \sigma_b^2 + \epsilon} \right] \\\leq& \E \left[\sqrt{\beta_2^t v_{0,b} + (1-\beta_2^t) \sigma_b^2 +\epsilon} \right] + \sum_{s=1}^{t-1} \sqrt{\beta_2^s} (1-\beta_2) \E \left[\frac{\sum_{\Phi(i)=b} \bar{g}_{t-s,i}^2/d_b}{\sqrt{\tilde{v}_{t-s,b}+\epsilon}} \right]\\
    \leq & \sqrt{\beta_2^t v_{0,b}} + \sqrt{\sigma_b^2 + \epsilon} + \sum_{s=1}^{t-1} \sqrt{\beta_2^s} (1-\beta_2) \E \left[\frac{\sum_{\Phi(i)=b} \bar{g}_{t-s,i}^2/d_b}{\sqrt{\tilde{v}_{t-s,b}+\epsilon}} \right].
\end{align*}
and 
\begin{align*}
    \E \left[\sqrt{\tilde{v}_{t,b}+\epsilon} \right] \leq \sqrt{\beta_2^t v_{0,b}} + \sqrt{\sigma_b^2+\epsilon} + \sum_{s=0}^{t-1} \sqrt{\beta_2^s}(1-\beta_2) \E \left[\frac{\sum_{\Phi(i)=b} \bar{g}_{t-s,i}^2/d_b}{\sqrt{\tilde{v}_{t-s,b}+\epsilon}} \right].
\end{align*}
By summing the above inequality over $t=\frac{T}{2}+1, \cdots, T$, we have that 
\begin{align*}
    \sum_{t=\frac{T}{2}+1}^T \left[\sqrt{\tilde{v}_{t,b}+\epsilon} \right] &\leq \sum_{t=\frac{T}{2}+1}^T \sqrt{\beta_2^t v_{0,b}} + \frac{T}{2} \sqrt{\sigma_b^2 + \epsilon} + \sum_{t=\frac{T}{2}+1}^T \sum_{s=0}^{t-1} \sqrt{\beta_2^s}(1-\beta_2) \E \left[\frac{\sum_{\Phi(i)=b} \bar{g}_{t-s,i}^2/d_b}{\sqrt{\tilde{v}_{t-s,b}+\epsilon}} \right] \\
    &\leq \frac{\beta_2^\frac{T}{4}}{1-\sqrt{\beta_2}} \sqrt{v_{0,b}} + \frac{T}{2} \sqrt{\sigma_b^2 + \epsilon} + \frac{1-\beta_2}{1-\sqrt{\beta_2}} \sum_{t=1}^T \E \left[\frac{\norm{\bar{\vg}_{t,(b)}}_2^2/d_b}{\sqrt{\tilde{v}_{t,b}+\epsilon}} \right]\\
    &= \frac{\beta_2^\frac{T}{4}}{1-\sqrt{\beta_2}} \sqrt{v_{0,b}} + \frac{T}{2} \sqrt{\sigma_b^2 + \epsilon} + (1+\sqrt{\beta_2}) \sum_{t=1}^T \E \left[\frac{\norm{\bar{\vg}_{t,(b)}}_2^2/d_b}{\sqrt{\tilde{v}_{t,b}+\epsilon}} \right]\\
    &\leq \frac{2\beta_2^\frac{T}{4}}{1-\beta_2} \sqrt{v_{0,b}} + \frac{T}{2} \sigma_b + \frac{T}{2} \sqrt{\epsilon} + 2 \sum_{t=1}^T \E \left[\frac{\norm{\bar{\vg}_{t,(b)}}_2^2/d_b}{\sqrt{\tilde{v}_{t,b}+\epsilon}} \right]. 
\end{align*}
\end{proof}

This following \Cref{lem:vt_growth} is to control the growth of $v_{T,b}$ so that the right hand side in \Cref{lem:momentum_ratio} is indeed $\Theta \left( T+\frac{\log{T}}{1-\beta_2} \right)$ instead of $\Theta(\frac{T}{1-\beta_2})$ when all the constants are $\text{poly}(T)$.
\begin{lemma}\label{lem:vt_growth}
For any $T$, it holds that
\begin{align*}
    \ln\frac{\E \max_{b\in[B]} v_{T, b}+\epsilon}{v_0+\epsilon} &\leq 2\ln\left(1+ \frac{ \sum_{b=1}^B \sigma_b^2 + \norm{\nabla L(\vx_0)}_{\Phi}^2 + \sum_{b\in[B]} H_b^2 d_b \eta^2 T(T + \frac{1}{1-\beta_2})} {v_0+\epsilon}\right) + \ln{32}
\end{align*}
\end{lemma}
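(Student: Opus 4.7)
The plan is to reduce everything to a self-referential inequality of the form $C\le A+B\ln(C/c_0)$, where $C=\E\max_b v_{T,b}+\epsilon$ and $c_0=v_0+\epsilon$, and then unwind it using the elementary bound $\ln x\le \sqrt{x}$. The key delicacy is to prevent a factor of $B$ (the number of blocks) from sneaking into any step, since the target bound contains only $\sum_b\sigma_b^2$ and $\max_b H_b^2$.

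First, unroll the recursion for $v_{t,b}$. Since $\max_b\sum_t(\cdot)\le\sum_t\max_b(\cdot)$ for non-negative sequences, we get $\max_b v_{T,b}\le v_0+(1-\beta_2)\sum_{t=1}^T\beta_2^{T-t}\norm{\vg_t}_\Phi^2$ where $\norm{\vg_t}_\Phi^2=\max_b\norm{\vg_{t,(b)}}_2^2/d_b$. Next, I would decompose the stochastic gradient via the triangle inequality for $\norm{\cdot}_\Phi$ and apply the noise assumption: $\E\norm{\vg_t-\bar{\vg}_t}_\Phi^2\le \sum_b\E\norm{\vg_{t,(b)}-\bar{\vg}_{t,(b)}}_2^2/d_b\le\sum_b\sigma_b^2$ (the $\max_b\le\sum_b$ step only touches the scalars $\sigma_b^2$, giving the desired $\sum_b\sigma_b^2$ without any $B$ factor), hence $\E\norm{\vg_t}_\Phi^2\le 2\E\norm{\bar{\vg}_t}_\Phi^2+2\sum_b\sigma_b^2$.

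For $\norm{\bar{\vg}_t}_\Phi$, I use that \Cref{ass:general_norm_smoothness} (after dividing by $d_b\ge 1$ and taking $\max_b$) shows $\nabla L$ is $(\max_b H_b)$-Lipschitz with respect to $\norm{\cdot}_\Phi$; hence $\norm{\bar{\vg}_t}_\Phi^2\le 2\norm{\nabla L(\vx_0)}_\Phi^2+2(\max_b H_b)^2\norm{\vx_{t-1}-\vx_0}_\Phi^2$. To control the displacement, I would apply Cauchy–Schwarz in time on a single block: $\norm{\vx_{t-1,(b)}-\vx_{0,(b)}}_2^2\le (t-1)\eta^2\sum_{s=1}^{t-1}\norm{\vg_{s,(b)}}_2^2/(v_{s,b}+\epsilon)$, and then invoke \Cref{lem:momentum_ratio} on the scalar sequences $g_s^2=\norm{\vg_{s,(b)}}_2^2/d_b$ and $v_s=v_{s,b}+\epsilon$ to get $\norm{\vx_{t-1,(b)}-\vx_{0,(b)}}_2^2/d_b\le T\eta^2(T+\tfrac{1}{1-\beta_2}\ln\tfrac{v_{T,b}+\epsilon}{v_0+\epsilon})$. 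Taking $\max_b$ gives $\norm{\vx_{t-1}-\vx_0}_\Phi^2\le T\eta^2(T+\tfrac{1}{1-\beta_2}\ln\tfrac{\max_b v_{T,b}+\epsilon}{v_0+\epsilon})$.

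Chaining these bounds together, using $(1-\beta_2)\sum_t\beta_2^{T-t}\le 1$ and Jensen's inequality $\E\ln X\le \ln\E X$, I obtain
\begin{align*}
C\le A+B\ln(C/c_0),\quad A=c_0+2\sum_b\sigma_b^2+4\norm{\nabla L(\vx_0)}_\Phi^2+4(\max_b H_b)^2 T^2\eta^2,\quad B=\tfrac{4(\max_b H_b)^2 T\eta^2}{1-\beta_2}.
\end{align*}
Setting $S=\sum_b\sigma_b^2+\norm{\nabla L(\vx_0)}_\Phi^2+\max_b H_b^2\,\eta^2 T(T+\tfrac{1}{1-\beta_2})$, every summand of $A$ is at most $4S$ (so $A\le c_0+10S$) and $B\le 4S$. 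The main obstacle is the final resolution step: applying $\ln x\le \sqrt{x}$ for $x>0$ yields $C\le A+B\sqrt{C/c_0}$; treating this as a quadratic in $\sqrt{C}$ and using $(a+b)^2\le 2(a^2+b^2)$ gives $C\le 2A+B^2/c_0\le 2c_0+20S+16S^2/c_0\le 16(c_0+S)^2/c_0$. Therefore $\ln(C/c_0)\le \ln 16+2\ln(1+S/c_0)\le \ln 32+2\ln(1+S/c_0)$, which is exactly the claim. The delicate part is ensuring that the coefficient of $S^2/c_0$ on the right (namely $K=16$, absorbed into $\ln 32$) comes out at most $32$; any looseness in the preceding $\max_b$ manipulations or the use of Young-type inequalities in solving the self-referential bound would spoil this constant.
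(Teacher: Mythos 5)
Your proposal is correct, and it is a genuinely different (and somewhat cleaner) route than the paper's. The paper introduces the auxiliary quantity $G=\max_{1\le t\le T}\E\max_{b'}\norm{\vx_{t,(b')}-\vx_{0,(b')}}_2^2/d_{b'}$ and derives a self-referential inequality for $G'=\epsilon+C+4\max_b(H_b^2/d_b^2)G$, which it then resolves with $\ln x\le x/2$ and $x\ln x\le x^2$; the growth of $\E\max_b v_{T,b}$ is recovered only at the end. You instead set up a self-referential inequality directly for $C=\E\max_b v_{T,b}+\epsilon$. The trick that makes this possible, which the paper does not use, is bounding the partial sum $\sum_{s=1}^{t-1}\norm{\vg_{s,(b)}}_2^2/(d_b(v_{s,b}+\epsilon))$ by the full sum up to $T$ (legitimate because all terms are nonnegative), so that \Cref{lem:momentum_ratio} is invoked at the fixed endpoint $T$ and every displacement $\norm{\vx_{t-1}-\vx_0}_\Phi^2$ is controlled by the single quantity $\ln\bigl((\max_b v_{T,b}+\epsilon)/(v_0+\epsilon)\bigr)$ rather than by a time-indexed log term that would then have to be maximized over $t$. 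Your resolution of $C\le A+B\ln(C/c_0)$ via $\ln x\le\sqrt{x}$ and the quadratic in $\sqrt{C}$ is also different from the paper's $\ln x\le x/2$ plus $x\ln x\le x^2$ route, but both land on the same constant $\ln 32$. Finally, the paper carries $\max_b H_b^2/d_b^2$ through the argument and relaxes to $\max_b H_b^2$ only at the last step, whereas you relax $H_b/d_b\le H_b$ at the outset when establishing the $\Phi$-norm Lipschitz constant of $\nabla L$; since the lemma's statement only requires $\max_b H_b^2$, both are fine, though the paper's intermediate bound is slightly sharper. The constant accounting ($A\le c_0+10S$, $B\le 4S$, $2A+B^2/c_0\le 16(c_0+S)^2/c_0$) checks out.

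\end{document}
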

\begin{proof}[Proof of \Cref{lem:vt_growth}]
From the definition of $v_{t,b}$ and \Cref{ass:general_norm_noise}, we have that
    \begin{align*}
    &\E \max_{b\in[B]} v_{t,b}\\
    =&\E \max_{b\in[B]} \left[\beta_2^t v_{0,b} + (1-\beta_2)\sum_{s=1}^t \beta_2^{t-s} \norm{\vg_{s,(b)}}_2^2/d_b \right]\\
    \leq& \beta_2^t \norm{\vv_0}_\infty + (1-\beta_2) \E \max_{b\in[B]} \sum_{s=1}^t \beta_2^{t-s} \norm{\vg_{s,(b)}}_2^2/d_b \\
    =& \beta_2^t \norm{\vv_0}_\infty + (1-\beta_2) \E \max_{b\in[B]} \sum_{s=1}^t \beta_2^{t-s} \norm{\E [\vg_{s,(b)}| \vx_{s-1}] + \vg_{s,(b)}-\E [\vg_{s,(b)}| \vx_{s-1}]}_2^2/d_b \\
    \leq& \beta_2^t \norm{\vv_0}_\infty + (1-\beta_2) \E \max_{b\in[B]} \sum_{s=1}^t \beta_2^{t-s} \left[2\norm{\E [\vg_{s,(b)}| \vx_{s-1}]}_2^2 +  2\norm{\vg_{s,(b)}-\E [\vg_{s,(b)}| \vx_{s-1}]}_2^2\right] /d_b\\
    \leq& \beta_2^t \norm{\vv_0}_\infty+2(1-\beta_2) \E \sum_{b=1}^B \sum_{s=1}^t \beta_2^{t-s} \norm{\vg_{s,(b)}-\E [\vg_{s,(b)}| \vx_{s-1}]}_2^2/d_b + 2(1-\beta_2) \E \max_{b \in [B]} \sum_{s=1}^t \beta_2^{t-s} \norm{\nabla_{(b)} L(\vx_{s-1})}_2^2/d_b \\
    \leq& \beta_2^t \norm{\vv_0}_\infty +2 (1-\beta_2^t) \sum_{b=1}^B \sigma_b^2 + 2(1-\beta_2) \E \max_{b \in [B]} \sum_{s=1}^t \beta_2^{t-s}  \left[2\norm{\nabla_{(b)} L(\vx_0)}_2^2 + 2 \norm{\nabla_{(b)} L(\vx_{s-1}) -\nabla_{(b)} L(\vx_0) }_2^2 \right]/d_b \\
    \leq& \beta_2^t \norm{\vv_0}_\infty +2 (1-\beta_2^t) \sum_{b=1}^B \sigma_b^2 + 4(1-\beta_2^t) \max_{b \in [B]} \norm{\nabla_{(b)} L(\vx_0)}_2^2 /d_b
    + 4(1-\beta_2) \E  \sum_{s=1}^t \beta_2^{t-s} \norm{ \nabla L(\vx_{s-1}) - \nabla L(\vx_0) }_2^2\\
    \leq& \beta_2^t \norm{\vv_0}_\infty +2 (1-\beta_2^t) \sum_{b=1}^B \sigma_b^2 + 4(1-\beta_2^t) \max_{b \in [B]} \norm{\nabla_{(b)} L(\vx_0)}_2^2 /d_b
    + 4(1-\beta_2) \E  \sum_{s=1}^t \beta_2^{t-s} \sum_{b \in [B]} H_b^2 \norm{\vx_{s-1, (b)}-\vx_{0,(b)}}_2^2\\
    \leq &\beta_2^t \norm{\vv_0}_\infty +2 \sum_{b=1}^B \sigma_b^2 + 4 \max_{b \in [B]} \norm{\nabla_{(b)} L(\vx_0)}_2^2 /d_b
    + 4(1-\beta_2) \E  \sum_{s=1}^t \beta_2^{t-s} \sum_{b \in [B]} H_b^2 \norm{\vx_{s-1, (b)}-\vx_{0,(b)}}_2^2. 
\end{align*}
We define $C=v_0 + 2 \sum_{b=1}^B \sigma_b^2 + 4\max_{b \in [B]} \norm{\nabla_{(b)} L(\vx_0)}_2^2 /d_b$ for simplicity. 
From \Cref{lem:momentum_ratio} and Cauchy inequality, we know that 
\begin{align*}
    \frac{1}{d_{b}} \norm{\vx_{t, (b)}-\vx_{0,(b)}}_2^2 &= \frac{\eta^2}{d_{b}} \sum_{\Phi(j)=b} \abs{\sum_{s=1}^t \frac{g_{s,j}}{\sqrt{v_{s,b} +\epsilon}}}^2 \\
    &\leq \frac{\eta^2}{d_{b}} \sum_{\Phi(j)=b} t \sum_{s=1}^t\frac{g_{s,j}^2}{v_{s,b}+\epsilon}\\
    &= \eta^2 t \sum_{s=1}^t \frac{\sum_{\Phi(j)=b} g_{s,j}^2/d_{b}}{v_{s,b}+\epsilon}\\
    &\leq \eta^2 t \left(t + \frac{\beta_2}{1-\beta_2} \ln\frac{v_{t,b}+\epsilon}{v_{0,b}+\epsilon}  \right)\\
    &\leq \eta^2 t^2 +\eta^2 t \frac{\beta_2}{1-\beta_2} \ln\frac{\max_{b' \in [B]} v_{t,b'}+\epsilon}{v_0 +\epsilon}.
\end{align*}
So we can get that 
\begin{align*}
    &\E \sum_{b \in [B]} H_b^2 \norm{\vx_{t, (b)}-\vx_{0,(b)}}_2^2 \\
    \leq & \eta^2 t^2 \sum_{b \in [B]} H_b^2 d_b+\eta^2 t \frac{\beta_2}{1-\beta_2} \sum_{b \in [B]} H_b^2 d_b \E \ln\frac{\max_{b' \in [B]} v_{t,b'}+\epsilon}{v_0 +\epsilon}\\
    \leq & \eta^2 t^2 \sum_{b \in [B]} H_b^2 d_b+\eta^2 t \frac{\beta_2}{1-\beta_2} \sum_{b \in [B]} H_b^2 d_b \ln\frac{ \E \max_{b' \in [B]} v_{t,b'}+\epsilon}{v_0 +\epsilon}. 
\end{align*}
Define $G= \max_{1\leq t \leq T} \E \max_{b \in [B]} v_{t,b} +\epsilon$. There exists $t \leq T$ such that 
\begin{align*}
    G&=\E \max_{b \in [B]} v_{t,b} + \epsilon\\
    &\leq \epsilon+C+ 4(1-\beta_2)\E \sum_{s=1}^{t-s} \beta_2^{t-s}\sum_{b \in [B]} H_b^2 \norm{\vx_{s-1, (b)} - \vx_{0, (b)}}_2^2\\
    &\leq \epsilon+ C+ 4(1-\beta_2)\sum_{s=1}^{t-s} \beta_2^{t-s} \left( \eta^2 (s-1)^2 \sum_{b \in [B]} H_b^2 d_b+\eta^2 (s-1) \frac{\beta_2}{1-\beta_2} \sum_{b \in [B]} H_b^2 d_b\ln\frac{G}{v_0 +\epsilon}\right)\\
    &\leq \epsilon + C + 4 \eta^2 T^2 \sum_{b \in [B]} H_b^2 d_b + 4 \eta^2 T \frac{\beta_2}{1-\beta_2} \sum_{b \in [B]} H_b^2 d_b \ln\frac{G}{v_0 +\epsilon}\\
    &\leq \epsilon+ C + 4\eta^2 T^2 \sum_{b \in [B]} H_b^2 d_b\\
    & \phantom{\leq \epsilon }+ 4 \eta^2 T \frac{\beta_2}{1-\beta_2} \sum_{b \in [B]} H_b^2 d_b\left(\ln{\frac{G(1-\beta_2)}{4 \sum_{b \in [B]} H_b^2 d_b \eta^2 T \beta_2 }} + \ln{4 \sum_{b \in [B]} H_b^2 d_b \eta^2 T \frac{\beta_2}{(1-\beta_2)(v_0+\epsilon)} }\right)\\
    &\leq \epsilon+ C + 4 \eta^2 T^2 \sum_{b \in [B]} H_b^2 d_b+ \frac{G}{2} + 4 \eta^2 T \frac{\beta_2}{1-\beta_2} \sum_{b \in [B]} H_b^2 d_b\ln{4 \sum_{b \in [B]} H_b^2 d_b \eta^2 T \frac{\beta_2}{(1-\beta_2)(v_0+\epsilon)} }\\
    &\leq \epsilon+ C + 4 \eta^2 T^2 \sum_{b \in [B]} H_b^2 d_b + \frac{G}{2} + (v_0+\epsilon)\left(4 \sum_{b \in [B]} H_b^2 d_b \eta^2 T \frac{\beta_2}{(1-\beta_2)(v_0+\epsilon)}\right)^2.
\end{align*}
The last two inequalites come from $\ln{x} \leq \frac{x}{2}$ and $x\ln(x) \leq x^2$. Then we can get that 
\begin{align*}
G\leq 2\epsilon + 2C + 8 \sum_{b \in [B]} H_b^2 d_b \eta^2 T^2 + 32(v_0+\epsilon)\left( \sum_{b \in [B]} H_b^2 d_b \eta^2 T \frac{\beta_2}{(1-\beta_2)(v_0+\epsilon)}\right)^2 
\end{align*}
and
\begin{align*}
&\ln{\frac{\E \max_{b \in [B]} v_{T,b}+\epsilon}{v_0+\epsilon}}\\
   \leq &\ln{\frac{2\epsilon + 2C + 8\sum_{b \in [B]} H_b^2 d_b \eta^2 T^2 + 32(v_0+\epsilon)\left( \sum_{b \in [B]} H_b^2 d_b  \eta^2 T \frac{\beta_2}{(1-\beta_2)(v_0+\epsilon)}\right)^2}{v_{0}+\epsilon}}\\
   \leq& \ln{\left[ 2\left(1 +\frac{\sum_{b=1}^B \sigma_b^2 + 2\norm{\nabla L(\vx_0)}_\Phi^2}{v_0+\epsilon} + \frac{2\sum_{b \in [B]} H_b^ 2 d_b \eta^2 T^2}{v_0+\epsilon} + \frac{4\sum_{b \in [B]} H_b^2 d_b \eta^2 T \beta_2}{(1-\beta_2)(v_0+\epsilon)}\right)^2 \right]}\\
   \leq& 2\ln\left(1+ \frac{ \sum_{b=1}^B \sigma_b^2 + \norm{\nabla L(\vx_0)}_{\Phi}^2 + \sum_{b\in[B]} H_b^2 d_b
   \eta^2 T(T + \frac{1}{1-\beta_2})} {v_0+\epsilon}\right) + \ln{32}
\end{align*}
\end{proof}

Finally, we give the proof for \Cref{thm:main_general_norm}. When $\Phi(i)=i$, i.e., each parameter forms a single block, it becomes the proof for \Cref{thm:main}. 
\maingeneral*
\begin{proof}[Proof of \Cref{thm:main_general_norm}]
From \Cref{defi:smoothness_to_partition}, there exists a diagonal matrix $\mH$ that follows $\Phi$ and always dominates $\nabla^2 L(\vx)$ satisfying $H(L,\Phi) = \Tr(\mH)=\sum_{b \in [B]} H_b d_b$. In a single step, we can have that
    \begin{align*}
        L(\vx_{t})-L(\vx_{t-1})&\leq \nabla L(\vx_{t-1})^\top(\vx_{t}-\vx_{t-1}) + \frac{1}{2}\sum_{b=1}^B H_b \sum_{\Phi(i)=b} \left(x_{t,i}-x_{t-1,i}\right)^2\\
        &=-\eta \sum_{i=1}^d \frac{g_{t,i} \bar{g}_{t,i}}{\sqrt{v_{t,\Phi(i)}+\epsilon}} +\frac{1}{2}\eta^2 \sum_{b=1}^B H_b \frac{\norm{\vg_{t,(b)}}_2^2}{v_{t,b}+\epsilon}. 
    \end{align*}
    If we sum over $t$ from $1$ to $T$ and take expectation, we can get 
    \begin{align*}
        \E \left[L(\vx_{T})-L(\vx_0)\right]
    &\leq -\E\left[\eta \sum_{i=1}^d \sum_{t=1}^T \frac{g_{t,i}\bar{g}_{t,i}}{\sqrt{v_{t,\Phi(i)}+\epsilon}} \right]+\frac{1}{2}\eta^2 \E \left[\sum_{b=1}^B H_b \sum_{t=1}^T \frac{\norm{\vg_{t,(b)}}_2^2}{v_{t,b}+\epsilon}\right]\\
    &\leq -\E\left[\eta \sum_{i=1}^d \sum_{t=1}^T \frac{g_{t,i}\bar{g}_{t,i}}{\sqrt{v_{t,\Phi(i)}+\epsilon}} \right]+\frac{1}{2}\eta^2 \E \left[\sum_{b=1}^B H_b d_b \left(T+\frac{\beta_2}{1-\beta_2}\ln{\frac{v_{T,b}+\epsilon}{v_{0,b}+\epsilon}} \right)\right]. 
    \end{align*}
    The second inequality comes from applying \Cref{lem:momentum_ratio}. By \Cref{lem:first_order_approx_rmsprop}, we have that 
    \begin{align*}
        \frac{1}{T}\E \left[ \sum_{i=1}^d \sum_{t=1}^T \frac{\bar{g}_{t,i}^2}{\sqrt{\tilde{v}_{t,\Phi(i)}+\epsilon}}\right] &\leq \frac{2}{\eta T} \E \left[L(\vx_0)-L(\vx_T) \right] + \frac{\eta}{T} \E \left[\sum_{b=1}^B H_b d_b  \left( T+\frac{\beta_2}{1-\beta_2}\ln{\frac{v_{T,b}+\epsilon}{v_{0,b}+\epsilon}} \right) \right]\\
        &+\frac{1}{T}\sum_{b=1}^B d_b\sigma_b \sqrt{1-\beta_2}\left(T+\frac{\beta_2}{1-\beta_2} \E \ln{\frac{v_{T,b}+\epsilon }{v_{0,b}+\epsilon}}\right)\\
        &\leq \frac{2}{\eta T} \E \left[L(\vx_0)-L(\vx_T) \right] + \eta \sum_{b=1}^B H_b d_b + \sqrt{1-\beta_2} \sum_{b=1}^B d_b \sigma_b\\
        &+\frac{\beta_2}{T(1-\beta_2)} \left( \eta \sum_{b=1}^B H_b d_b + \sqrt{1-\beta_2} \sum_{b=1}^B \sigma_b\right)\max_{b \in [B]} \E \ln{\frac{v_{T,b}+\epsilon}{v_{0,b}+\epsilon}} \\
        &\leq \frac{2}{\eta T} \E \left[L(\vx_0)-L(\vx_T) \right] + \eta \sum_{b=1}^B H_b d_b+ \sqrt{1-\beta_2} \sum_{b=1}^B d_b \sigma_b\\
        &+\frac{\beta_2}{T(1-\beta_2)} \left( \eta \sum_{b=1}^B H_b d_b+ \sqrt{1-\beta_2} \sum_{b=1}^B d_b \sigma_b \right)\ln{ \frac{ \E \max_{b \in [B]} v_{T,b}+\epsilon}{v_{0}+\epsilon}} 
    \end{align*}
From \Cref{lem:vt_growth}, we can define 
\begin{align*}
    E&=\frac{2}{\eta T} \E \left[L(\vx_0)-L(\vx_T) \right] + \eta \sum_{b=1}^B H_b d_b + \sqrt{1-\beta_2} \sum_{b=1}^B d_b \sigma_b
        +\frac{\beta_2}{T(1-\beta_2)} \left( \eta \sum_{b=1}^B H_b d_b+ \sqrt{1-\beta_2} \sum_{b=1}^B d_b \sigma_b\right)F,
\end{align*}
with 
\begin{align*}
    F=2\ln\left(1+ \frac{ \sum_{b=1}^B \sigma_b^2 + \norm{\nabla L(\vx_0)}_{\Phi}^2 + \sum_{b\in[B]} H_b^2 d_b \eta^2 T(T + \frac{1}{1-\beta_2})} {v_0+\epsilon}\right) + \ln{32}. 
\end{align*}
Then it holds that
\begin{align*}
    \frac{1}{T}\E \left[ \sum_{i=1}^d \sum_{t=1}^T \frac{\bar{g}_{t,i}^2}{\sqrt{\tilde{v}_{t,\Phi(i)}+\epsilon}}\right] \leq E. 
\end{align*}
    By \Cref{lem:denominator} and Cauchy inequality, we have that 
    \begin{align*}
        \frac{2}{T}\E \sum_{t=\frac{T}{2}+1}^T \sum_{b=1}^B \sqrt{d_b} \norm{\bar{\vg}_{t,(b)} }_2 &\leq \left(\frac{2}{T} \E \sum_{t=\frac{T}{2}+1}^T \sum_{b=1}^B  \frac{\norm{\bar{\vg}_{t,(b)}}_2^2}{\sqrt{\tilde{v}_{t,b}+\epsilon}}\right)^\frac{1}{2} \left(\frac{2}{T} \E \sum_{t=\frac{T}{2}+1}^T \sum_{b=1}^B d_b \sqrt{\tilde{v}_{t,b}+\epsilon} \right)^\frac{1}{2} \\
        &\leq \left(\frac{2}{T} \E \sum_{t=\frac{T}{2}+1}^T \sum_{b=1}^B \sum_{\Phi(i)=b} \frac{\bar{g}_{t,i}^2}{\sqrt{\tilde{v}_{t,b}+\epsilon}}\right)^\frac{1}{2} \left(\frac{2}{T} \E \sum_{t=\frac{T}{2}+1}^T \sum_{b=1}^B d_b \sqrt{\tilde{v}_{t,b}+\epsilon} \right)^\frac{1}{2}\\
        &\leq \sqrt{2 E} \left(4 E + \frac{4 \beta_2^\frac{T}{4}}{T(1-\beta_2)} d \sqrt{v_{0}} + \sum_{b=1}^B d_b \sigma_b + d \sqrt{\epsilon} \right)^\frac{1}{2}\\
        &\leq 2\sqrt{2} E + \sqrt{2}\sqrt{E} \sqrt{\frac{4 \beta_2^\frac{T}{4}}{T(1-\beta_2)} d \sqrt{v_{0}} + \sum_{b=1}^B d_b \sigma_b + d \sqrt{\epsilon}}.
    \end{align*}
This completes the proof.
\end{proof}

\section{Experiment Details}\label{sec:exp_details}

\subsection{\texorpdfstring{\adam}{Adam} on a rotated loss}\label{subsec:generating_random_matrix}
A key difficulty in implementing \rotatedadam arises from applying an orthogonal rotation on the parameters before calculating the loss. It is computationally infeasible to apply a 125M $\times$ 125M orthogonal matrix on the 125M-sized parameter vector. To avoid such computation, we design a new orthogonal transformer to rotate the parameters of the network. In what follows, we elaborate on this rotation.

\paragraph{\texorpdfstring{\randperm.}{RandPerm}} Given a vector $v$ of size $d$, we can orthogonally rotate it by repeatedly applying these consecutive operations: 1. Permute the entries of the vector according to a randomly chosen permutation $\pi \in \mathbb{S}_d$. 2. Reshape the permuted vector into a 3D tensor of size $[s_1, s_2, s_3]$, apply a fixed orthogonal rotation of size $s \times s$ on each side of the tensor and then reshape it back to a vector of size $d$.

This operation performs an orthogonal transformation $\mathcal{R}$ on the input vector $v$. We can chain multiple operations of this kind and construct $\text{\randperm}^k$, where $k$ is a positive number indicating the number of consecutive \randperm~s applied. Building upon this rotation, we train GPT-2 125M with \adam on $L \circ \text{\randperm}^2$ to analyze our hypothesis regarding the $\ell_\infty$ geometry of the loss landscape and to verify that \adam will indeed suffer from the induced orthogonal equivariance. \Cref{fig:adam_results} confirms our findings, as the performance of \rotatedadam with $\text{\randperm}^2$ is significantly worse than \adam. This suggests that \adam is highly sensitive to the rotation and adaptivity alone can't explain its advantage.

\subsection{Computation of matrix norms}\label{subsec:exp_matrix_norm_estimation}
As mentioned in \Cref{sec:general_theory}, it is computationally infeasible to get the full Hessian matrix and directly compute norms of it. Instead we leverage Hessian vector product function in Jax to probe the Hessian matrix. We use Lanczos algorithm to estimate spectral norm. 

We propose \Cref{alg:11norm} to estimate the sum of absolute values for each row and sum over all the rows to get $(1,1)$-norm of Hessian matrix. We first subsample a fixed batch of training data for estimating the $(1,1)$-norm of Hessian matrix. The high-level idea is to compute the matrix vector products between Hessian of training loss on this batch and a sequence of random Cauchy vectors. Then we take the $\ell_1$ norm of the coordinate-wise median of the resulting sequence of Hessian vector products. Because the Cauchy distribution is 1-stable, the resulting product is also a vector of Cauchy random variables, and the magnitude of each element equals to $\ell_1$ norm of the corresponding row of the Hessian. Thus with infinitely many samples, the $\ell_1$ norm of the coordinate-wise median converges almost surely to the $(1,1)$-norm of the Hessian. 

We choose $n=200$ for the measurement experiments on GPT-2 and $n=50$ for the measurement experiments on ResNet18. 
We also prove a non-asymptotic high-probability multiplicative bound for the estimation error which depends mildly on the dimension $d$ in \Cref{thm:measure_11_norm}. 

\algrenewcommand\algorithmicrequire{\textbf{Input:}}
\begin{algorithm}
\caption{Estimation of $(1,1)$-Norm of Hessian, $\nabla^2L(\vx)$}
\begin{algorithmic}[1]\label{alg:11norm}
\Require Number of Cauchy vectors $n$, parameter $\vx \in \mathbb{R}^d$, loss $L$

\For{$i = 1$ to $n$}
     \State Sample a independent Cauchy vector $v^{(i)} \in \R^d$ where $v^{(i)}_j \overset{\textup{i.i.d.}}{\sim} \text{Cauchy}(0,1)$ for $j=1,\ldots,d$.  
    \State $\mathbf{H}_{:,i}  \gets \nabla^2 L (\vx) \cdot v^{(i)}$ \hfill (Using hessian-vector product) 
\EndFor
\State \Return $\sum_{j=1}^d \mathrm{median}(\abs{\mathbf{H}_{j,:}})$
\end{algorithmic}
\end{algorithm}

    First we prove that median of random variables following uniform distribution is sub-Gaussian. 
    \begin{lemma}\label{lem:median_concentraion}
        Suppose $Z_1, \cdots, Z_n \overset{\mathrm{iid}}{\sim} \textup{Unif}([0,1])$. Then $P(\abs{\textup{median}(Z_1, \cdots, Z_n) -\frac{1}{2}} \geq \epsilon) \leq 2 \exp{\left(-2n\epsilon^2 \right)}$ for any $\epsilon \geq 0$. 
    \end{lemma}
    \begin{proof}[Proof of \Cref{lem:median_concentraion}]
        Define $S=\sum_{i=1}^n \mathbf{1}_{Z_i \leq \frac{1}{2}-\epsilon}$. Since $\mathbf{1}_{Z_i \leq \frac{1}{2}-\epsilon}$ follows i.i.d. Bernoulli distribution with $p_1=\frac{1}{2}-\epsilon$, $S \sim \textup{Bin}(n, p_1)$. 

$M_n=\textup{median}(Z_1, \cdots, Z_n) \leq \frac{1}{2}-\epsilon$ if and only if at least $\frac{n+1}{2}$ $Z_i$'s are smaller than $\frac{1}{2}-\epsilon$. And we can apply Hoeffding's inequality on $S$ and get that
\begin{align*}
    P(M_n \leq \frac{1}{2}-\epsilon)&=P(S \geq \frac{n+1}{2}) \leq P(S \geq \frac{n}{2}) \\
    &=P(S-np_1 \geq \frac{n}{2}-np_1)\\
    &\leq \exp{\left(-\frac{2 (\frac{n}{2}-np_1)^2 }{n} \right)}\\
    &= \exp{\left(-2n\epsilon^2 \right)}. 
\end{align*}
We can know $P(M_n \geq \frac{1}{2}+\epsilon) \leq \exp(-2n\epsilon^2)$ from the symmetry of distribution. 
    \end{proof}

\normestimate*
\begin{proof}[Proof of \Cref{thm:measure_11_norm}]
    Define $a_j = \sum_{k=1}^d \abs{\nabla^2 L(\vx)_{j,k}}$ for $j \in [d]$. When $v_j^{(i)} \overset{\textup{i.i.d.}}{\sim} \textup{Cauchy}(0,1)$, it holds that $H_{j,i}=\nabla^2 L(\vx)_{j,:} \cdot v^{(i)}$ follows $\textup{Cauchy}(0, a_j)$ because Cauchy distribution is $1$-stable. And $H_{j,1}, \cdots, H_{j,n}$ are independent. Then it suffices to show that 
    \begin{align}
  P\left(\abs{\sum_{j=1}^d  a_j \textup{median}(\abs{Y_{j,1}}, \cdots, \abs{Y_{j,n}}) - \sum_{j=1}^d a_j}\geq \epsilon \sum_{j=1}^d a_j \right)\leq 2d \exp(-\frac{n\Delta^2}{2}) +  2\exp{\left(-\frac{2n\epsilon^2 \cos^4((1+\Delta)\frac{\pi}{4})}{\pi^2} \right)}.       
    \end{align}
    for any $\{Y_{j,k}\}$ such that $Y_{j,1}, \cdots, Y_{j,n} \overset{\mathrm{iid}}{\sim} \textup{Cauchy}(0,1)$ for any $j \in [d]$. Furthermore, $(\abs{Y_{j,1}}, \cdots, \abs{Y_{j,n}})\overset{d}{=}(\tan(X_{j,1}), \cdots, \tan(X_{j,n}) )$ for $X_{j,1}, \cdots, X_{j,n} \overset{\mathrm{iid}}{\sim} \textup{Unif}(0, \frac{\pi}{2})$. So we only need to show that 
    \begin{align}\label{eq:concentration_goal}
  P\left(\abs{\sum_{j=1}^d  a_j \textup{median}(\tan(X_{j,1}), \cdots, \tan(X_{j,n})) - \sum_{j=1}^d a_j}\geq \epsilon \sum_{j=1}^d a_j \right)\leq 2d \exp(-\frac{n\Delta^2}{2}) +  2\exp{\left(-\frac{2n\epsilon^2 \cos^4((1+\Delta)\frac{\pi}{4})}{\pi^2} \right)}.       
    \end{align}
    for any $\{ X_{j,k}\}$ such that $X_{j,1}, \cdots, X_{j,n} \overset{\mathrm{iid}}{\sim} \textup{Unif}(0, \frac{\pi}{2})$ for any $j \in [d]$. 

    Fix $\Delta \in (0,1)$. We define 
    \[
f(x) =
\begin{cases}
  \tan(x) & \text{if } \abs{x-\frac{\pi}{4}} \leq \Delta \frac{\pi}{4}, \\
  \frac{1}{\cos^2((1-\Delta) \frac{\pi}{4})} (x-(1-\Delta)\frac{\pi}{4}) + \tan((1-\Delta)\frac{\pi}{4}) & \text{if } 0<x <(1-\Delta) \frac{\pi}{4}, \\
  \frac{1}{\cos^2((1+\Delta) \frac{\pi}{4})} (x-(1+\Delta)\frac{\pi}{4}) + \tan((1+\Delta)\frac{\pi}{4}) & \text{if } (1+\Delta) \frac{\pi}{4} <x < \frac{\pi}{2}.
\end{cases}
\]
Then $f(x)$ is a differentiable function on $(0, \frac{\pi}{2})$ that equals to $\tan(x)$ in the middle and is linear on both ends. 

We can decompose \Cref{eq:concentration_goal} into 
\begin{align*}
    &P\left(\abs{\sum_{j=1}^d  a_j \textup{median}(\tan(X_{j,1}), \cdots, \tan(X_{j,n})) - \sum_{j=1}^d a_j}> \epsilon \sum_{j=1}^d a_j \right)\\
    =&P\left(\abs{\sum_{j=1}^d  a_j \tan(\textup{median}(X_{j,1}, \cdots, X_{j,n})) - \sum_{j=1}^d a_j}> \epsilon \sum_{j=1}^d a_j \right)\\
    \leq& P\left(\abs{\sum_{j=1}^d  a_j \tan(\textup{median}(X_{j,1}, \cdots, X_{j,n})) - \sum_{j=1}^d a_j f(\textup{median}(X_{j,1}, \cdots, X_{j,n}))}> 0\right)\\
    +& P\left(\abs{\sum_{j=1}^d  a_j f(\textup{median}(X_{j,1}, \cdots, X_{j,n})) - \sum_{j=1}^d a_j }> \epsilon \sum_{j=1}^d a_j \right). 
\end{align*}
For the first part, we have that 
\begin{align*}
  & P\left(\abs{\sum_{j=1}^d  a_j \tan(\textup{median}(X_{j,1}, \cdots, X_{j,n})) - \sum_{j=1}^d a_j f(\textup{median}(X_{j,1}, \cdots, X_{j,n}))}> 0 \right)\\
  \leq & \sum_{j=1}^d P\left(\abs{\tan(\textup{median}(X_{j,1}, \cdots, X_{j,n})) - f(\textup{median}(X_{j,1}, \cdots, X_{j,n}))}> 0 \right)\\
  \leq & \sum_{j=1}^d P\left(\abs{\textup{median}(X_{j,1}, \cdots, X_{j,n}) - \frac{\pi}{4}} > \Delta \frac{\pi}{4} \right)\\
  \leq & 2d \exp{(-\frac{n\Delta^2}{2})} 
\end{align*}
where we apply \Cref{lem:median_concentraion} in the last step. 

For the second part, we first know that $\textup{median}(X_{j,1}, \cdots, X_{j,n})-\frac{\pi}{4}$ is sub-Gaussian variable with $\sigma^2 = \frac{\pi^2}{16n}$ from \Cref{lem:median_concentraion}. Then $f(\textup{median}(X_{j,1}, \cdots, X_{j,n})) - \E f(\textup{median}(X_{j,1}, \cdots, X_{j,n}))$ is sub-Gaussian variable with $\sigma^2 = \frac{1}{\cos^4((1+\Delta)\frac{\pi}{4})} \frac{\pi^2}{16n}$ because $f'(x) \leq \frac{1}{\cos^2((1+\Delta)\frac{\pi}{4})}$. And $\sum_{j=1}^d a_j f(\textup{median}(X_{j,1}, \cdots, X_{j,n})) - \sum_{j=1}^d a_j \E f(\textup{median}(X_{j,1}, \cdots, X_{j,n}))$ is sub-Gaussian variable with $\sigma^2 = \frac{1}{\cos^4((1+\Delta)\frac{\pi}{4})} \frac{\pi^2}{16n} \left(\sum_{j=1}^d a_j\right)^2 $. 

When $n \geq \frac{\pi^3}{2\epsilon^2 \cos^4((1+\Delta)\frac{\pi}{4})}$, it holds that
\begin{align*}
    \abs{\E f(\textup{median}(X_{j,1}, \cdots, X_{j,n})) - 1}&\leq \E \abs{f(\textup{median}(X_{j,1}, \cdots, X_{j,n})) - f(\frac{\pi}{4})}\\
    &\leq \max_{x} f'(x)\E \abs{\textup{median}(X_{j,1}, \cdots, X_{j,n}) -\frac{\pi}{4}}\\
    &\leq \frac{1}{\cos^2((1+\Delta)\frac{\pi}{4})} \sqrt{2\pi} \frac{\pi}{4 \sqrt{n}} \leq \frac{\epsilon}{2}
\end{align*}
and 
\begin{align*}
 &P\left(\abs{\sum_{j=1}^d  a_j f(\textup{median}(X_{j,1}, \cdots, X_{j,n})) - \sum_{j=1}^d a_j }> \epsilon \sum_{j=1}^d a_j \right)\\
 \leq& P\left(\abs{\sum_{j=1}^d  a_j f(\textup{median}(X_{j,1}, \cdots, X_{j,n})) - \sum_{j=1}^d a_j \E f(\textup{median}(X_{j,1}, \cdots, X_{j,n}))}> \frac{\epsilon}{2} \sum_{j=1}^d a_j \right)\\
 +& \sum_{j=1}^d P\left(\abs{\E f(\textup{median}(X_{j,1}, \cdots, X_{j,n})) - 1}>\frac{\epsilon}{2} \right)\\
 \leq &2\exp{\left(-\frac{2n\epsilon^2 \cos^4((1+\Delta)\frac{\pi}{4})}{\pi^2} \right)}. 
\end{align*}
Combining these two parts, we can get that 
\begin{align*}
&P\left(\abs{\sum_{j=1}^d  a_j \textup{median}(\tan(X_{j,1}), \cdots, \tan(X_{j,n})) - \sum_{j=1}^d a_j}\geq \epsilon \sum_{j=1}^d a_j \right)\\
\leq &2d \exp(-\frac{n\Delta^2}{2}) +  2\exp{\left(-\frac{2n\epsilon^2 \cos^4((1+\Delta)\frac{\pi}{4})}{\pi^2} \right)}
\end{align*}
for $n \geq \frac{\pi^3}{2\epsilon^2 \cos^4((1+\Delta)\frac{\pi}{4})}$. 
    \end{proof}

\end{document}